%% file: example_paper.tex
\documentclass{article}

\usepackage{microtype}
\usepackage{graphicx}
\usepackage{subcaption}
\usepackage{booktabs} 

\usepackage{hyperref}

\usepackage[linesnumbered,ruled,vlined,algo2e]{algorithm2e}

\usepackage[preprint]{icml2026}

\usepackage{amsmath}
\usepackage{amssymb}
\usepackage{mathtools}
\usepackage{amsthm}

\usepackage{bbold}
\usepackage{empheq}

\usepackage{stfloats}

\usepackage[capitalize,noabbrev]{cleveref}

\theoremstyle{plain}
\newtheorem{problem}{Problem}
\newtheorem{constraint}{Constraint}
\newtheorem{condition}{Condition}
\newtheorem{Solution}{Solution}

\newtheorem{theorem}{Theorem}[section]

\newtheorem{lemma}[theorem]{Lemma}

\theoremstyle{definition}
\newtheorem{definition}[theorem]{Definition}

\theoremstyle{remark}

\usepackage[textsize=tiny]{todonotes}

\usepackage{romannum}
\usepackage{booktabs}
\usepackage{multirow}
\usepackage{pifont}
\usepackage{makecell}
\usepackage{colortbl}
\usepackage{arydshln}
\usepackage{wrapfig}
\usepackage{enumitem}
\usepackage{tcolorbox}

\usepackage{booktabs}
\usepackage{tabularx}

\newtcolorbox{summarybox}{
    colback=gray!10,
    colframe=gray!60,
    arc=2mm,           
    boxrule=0.5pt,     
    boxsep=1pt,        
    left=2pt,          
    right=2pt,         
    top=2pt,           
    bottom=2pt,        
    before skip=4pt,   
    after skip=4pt     
}

\usepackage{tikz}
\usetikzlibrary{shapes.geometric, arrows.meta, positioning, fit, backgrounds, calc, shadows}

\definecolor{softblue}{RGB}{230, 240, 255}
\definecolor{softgreen}{RGB}{230, 255, 235}
\definecolor{softyellow}{RGB}{255, 250, 220}
\definecolor{softgray}{RGB}{250, 250, 250}
\definecolor{bordergray}{RGB}{100, 100, 100}
\definecolor{accentblue}{RGB}{0, 100, 200}
\definecolor{accentgreen}{RGB}{0, 150, 80}

\tikzset{
    basicbox/.style={
        rectangle,
        rounded corners=3pt, 
        draw=bordergray,
        thick,
        align=center,
        font=\sffamily\footnotesize, 
        drop shadow={opacity=0.15},
        fill=white,
        inner sep=4pt 
    },
    manager/.style={
        basicbox,
        minimum height=1cm,
        minimum width=2.2cm
    },
    container/.style={
        rectangle,
        rounded corners=6pt,
        draw=bordergray,
        dashed,
        thick,
        inner sep=8pt, 
        fill=softgray
    },
    modelrunner/.style={
        basicbox,
        fill=softblue,
        minimum height=1.5cm,
        minimum width=3.5cm
    },
    cache/.style={
        basicbox,
        fill=softyellow,
        minimum height=1.5cm,
        minimum width=3.5cm
    },
    attention/.style={
        basicbox,
        fill=softgreen,
        minimum height=1.2cm,
        minimum width=8cm
    },
    arrow/.style={
        -{Stealth[scale=1.0]}, 
        thick,
        draw=bordergray
    }
}

\newcommand{\cmark}{\ding{51}}
\newcommand{\xmark}{\ding{55}}

\definecolor{main-blue}{rgb}{0, 0, 0.5}
\definecolor{sudisblue}{RGB}{93,177,252}
\definecolor{sudisred}{RGB}{252,106,108}

\hypersetup{
    breaklinks, 
    citecolor=main-blue, 
    colorlinks=true, 
    linkcolor=main-blue,
    urlcolor=sudisblue,
}

\newcommand{\LSE}{\operatorname{{LSE}}}
\newcommand{\SE}{\operatorname{{SE}}}

\newcommand{\hardkv}{\textsc{Hard-KV}}
\newcommand{\nanovllm}{\textsc{Nano-vLLM} }

\newcommand{\hardinfer}{\textsc{Hard-Infer}}
\newcommand{\vanilla}{\textsc{Vanilla}}
\newcommand{\snapkv}{\textsc{SnapKV}}
\newcommand{\rkv}{\textsc{R-KV}}

\icmltitlerunning{\hardkv{}: Head-Adaptive Regularization for Decoding-time KV Compression}

\begin{document}

\pagenumbering{arabic}

\twocolumn[
  \icmltitle{\hardkv{}: Head-Adaptive Regularization for Decoding-time KV Compression}



  \icmlsetsymbol{equal}{*}

  \begin{icmlauthorlist}
    \icmlauthor{Yuxuan Yang}{zju}
    \icmlauthor{Feiyang Ren}{zju}
    \icmlauthor{Bowen Zeng}{zju}
    \icmlauthor{Dalin Zhang}{hdu}
    \icmlauthor{Jinpeng Chen}{bupt}
    \icmlauthor{Gang Chen}{zju}
    \icmlauthor{Huan Li}{zju}
  \end{icmlauthorlist}

  \icmlaffiliation{zju}{The State Key Laboratory of Blockchain and Data Security, Zhejiang University}
  \icmlaffiliation{hdu}{Space Information Research Institute, Hangzhou Dianzi University}
  \icmlaffiliation{bupt}{School of Computer Science (National Pilot Software Engineering School), BUPT}

  \icmlcorrespondingauthor{Huan Li}{lihuan.cs@zju.edu.cn}

  \icmlkeywords{Machine Learning, ICML}

  \vskip 0.3in
]



\printAffiliationsAndNotice{}  

\begin{abstract}
Long-context LLM inference faces a fundamental conflict: head-adaptive compression algorithms (e.g., Top-$p$ nucleus sampling) offer superior accuracy by dynamically fluctuating memory budgets, yet modern inference engines (e.g., vLLM) demand rigid, static memory patterns to leverage CUDA Graphs and PagedAttention. 
We resolve this ``Static-Dynamic'' mismatch with \hardkv{}, a unified framework that that bridges dynamic selection with rigid system constraints. 
\hardkv{} introduces a Cascade Cache hierarchy, managing the token lifecycle across dense, sparse, and condensed tiers. 
Crucially, we propose a Logits Calibration mechanism that normalizes diverse importance metrics into a unified probability space, enabling consistent Top-$p$ budgeting across heterogeneous heads. 
To bridge the efficiency gap, we offer a system-level solution, which rewrites fragmented, dynamic indices into contiguous physical layouts compatible with high-performance inference engine. 
Extensive experiments on math-reasoning benchmarks (AIME, U-Math) verify that \hardkv{} achieves up to 2$\times$ throughput improvement over static baselines while maintaining high-fidelity generation in 10k+ token scenarios.
Code is available at \url{https://github.com/SuDIS-ZJU/HARDInfer}.
\end{abstract}

\section{Introduction}
\label{sec:intro}

The deployment of Large Language Models (LLMs) in long-context scenarios is increasingly constrained by the linear growth of Key-Value (KV) cache memory. As sequence lengths extend into the tens of thousands, the memory footprint of the KV cache frequently exceeds that of the model weights, creating a critical bottleneck for throughput and latency. To mitigate this, decoding-time compression methods~\cite{gkv, scope, rocketkv, fastkv, lethe, lazyeviction} have emerged, dynamically reducing memory usage by retaining only the most critical tokens during generation.

A central tension in this domain lies between algorithmic accuracy and system efficiency. Recent research~\cite{headkv, duoattention, razorattention} indicates that ``head-adaptive'' strategies—which allow cache size to fluctuate based on the specific information density of each attention head—offer superior performance. This accuracy is crucial for extending compression to decoding time, particularly for tasks like mathematical reasoning where errors accumulate over long generation trajectories~\citep{pitfalls}. \citet{twilight} argue that the superiority of head-adaptive budget allocation stems from the intrinsic dynamism of attention, demonstrating that simple-yet-effective Top-$p$ (nucleus sampling~\citep{nucleus}) is sufficient for selection. Similarly, other works~\citep{D2O,criticalkV,defensivekv} propose new criteria for performance-prioritized compression that rely on dynamic allocation.

However, this dynamic nature is fundamentally incompatible with modern inference engines. The uneven, unpredictable, and fragmented memory indices generated by head-adaptive selection conflict with standard optimizations such as PagedAttention~\citep{pagedattention}, continuous batching~\citep{orca}, and CUDA Graphs~\citep{cudagraph}, all of which rely on \emph{static} or \emph{regularized} memory block structures.

\smallskip
\begin{summarybox}
\textbf{Challenge:} We characterize the core conflict as the gap between \emph{Dynamic, Context-Dependent Attention Patterns} and \emph{Static, Request-Independent System Designs}. 
\end{summarybox}
\smallskip

To bridge this gap, we propose \hardkv{} (\textbf{H}ead-\textbf{A}daptive \textbf{R}egularization for \textbf{D}ecoding-time \textbf{KV} Compression), a framework designed to enable dynamic cache allocation within the constraints of modern inference architectures.


To bridge this gap, \hardkv{} introduces a Cascade Cache architecture that organizes KV memory into three sequence-level tiers. 
We further propose a Logits Calibration procedure that converts Top-$k$ selections into Top-$p$-aligned counterparts, allowing diverse KV selection strategies to share a unified head-adaptive budget allocation mechanism. 
This calibration produces cache patterns that better follow the raw attention distribution, making them naturally compatible with the three-tier cache. 
Finally, \hardkv{} regularizes dynamic head-wise indices through a customized inference system that combines memory rewriting and sparse loading. 
This design reconciles adaptive token selection with the structural constraints of modern inference engines, preserving algorithmic accuracy while improving system efficiency. 
On math-reasoning benchmarks, including AIME and U-MATH, \hardkv{} achieves up to a 1.5$\times$ performance gain and a 2$\times$ throughput improvement.


To summarize, our work unifies head-adaptive KV cache management into a robust system that is both algorithmically performant and system-efficient We summarize our technical contributions as follows:
\begin{enumerate}

\item \textbf{Unified Cascade Cache Architecture:} We introduce a multi-tier memory hierarchy that standardizes the token lifecycle. This structure accommodates various KV selection methods, unifying diverse compression techniques into a single pipeline. (Section~\ref{ssec:framework_overview})

\item \textbf{Logits Calibration for Effective Top-$p$ Budgeting:} We propose a calibration method connecting Top-$k$ and Top-$p$ selection. By mapping distribution alterations onto the raw attention distribution, we enable Top-$p$ budgeting to dynamically adapt to attention patterns rather than relying on fixed limits. (Section~\ref{ssec:calibration})

\item \textbf{System-Aware Index Regularization:} We provide a method to integrate dynamic compression into rigid system architectures. Our regularization for head-adaptive indices resolves compatibility issues with CUDA Graphs, ensuring high system efficiency with minimal metadata overhead.(Section~\ref{ssec:index_management})

\end{enumerate}

\input{related}

\input{methodology}

\section{Experiment}
\label{sec:exp}
\begin{figure}[h]
    \centering
    \includegraphics[width=\linewidth]{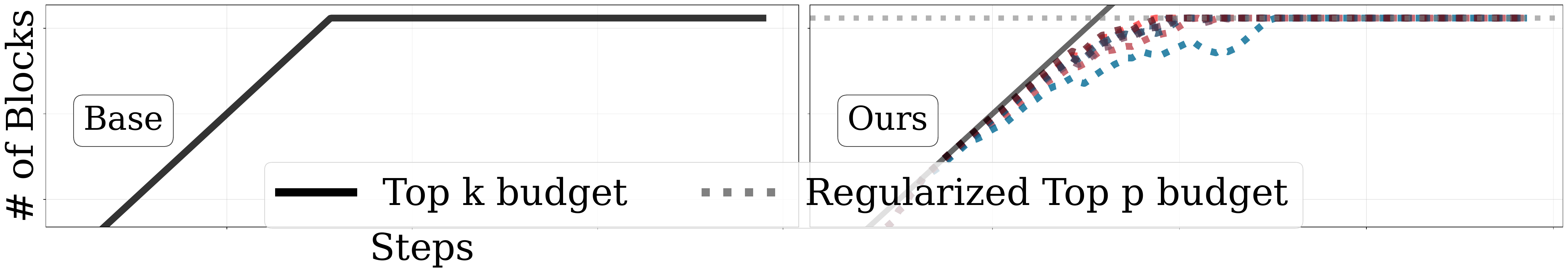}
    \includegraphics[width=\linewidth]{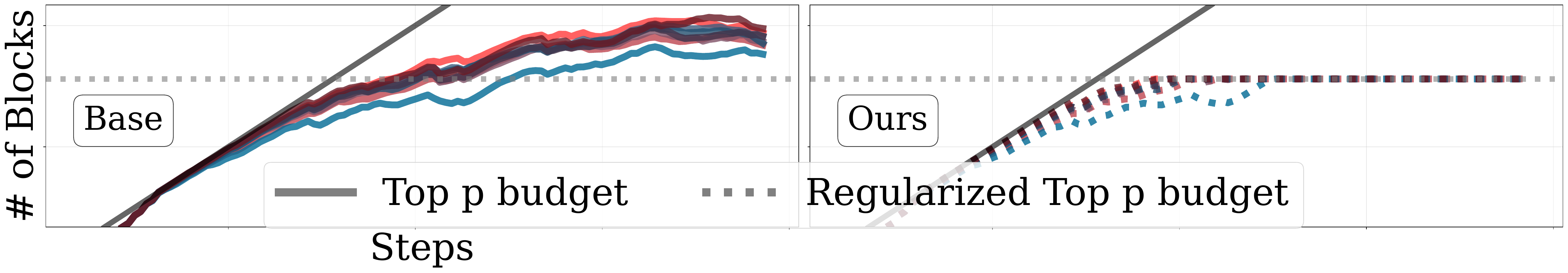}
    \caption{Illustrations for experiment settings. \textbf{Upper}: Fixed Top-$k$ budget. Base: always keep $k$ blocks; Ours: $k$ as the maximum block usage for Top-$p$ pruned each head. \textbf{Lower}: Dynamic Top-$p$ budget. Base: keep tokens by Top-$p$ metrics; Ours: constrain the growth of the block usage.}
    \label{fig:exp_set}
\end{figure}

\begin{table}[h]
    \centering
    \caption{\textbf{Fixed Budget Performance (Top-$k$).} Comparison of baselines vs \hardkv{} across budget sizes \{1024, 2048, 4096\}. Top-$p$ budget for our methods is 0.90. \textit{Full Attn} represents performance without budget constraints.}
    \label{tab:fixed_k}
    \setlength{\tabcolsep}{3pt} 
    \renewcommand{\arraystretch}{0.85} 
    \resizebox{\linewidth}{!}{
    \begin{tabular}{l l cc cc cc}
        \toprule
        \multirow{2.5}{*}{\textbf{Method}} & \multirow{2.5}{*}{\textbf{Config}} & \multicolumn{6}{c}{\textbf{Budget Size}} \\
        \cmidrule(lr){3-8}
         & & \multicolumn{2}{c}{\textbf{1024}} & \multicolumn{2}{c}{\textbf{2048}} & \multicolumn{2}{c}{\textbf{4096}} \\
        \cmidrule(lr){3-4} \cmidrule(lr){5-6} \cmidrule(lr){7-8}
         & & \textbf{Acc} & \textbf{Len} & \textbf{Acc} & \textbf{Len} & \textbf{Acc} & \textbf{Len} \\
        \midrule

        \multicolumn{8}{c}{\textit{\textbf{Dataset: AIME24}}} \\
        \midrule
        \multicolumn{2}{l}{\textit{Full Attention (Ref.)}} & \multicolumn{6}{c}{\textbf{Acc}: 79.9\% \quad \textbf{Len}: 13735.7} \\
        \addlinespace[0.3em]
        \multirow{2}{*}{Vanilla} & Base & 10.0\% & 32161.0 & 33.1\% & 23530.6 & 29.7\% & 31843.5 \\
        & \cellcolor{gray!15}\textbf{+ Ours} & \cellcolor{gray!15}\textbf{24.0\%} & \cellcolor{gray!15}\textbf{24158.1} & \cellcolor{gray!15}\textbf{37.3\%} & \cellcolor{gray!15}\textbf{24158.1} & \cellcolor{gray!15}\textbf{33.5\%} & \cellcolor{gray!15}\textbf{23449.8} \\
        \addlinespace
        \multirow{2}{*}{SnapKV} & Base & 6.2\% & 31323.1 & 21.3\% & 23860.3 & 57.7\% & 19157.4 \\
        & \cellcolor{gray!15}\textbf{+ Ours} & \cellcolor{gray!15}\textbf{13.3\%} & \cellcolor{gray!15}\textbf{32768.0} & \cellcolor{gray!15}\textbf{40.2\%} & \cellcolor{gray!15}\textbf{28471.4} & \cellcolor{gray!15}\textbf{56.7\%} & \cellcolor{gray!15}\textbf{21616.5} \\
        \addlinespace
        \multirow{2}{*}{RKV} & Base & 13.3\% & 30423.2 & 23.3\% & 27968.0 & 43.3\% & 23281.5 \\
        & \cellcolor{gray!15}\textbf{+ Ours} & \cellcolor{gray!15}\textbf{23.4\%} & \cellcolor{gray!15}\textbf{30480.8} & \cellcolor{gray!15}\textbf{53.3\%} & \cellcolor{gray!15}\textbf{27819.6} & \cellcolor{gray!15}\textbf{57.9\%} & \cellcolor{gray!15}\textbf{28094.0} \\
        \midrule

        \multicolumn{8}{c}{\textit{\textbf{Dataset: AIME25}}} \\
        \midrule
        \multicolumn{2}{l}{\textit{Full Attention (Ref.)}} & \multicolumn{6}{c}{\textbf{Acc}: 63.3\% \quad \textbf{Len}: 17528.8} \\
        \addlinespace[0.3em]
        \multirow{2}{*}{Vanilla} & Base & 6.5\% & 32309.7 & 16.7\% & 26617.5 & 24.8\% & 31851.4 \\
        & \cellcolor{gray!15}\textbf{+ Ours} & \cellcolor{gray!15}\textbf{20.5\%} & \cellcolor{gray!15}\textbf{28086.2} & \cellcolor{gray!15}\textbf{12.9\%} & \cellcolor{gray!15}\textbf{27929.5} & \cellcolor{gray!15}\textbf{20.7\%} & \cellcolor{gray!15}\textbf{28794.4} \\
        \addlinespace
        \multirow{2}{*}{SnapKV} & Base & 3.1\% & 31007.5 & 19.3\% & 26291.5 & 27.9\% & 25621.4 \\
        & \cellcolor{gray!15}\textbf{+ Ours} & \cellcolor{gray!15}\textbf{3.3\%} & \cellcolor{gray!15}\textbf{31796.3} & \cellcolor{gray!15}\textbf{23.3\%} & \cellcolor{gray!15}\textbf{30152.4} & \cellcolor{gray!15}\textbf{42.3\%} & \cellcolor{gray!15}\textbf{25228.6} \\
        \addlinespace
        \multirow{2}{*}{RKV} & Base & 10.8\% & 31879.5 & 21.9\% & 27706.7 & 40.0\% & 23944.3 \\
        & \cellcolor{gray!15}\textbf{+ Ours} & \cellcolor{gray!15}\textbf{17.8\%} & \cellcolor{gray!15}\textbf{32279.0} & \cellcolor{gray!15}\textbf{26.7\%} & \cellcolor{gray!15}\textbf{31068.6} & \cellcolor{gray!15}\textbf{56.7\%} & \cellcolor{gray!15}\textbf{28231.6} \\
        \midrule

        \multicolumn{8}{c}{\textit{\textbf{Dataset: UMath}}} \\
        \midrule
        \multicolumn{2}{l}{\textit{Full Attention (Ref.)}} & \multicolumn{6}{c}{\textbf{Acc}: 50.4\% \quad \textbf{Len}: 9815.7} \\
        \addlinespace[0.3em]
        \multirow{2}{*}{Vanilla} & Base & 9.8\% & 32768.0 & 38.0\% & 16102.5 & 32.1\% & 26314.3 \\
        & \cellcolor{gray!15}\textbf{+ Ours} & \cellcolor{gray!15}\textbf{46.1\%} & \cellcolor{gray!15}\textbf{15363.5} & \cellcolor{gray!15}\textbf{36.2\%} & \cellcolor{gray!15}\textbf{16791.5} & \cellcolor{gray!15}\textbf{36.1\%} & \cellcolor{gray!15}\textbf{16118.8} \\
        \addlinespace
        \multirow{2}{*}{SnapKV} & Base & 18.0\% & 22350.7 & 41.8\% & 19021.4 & 46.7\% & 12420.2 \\
        & \cellcolor{gray!15}\textbf{+ Ours} & \cellcolor{gray!15}\textbf{15.5\%} & \cellcolor{gray!15}\textbf{27705.7} & \cellcolor{gray!15}\textbf{46.2\%} & \cellcolor{gray!15}\textbf{22499.4} & \cellcolor{gray!15}\textbf{49.4\%} & \cellcolor{gray!15}\textbf{14520.4} \\
        \addlinespace
        \multirow{2}{*}{RKV} & Base & 28.1\% & 19625.3 & 35.1\% & 17915.2 & 39.9\% & 27429.7 \\
        & \cellcolor{gray!15}\textbf{+ Ours} & \cellcolor{gray!15}\textbf{24.1\%} & \cellcolor{gray!15}\textbf{24346.1} & \cellcolor{gray!15}\textbf{52.0\%} & \cellcolor{gray!15}\textbf{24748.5} & \cellcolor{gray!15}\textbf{50.3\%} & \cellcolor{gray!15}\textbf{21296.5} \\
        \bottomrule
    \end{tabular}
    }
\end{table}

We evaluate our framework under two settings:
\begin{enumerate}
\item \textbf{Fixed (Top-$k$) Budget.} (Figure~\ref{fig:exp_set}, Upper) We adapt selection methods to regularized Top-$p$ budgets where $k$ limits the maximum KV Cache consumption ($L_{upper}$, see Section~\ref{sssec:regularizations}). Due to our method's head- and layer-adaptive nature, actual cache usage remains below Top-$k$ baselines. We default $L_{lower} = L_{upper} /2$.

\item \textbf{Dynamic (Top-$p$) Budget.} (Figure~\ref{fig:exp_set}, Lower) Baselines evict cache only when the token selection probability is zero under the Top-$p$ budget. While this preserves baseline performance, our method regularizes the budget to constrain cache growth, balancing performance with high sparsity for efficiency.
\end{enumerate}

\textbf{Baseline Adaptations.} \hardkv{} unifies Top-$k$ methods into Top-$p$ head-adaptive counterparts via the 3-step framework (Appendix~\ref{app:def_3_step}). We adapt baselines strictly as logits-level transformations rather than direct algorithmic ports. Compression triggers every 128 steps during decoding, differing from prefill-only settings.

\begin{itemize}
\item \vanilla: Uses raw attention logits (akin to H2O). For fairness, we enforce the retention of attention sinks and a local window, combining H2O and StreamLLM strategies.

\item \snapkv: Max-pools raw logits along the sequence dimension. We adapt the original score-pooling to logit-pooling without affecting selection (Solution~\ref{solu:1}), while retaining the local window.

\item \rkv: Converts pair-wise similarity computations into a logits-level transformation (Solution~\ref{solu:2}), similarly preserving attention sinks and the local window.

\end{itemize}

\textbf{Datasets \& Models.} We focus on long-decoding scenarios where models generate $>$10k tokens (e.g., reasoning tasks). We evaluate on AIME24~\citep{aime24}, AIME25~\citep{aime25}, and U-Math~\citep{umath} (see Appendix~\ref{app:datasets_scope} for details). Experiments use Qwen3-8B (with \emph{think} mode enabled), with Qwen3-4B and Qwen3-32B results provided in Appendix~\ref{app:exp_further}.

\begin{figure*}[t]
 \centering
    \includegraphics[width=\linewidth]{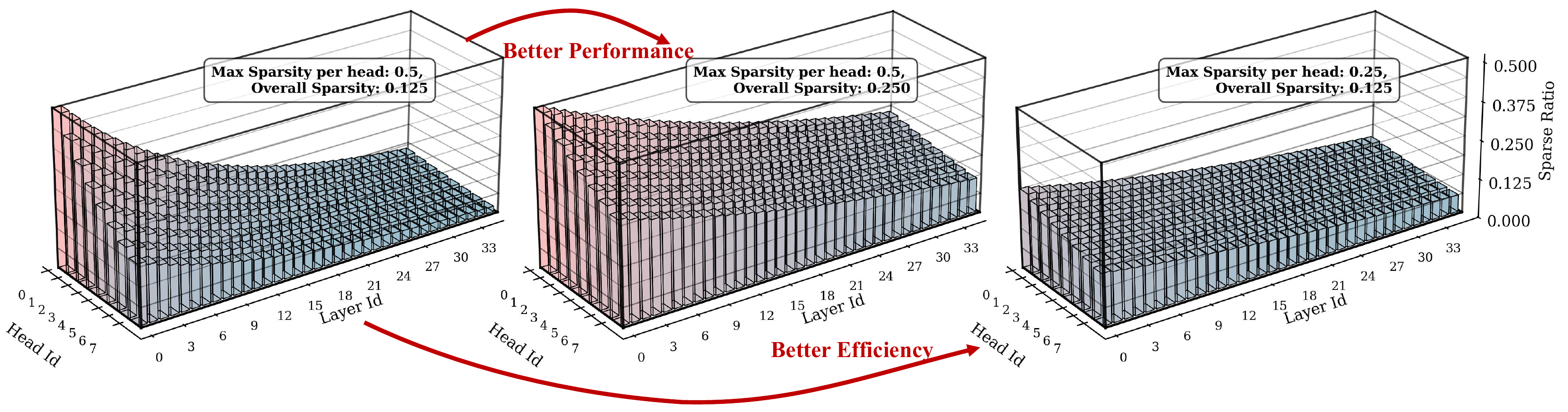}
    \caption{\textbf{Sparsity Utilization Visualization.} We sort the effective budget for different heads and different layers to draw the distribution. For a lower sparsity utilization (\textbf{Left}), the efficiency suffer from the bottleneck in loading unused cache. To improve performance, we can increase sparsity utilization by increasing overall sparsity (regulated by Top-$p$ budget), with approximately same efficiency. To improve efficiency, we can lower the max sparsity per head (regulated by Top-$k$ budget), with same overall sparsity ratio that preserves the performance.}
    \label{fig:3d_sparsity}
\end{figure*}

\subsection{Fixed-budget Performance}

\begin{table}[h]
    \centering
    \caption{\textbf{Fixed Budget Performance (Top-$p$).} Comparison of baselines vs \hardkv{} across different Top-$p$ budgets (\{P70, P80, P90\}). The Top-$k$ budget is 4096.}
    \label{tab:fixed_p}
    
    \setlength{\tabcolsep}{3.5pt} 

    \resizebox{\linewidth}{!}{
    \begin{tabular}{l cc cc cc cc}
        \toprule
        \multirow{2.5}{*}{\textbf{Method}} & \multicolumn{2}{c}{\textbf{Base}} & \multicolumn{6}{c}{\textbf{Top-$p$ Budget}} \\
        \cmidrule(lr){2-3} \cmidrule(lr){4-9}
         & \textbf{Acc} & \textbf{Len} & \multicolumn{2}{c}{\textbf{p70}} & \multicolumn{2}{c}{\textbf{p80}} & \multicolumn{2}{c}{\textbf{p90}} \\
         \cmidrule(lr){4-5} \cmidrule(lr){6-7} \cmidrule(lr){8-9}
         & & & \textbf{Acc} & \textbf{Len} & \textbf{Acc} & \textbf{Len} & \textbf{Acc} & \textbf{Len} \\
        \midrule
        
        \multicolumn{9}{c}{\textit{\textbf{Dataset: AIME24}}} \\
        \midrule
        Vanilla & 29.7\% & 31843.5 & \cellcolor{gray!15}\textbf{40.1\%} & \cellcolor{gray!15}\textbf{26495.3} & \cellcolor{gray!15}\textbf{23.3\%} & \cellcolor{gray!15}\textbf{27571.8} & \cellcolor{gray!15}\textbf{33.5\%} & \cellcolor{gray!15}\textbf{23449.8} \\
        SnapKV  & 57.7\% & 19157.4 & \cellcolor{gray!15}\textbf{63.3\%} & \cellcolor{gray!15}\textbf{19386.8} & \cellcolor{gray!15}\textbf{56.7\%} & \cellcolor{gray!15}\textbf{21757.3} & \cellcolor{gray!15}\textbf{56.7\%} & \cellcolor{gray!15}\textbf{21616.5} \\
        RKV     & 43.3\% & 23281.5 & \cellcolor{gray!15}\textbf{53.3\%} & \cellcolor{gray!15}\textbf{27802.4} & \cellcolor{gray!15}\textbf{51.5\%} & \cellcolor{gray!15}\textbf{29152.1} & \cellcolor{gray!15}\textbf{57.9\%} & \cellcolor{gray!15}\textbf{28094.0} \\
        \midrule
        
        \multicolumn{9}{c}{\textit{\textbf{Dataset: AIME25}}} \\
        \midrule
        Vanilla & 24.8\% & 31851.4 & \cellcolor{gray!15}\textbf{20.1\%} & \cellcolor{gray!15}\textbf{30100.8} & \cellcolor{gray!15}\textbf{18.9\%} & \cellcolor{gray!15}\textbf{29155.4} & \cellcolor{gray!15}\textbf{20.7\%} & \cellcolor{gray!15}\textbf{28794.4} \\
        SnapKV  & 27.9\% & 25621.4 & \cellcolor{gray!15}\textbf{36.6\%} & \cellcolor{gray!15}\textbf{24264.5} & \cellcolor{gray!15}\textbf{53.3\%} & \cellcolor{gray!15}\textbf{24229.6} & \cellcolor{gray!15}\textbf{42.3\%} & \cellcolor{gray!15}\textbf{25228.6} \\
        RKV     & 40.0\% & 23944.3 & \cellcolor{gray!15}\textbf{39.9\%} & \cellcolor{gray!15}\textbf{29174.3} & \cellcolor{gray!15}\textbf{46.7\%} & \cellcolor{gray!15}\textbf{28309.9} & \cellcolor{gray!15}\textbf{56.7\%} & \cellcolor{gray!15}\textbf{28231.6} \\
        \midrule
        
        \multicolumn{9}{c}{\textit{\textbf{Dataset: UMath}}} \\
        \midrule
        Vanilla & 32.1\% & 26314.3 & \cellcolor{gray!15}\textbf{38.2\%} & \cellcolor{gray!15}\textbf{22502.7} & \cellcolor{gray!15}\textbf{43.9\%} & \cellcolor{gray!15}\textbf{17195.4} & \cellcolor{gray!15}\textbf{36.1\%} & \cellcolor{gray!15}\textbf{16118.8} \\
        SnapKV  & 46.7\% & 12420.2 & \cellcolor{gray!15}\textbf{45.3\%} & \cellcolor{gray!15}\textbf{14212.9} & \cellcolor{gray!15}\textbf{48.1\%} & \cellcolor{gray!15}\textbf{14059.4} & \cellcolor{gray!15}\textbf{49.4\%} & \cellcolor{gray!15}\textbf{14520.4} \\
        RKV     & 39.9\% & 27429.7 & \cellcolor{gray!15}\textbf{38.0\%} & \cellcolor{gray!15}\textbf{23649.9} & \cellcolor{gray!15}\textbf{47.5\%} & \cellcolor{gray!15}\textbf{23681.0} & \cellcolor{gray!15}\textbf{50.3\%} & \cellcolor{gray!15}\textbf{21296.5} \\
        \bottomrule
    \end{tabular}
    }
\end{table}

\textbf{Fixed Budget Results (Top-$k$).} Table~\ref{tab:fixed_k} presents results under varying Top-$k$ budgets. Our method achieves up to a \textbf{1.5$\times$ performance boost} at the largest budget setting (4096). We observe consistent improvements for both \snapkv{} and \rkv{} across all three datasets. For \vanilla{}, while there is a considerable boost in low-budget settings, performance degrades at higher budgets. We attribute this drop to factors intrinsic to the method, specifically the accumulation of noisy signals in raw logits.

A notable finding is the effectiveness of \hardkv{} in low-budget regimes, driven by the superior utilization inherent to Top-$p$ sampling. Given the intrinsic difficulty of competition-level math reasoning, Top-$k$ baselines frequently fail in low-budget settings, often yielding overlong, high-perplexity responses~\citep{pitfalls}. By compressing block allocation before the hard budget is reached (Figure~\ref{fig:exp_set}), our method preserves capacity for retaining critical KV cache states later in the generation process, thereby improving accuracy.

\textbf{Fixed Budget Results (Top-$p$).} As discussed above, the fixed Top-$p$ budget acts as a reference for the target token density that compression methods should maintain in the current context, and therefore affects the selections accumulated over compression steps. Table~\ref{tab:fixed_p} reports performance under different Top-$p$ budgets. For \snapkv{} and \rkv{}, the shifted logits benefit substantially from larger $p$ values, indicating that these adapted methods can dynamically recover useful information from long-tailed attention distributions. By contrast, raw logits (\vanilla{}) show relatively flat performance across $p$ values, consistent with our earlier observation that \vanilla{} degrades under larger Top-$k$ budgets. This suggests that simply increasing the retained probability mass is insufficient when the underlying logits contain accumulated noise; addressing this limitation in raw attention logits under high-budget settings is left to future work.


\begin{table}[h]
    \centering
    \caption{\textbf{Dynamic Budget Trade-offs.} Analysis of Efficiency vs. Accuracy. \textbf{SU}: Sparsity Utilization, \textbf{Thr.}: Throughput (tokens/s).}
    \label{tab:dynamic}
    
    \setlength{\tabcolsep}{3pt} 
    \renewcommand{\arraystretch}{1.1} 
    \resizebox{\linewidth}{!}{%
    \begin{tabular}{ll ccc ccc ccc} 
        \toprule
        \multirow{2.5}{*}{\textbf{Method}} & \multirow{2.5}{*}{\textbf{Config}} & \multicolumn{3}{c}{\textbf{AIME24}} & \multicolumn{3}{c}{\textbf{AIME25}} & \multicolumn{3}{c}{\textbf{UMath}} \\
        \cmidrule(lr){3-5} \cmidrule(lr){6-8} \cmidrule(lr){9-11} 
         & & \textbf{Acc} & \textbf{SU} & \textbf{Thr.} & \textbf{Acc} & \textbf{SU} & \textbf{Thr.} & \textbf{Acc} & \textbf{SU} & \textbf{Thr.} \\
        \midrule
        
        \multicolumn{11}{c}{\textit{\textbf{System Throughput Baselines (Tokens/s)}}} \\
        \midrule
        \multicolumn{2}{l}{Base System}       & \multicolumn{3}{c}{346} & \multicolumn{3}{c}{350} & \multicolumn{3}{c}{580} \\
        \multicolumn{2}{l}{+ CUDA Graph}      & \multicolumn{3}{c}{477} & \multicolumn{3}{c}{467} & \multicolumn{3}{c}{748} \\
        \multicolumn{2}{l}{+ Top-$k$ pruning} & \multicolumn{3}{c}{806} & \multicolumn{3}{c}{823} & \multicolumn{3}{c}{996} \\
        
        \midrule
        \multicolumn{11}{c}{\textit{\textbf{Top-$p = 0.90$}}} \\
        \midrule
        
        \multirow{2}{*}{\textsc{Vanilla}} 
          & Base & 35.3\% & 33.2\% & 279  & 33.3\% & 35.5\% & 289 & 44.5\% & 46.6\% & 329 \\
          & \cellcolor{gray!15}\textbf{+ Ours} & \cellcolor{gray!15}\textbf{33.5\%} & \cellcolor{gray!15}\textbf{83.4\%} & \cellcolor{gray!15}\textbf{732} & \cellcolor{gray!15}\textbf{20.7\%} & \cellcolor{gray!15}\textbf{86.3\%} & \cellcolor{gray!15}\textbf{792} & \cellcolor{gray!15}\textbf{36.1\%} & \cellcolor{gray!15}\textbf{79.5\%} & \cellcolor{gray!15}\textbf{819} \\
        \addlinespace
        
        \multirow{2}{*}{\textsc{SnapKV}} 
          & Base & 43.3\% & 30.7\% & 301 & 43.3\% & 18.9\% & 343 & 42.0\% & 37.1\% & 321 \\
          & \cellcolor{gray!15}\textbf{+ Ours} & \cellcolor{gray!15}\textbf{49.2\%} & \cellcolor{gray!15}\textbf{90.3\%} & \cellcolor{gray!15}\textbf{632} & \cellcolor{gray!15}\textbf{42.6\%} & \cellcolor{gray!15}\textbf{97.9\%} & \cellcolor{gray!15}\textbf{626} & \cellcolor{gray!15}\textbf{46.5\%} & \cellcolor{gray!15}\textbf{84.3\%} & \cellcolor{gray!15}\textbf{726} \\
          
        \midrule
        \multicolumn{11}{c}{\textit{\textbf{Top-$p = 0.60$}}} \\
        \midrule
        
        \multirow{2}{*}{\textsc{Vanilla}} 
          & Base & 33.9\% & 27.1\% & 280 & 20.0\% & 47.5\% & 291 & 25.3\% & 40.1\% & 317 \\
          & \cellcolor{gray!15}\textbf{+ Ours} & \cellcolor{gray!15}\textbf{36.7\%} & \cellcolor{gray!15}\textbf{77.1\%} & \cellcolor{gray!15}\textbf{728} & \cellcolor{gray!15}\textbf{21.0\%} & \cellcolor{gray!15}\textbf{83.1\%} & \cellcolor{gray!15}\textbf{698} & \cellcolor{gray!15}\textbf{40.1\%} & \cellcolor{gray!15}\textbf{82.0\%} & \cellcolor{gray!15}\textbf{887} \\
        \addlinespace
        
        \multirow{2}{*}{\textsc{SnapKV}} 
          & Base & 26.7\% & 23.4\% & 300 & 16.7\% & 30.1\% & 305 & 30.0\% & 21.1\% & 319 \\
          & \cellcolor{gray!15}\textbf{+ Ours} & \cellcolor{gray!15}\textbf{50.2\%} & \cellcolor{gray!15}\textbf{88.2\%} & \cellcolor{gray!15}\textbf{646} & \cellcolor{gray!15}\textbf{43.3\%} & \cellcolor{gray!15}\textbf{85.2\%} & \cellcolor{gray!15}\textbf{645} & \cellcolor{gray!15}\textbf{46.0\%} & \cellcolor{gray!15}\textbf{89.2\%} & \cellcolor{gray!15}\textbf{729} \\
        \bottomrule
         
        \bottomrule
    \end{tabular}
    }
\end{table}

\subsection{Dynamic-budget Performance}

\textbf{Dynamic Budget Results.} Table~\ref{tab:dynamic} shows performance in dynamic budget settings, along with throughput ablations. Under the Top-$k$ setting, both CUDA Graph integrations and pruning contribute crucially to the overall throughput improvements. Under the Top-$p$ setting, baselines here retain tokens whenever any layer or head selects them within the sliding window (see Figure~\ref{fig:exp_set} for illustrations). While this yields accurate Top-$p$ selections, it has caused downgraded throughput compared with Top-$k$ counterparts, due to incompatibility with CUDA Graph and blocking operation of compression. 
Another consequence in naive Top-$p$ pruning is suboptimal sparsity utilization (defined as max sparsity per head divided by overall sparsity; see Figure~\ref{fig:3d_sparsity}). By the KV index management system introduced in Section~\ref{ssec:index_management}, we can regulate the overall sparsity via a fixed Top-$p$ budget and enforce efficiency via a Top-$k$ threshold. Our method achieves performance comparable to Top-$p$ sparse baselines while maintaining the efficiency of Top-$k$ eviction.

\textbf{Summary and Recommendations.} Synthesizing these results, we recommend optimizing performance by regulating sparsity utilization along two dimensions. For performance-sensitive tasks, we suggest using a larger $k$ to increase overall token retention, together with a larger $p$ to improve utilization of the retained cache. For efficiency-sensitive tasks, we suggest using a smaller $k$ to improve throughput, together with a smaller $p$ to reduce occupied cache space and avoid premature eviction.


\section{Conclusion and Future Work}
In summary, we propose a scheme that unifies diverse KV compression methods to facilitate Top-$p$ sampling, incorporating \emph{logits calibration}. Furthermore, we provide a systems-level solution for managing head-adaptive KV indices. By integrating these components, \hardkv{}, built upon \emph{Cascade Cache}, achieves a superior trade-off between performance and efficiency compared to fixed-budget baselines.

In future work, we aim to extend the \hardkv{} framework to streaming compression regimes~\citep{streamingtom, stc} and investigate sequence-level hybrids of constant-sized and linear-sized memory~\citep{lattn}.
\section*{Acknowledgment}
This work was supported by the Fundamental and Interdisciplinary Disciplines Breakthrough Plan of the Ministry of Education of China (No. JYB2025XDXM103) and CCF-Ant Research Fund (No. RF20250402).

\clearpage
\section*{Impact Statement}
This paper presents work whose goal is to advance the field of Machine
Learning. There are many potential societal consequences of our work, none
which we feel must be specifically highlighted here.
\bibliography{example_paper}
\bibliographystyle{icml2026}

\newpage
\appendix
\onecolumn
\section {System Design for \hardinfer}
\label{app:hardinfer}
\begin{figure}[htbp]
    \centering
    \begin{tikzpicture}[node distance=1cm] 

        \node[manager] (seq_mgr) {Sequence\\Manager};
        \node[manager, left=0.3cm of seq_mgr] (block_mgr) {Block\\Manager}; 
        \node[manager, right=0.3cm of seq_mgr] (req_queue) {Request\\Queue};

        \begin{scope}[on background layer]
            \node[container, fit=(block_mgr)(req_queue)(seq_mgr)] (scheduler) {};
            \node[container, draw=accentblue, solid, inner sep=12pt, fit=(scheduler), label={[text=accentblue, font=\sffamily\small]north:LLMEngine}] (engine) {};
        \end{scope}

        \node[modelrunner, below=1.0cm of engine.south, anchor=center, xshift=-2.525cm] (runner) {Model Runner};
        \node[cache, below=1.0cm of engine.south, anchor=center, xshift=2.5cm] (kvcache) {KV Cache\\(Dense Blocks)};

        \node[attention, below=-0.4cm of runner] (attn) at ($(runner)!0.5!(kvcache) + (0,-1.8)$) {Attention Layer (Dense)\\ \scriptsize All heads process same cached data};

        \draw[arrow] (block_mgr.south) -- ++(0,-0.4) -| (runner.north);
        \draw[arrow] (seq_mgr.south) -- ++(0,-0.4) -| (kvcache.north);
        \draw[arrow] (runner.south) -- (attn.north -| runner.south);
        \draw[arrow] (kvcache.south) -- (attn.north -| kvcache.south);

    \end{tikzpicture}
    \caption{nanovllm: Monolithic Dense Cache Architecture.}
\end{figure}

\begin{figure}[htbp]
    \centering
    \begin{tikzpicture}[node distance=1cm]

        \node[manager] (cache_mgr) {Cache\\Manager};
        \node[manager, left=0.3cm of cache_mgr] (head_mgr) {Headwise\\Block\\Manager};
        \node[manager, right=0.3cm of cache_mgr] (req_queue_h) {Request\\Queue};

        \begin{scope}[on background layer]
            \node[container, fit=(head_mgr)(req_queue_h)] (scheduler_h) {};
            \node[container, draw=accentgreen, solid, inner sep=12pt, fit=(scheduler_h), label={[text=accentgreen, font=\sffamily\small]north:HARD LLMEngine}] (engine_h) {};
        \end{scope}

        \node[modelrunner, below=1.2cm of engine_h.south, anchor=center, xshift=-2.525cm] (runner_h) {Model Runner};

        \node[basicbox, fill=white, minimum width=1.8cm, minimum height=1cm, below=1.0cm of engine_h.south, xshift=1.5cm] (sparse) {Sparse\\\scriptsize(Mask)};
        \node[basicbox, fill=white, minimum width=1.8cm, minimum height=1cm, right=0.2cm of sparse] (condensed) {Condensed\\\scriptsize(Rewrite)};
        \node[basicbox, fill=softblue!30, minimum width=3.8cm, minimum height=0.6cm, below=0.2cm of sparse.south west, anchor=north west] (mask) {\scriptsize packed\_headwise\_mask (uint8)};
        
        \begin{scope}[on background layer]
            \node[container, fill=softyellow!50, fit=(sparse)(condensed)(mask), label={[font=\sffamily\scriptsize, yshift=-2pt]north:HARD KV Cache}] (kv_hard) {};
        \end{scope}

        \node[basicbox, fill=white, minimum width=2cm] (head_sel) at ($(runner_h)!0.5!(kv_hard) + (0,-3.0)$) {Partial\\Update};
        \node[basicbox, fill=white, minimum width=2cm, left=0.2cm of head_sel] (query) {Query};
        \node[basicbox, fill=white, minimum width=2cm, right=0.2cm of head_sel] (rewrite) {Rewrite};
        
        \begin{scope}[on background layer]
            \node[container, fill=softgreen!30, fit=(query)(rewrite)(head_sel), label={[font=\sffamily\scriptsize, yshift=-2pt]north:HARD Attention}] (attn_hard) {};
        \end{scope}

        \draw[arrow] (head_mgr.south) -- ++(0,-0.4) -| (runner_h.north);
        \draw[arrow] (cache_mgr.south) -- ++(0,-0.4) -| (kv_hard.north);
        
        \draw[arrow] (runner_h.south) -- ++(0,-0.3) -| (attn_hard.north -| runner_h.south);
        \draw[arrow] (kv_hard.south) -- (attn_hard.north -| kv_hard.south);

    \end{tikzpicture}
    \caption{nanovllm\_HARD: Hierarchical Sparse-Dense Architecture.}
\end{figure}

Build upon \nanovllm{}, the \hardinfer{} fork have made the following key adaptations:
\begin{enumerate}[noitemsep, topsep=0pt, leftmargin=*]
    \item \verb|Block Manager| $\longrightarrow$ \verb|Headwise Block Manager| \\ 
    We implement head-flattened block indices (as shown in Figure~\ref{fig:flattned_block_table} in Section~\ref{sssec:principles}). This design facilitates head-adaptive KV block allocation and release, enabling more fine-grained memory management for the KV cache.
    
    \item \verb|Sequence Manager| $\longrightarrow$ \verb|Cache Manager| \\ 
    The vanilla \nanovllm{} manages metadata (such as the \verb|block table|, i.e., allocated KV blocks for a given request) directly via the sequence/request data structure. In contrast, for the \textbf{Cascade Cache} in this work, we implement a dedicated \verb|Cache Manager|. We realize the three-tier cache using a unified head-wise mask structure that covers the entire sequence. 
    This head-wise mask is packed in \href{https://docs.flashinfer.ai/generated/flashinfer.quantization.segment_packbits.html}{uint8 format} for kernel efficiency. The three-tier cache spaces are isolated by pointers in the \verb|req_to_token_pool|  \href{https://github.com/sgl-project/sglang/blob/673dc09d9ba30331fa60f23c88c0483865d3afce/python/sglang/srt/model_executor/forward_batch_info.py}{(in SGLang forward\_batch)} and dynamically apply the effective mask during decoding or KV selection computations.
    
    \item \verb|Flash Attention Layer| $\longrightarrow$ \verb|HARD Attention Layer| \\ 
    For the attention computation backend, we transition from Flash Attention kernels to FlashInfer. This is due to FlashInfer's native support for a block size of 1, which enables the arbitrary selection of KV indices. To ensure compatibility with CUDA Graphs, we implement a \emph{Partial Update} mechanism that bypasses high-overhead preparation steps to directly update the head-wise mask for different layers. Furthermore, for decoding-time KV cache compression, we store the query cache in a sliding window and support efficient rewrite kernels.
\end{enumerate}

For further details in the architecture of \hardinfer{}, please refer to \hyperlink{https://github.com/SuDIS-ZJU/HARDInfer/docs/architecture.md}{architecture.md} in our code repository.

\section{Further Discussions on KV Index Regularization}
\subsection{PagedAttention, Page Table, and KV Index Metadata} 
\label{app:metadata}
\begin{figure*}[t]
\includegraphics[width=\linewidth]{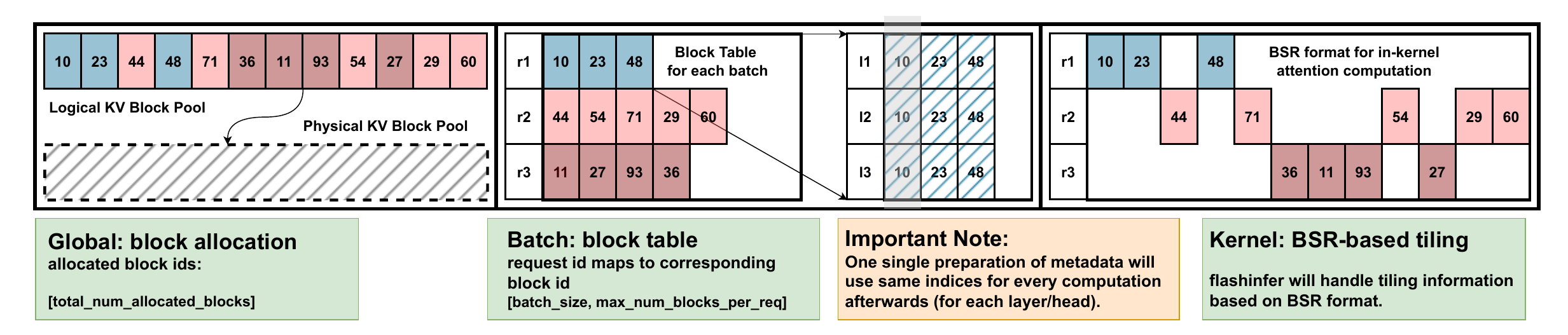}
  \caption{The metadata preparations during a complete decoding. \textbf{Global:} The logical KV block pointers to the physical KV blocks; \textbf{Batch:} The request-to-logical-block mapping, gatherer in a running batch; \textbf{Kernel:} The tiling data needed to execute kernel computation, often most computation-intensive.}
  \label{fig:layouts}
\end{figure*}

PagedAttention~\citep{pagedattention} addresses memory fragmentation in LLM inference systems. Engines like vLLM \citep{vllm} and SGLang \citep{sglang} implement this via various backends, each necessitating distinct metadata computations.

Decoding workloads are typically memory-bound but suffer disproportionate scheduling overhead. This bottleneck is exacerbated by KV cache compression, particularly under aggressive sparsity ratios (e.g., $\approx 20\%$). The primary source of this overhead is metadata preparation, which we categorize into three levels:

\textbf{Global-Level:} The engine pre-allocates a global KV memory pool. At runtime, requests are assigned random block indices (Continuous Batching). While standard attention uses a single index per request, head-adaptive scenarios require dynamic indices across layers. To maintain compatibility with CUDA Graphs, we expand these allocated indices by the number of heads.

\textbf{Batch-Level:} Inference executes in batches. For decoding-only requests, we maintain a list of occupied block indices stacked into a block table, often flattened for efficiency. In compression scenarios, this table must support read/write operations to facilitate data selection methods.

\textbf{Kernel-Level:} The most computationally intensive phase is tiling metadata for kernel execution. Flashinfer employs a unified Block Compressed Sparse Row (BSR) format to support custom attention mechanisms, including KV compression. To optimize preparation, we employ unified indices across layers; this avoids per-layer overhead while remaining compatible with head-adaptive selection via head expansion. Additionally, we utilize Flashinfer's custom masking to exclude attention scores at specific positions.

\subsection{Discussion on trade-off in the Index Management for Layerwise Selection}
\label{app:layouts}

In this subsection, we extend the discussion from Section~\ref{sssec:regularizations} to explore potential schemas for different KV selection layouts. Head-wise Allocation (HA) requires the most dynamic selection mechanism, while the subsequent three layouts present simpler tasks that can be addressed using similar methodologies. 
The metrics reported below were collected on an NVIDIA RTX 4090 GPU using a batch size of 32, with an overall sparsity ratio of 12.5\% and a maximum per-head sparsity ratio of 25\%.

It is important to note that during attention computation, head parallelism can be effectively mapped to query parallelism. Specifically, kernels such as FlashInfer flatten the head dimension, arranging query tile sizes for each Cooperative Thread Array (CTA) via the transformation: $[num\_queries, num\_kv\_heads, head\_dim] \rightarrow [num\_queries \times num\_kv\_heads, 1, head\_dim]$.

\textbf{Base Task: Token-wise Selection.} All indices across different layers are identical. We only need to maintain global (batch-level) block indices, requiring no additional optimization efforts.

\textbf{Task 1: Layer-wise Selection (LS).} In this setting, while different layers share the same token budget, the specific indices are selected independently.

\begin{figure}[h]
\includegraphics[width=\linewidth]{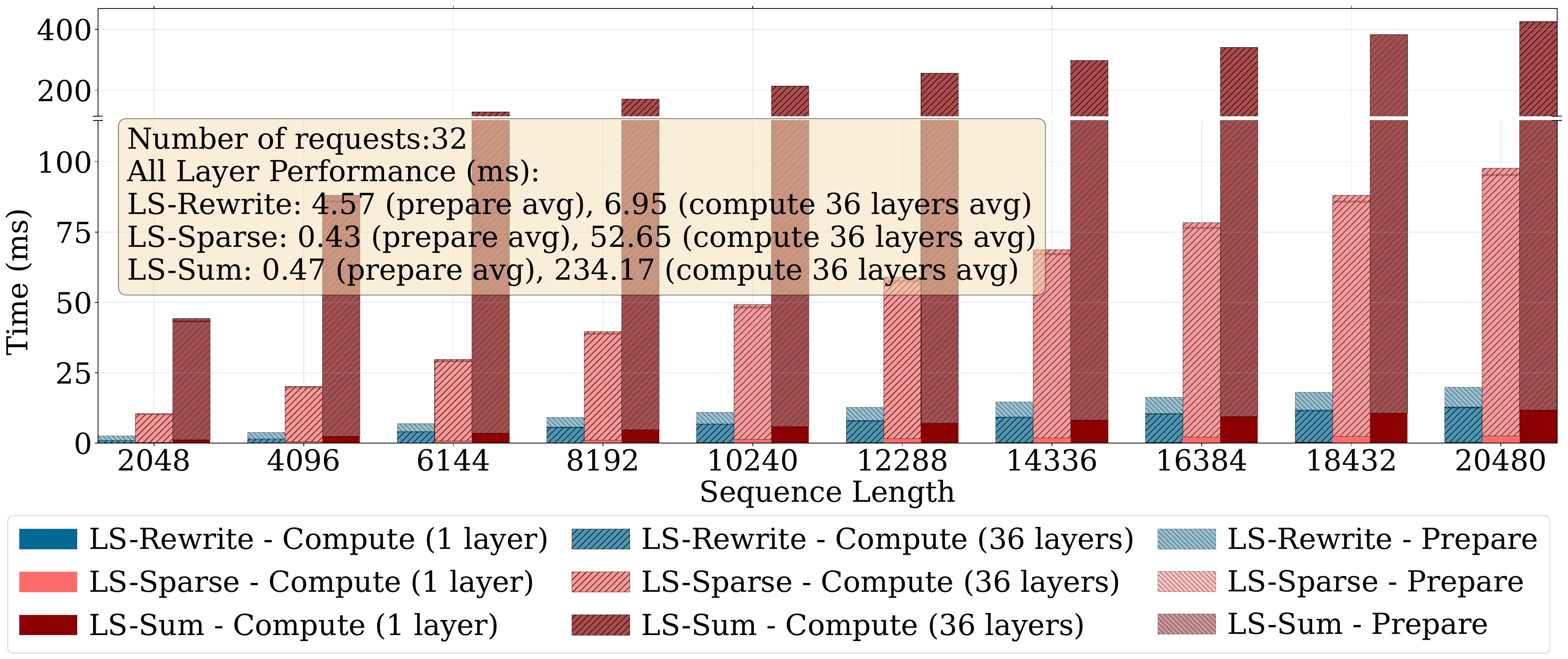}
  \caption{Performance evaluations for solutions in the Layer-Selection (LS) task.}
  \label{fig:ls_performance}
\end{figure}

\textbf{Solutions:} 
\textbf{Vanilla} (\xmark) -- Requires planning before every layer, resulting in broken CUDA Graphs; 
\textbf{LS-Rewrite} (\cmark) -- Consolidates selected KV blocks by reading and writing them into new blocks with unified indices; 
\textbf{LS-Sparse} (\cmark) -- Maintains full-attention allocated indices and passes a selection mask per layer; 
\textbf{LS-Sum} (\xmark) -- Allocates KV blocks for each layer and concatenates them during attention preparation. This is inefficient due to excessive redundant loading.

\textbf{Task 2: Layer-wise Allocation (LA).} Different layers are assigned different budgets with independently selected indices.

\begin{figure}[h]
\includegraphics[width=\linewidth]{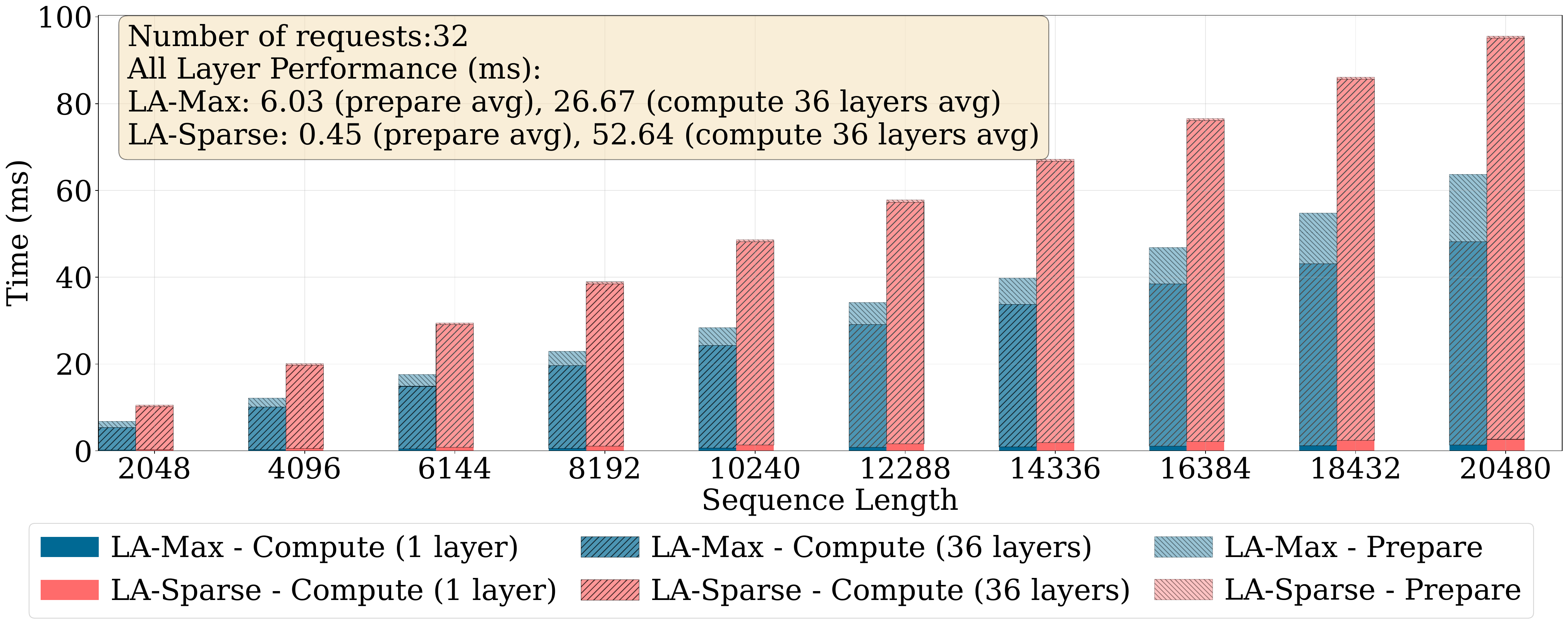}
  \caption{Performance evaluations for solutions in the Layer-Allocation (LA) task.}
  \label{fig:la_performance}
\end{figure}

\textbf{Solutions:} 
\textbf{LA-Sparse} (\textbf{--}) -- Utilizes sparse loading, identical to `LS-Sparse'; 
\textbf{LA-Max} (\cmark) -- Unifies block indices based on the maximum layer-wise allocated pages. This necessitates an additional rewrite operation, resulting in higher preparation latency.

\emph{Latency Analysis:} It is important to note that the higher latency incurred by rewriting the KV cache is amortized over decoding steps. We only need to rewrite the cache once, utilizing vanilla allocation thereafter for non-selection decoding steps. For prefill-time compression methods~\citep{snapkv,think, nacl}, this additional cost is effectively negligible. Regarding decoding-time compression methods (which use periodic compression operations), the cost is averaged over the compression frequency.

\textbf{Task 3: Head-wise Selection (HS).} Different heads have different budgets and independently selected indices, yet they total the same KV cache budget per layer.

\begin{figure}[!h]
\includegraphics[width=\linewidth]{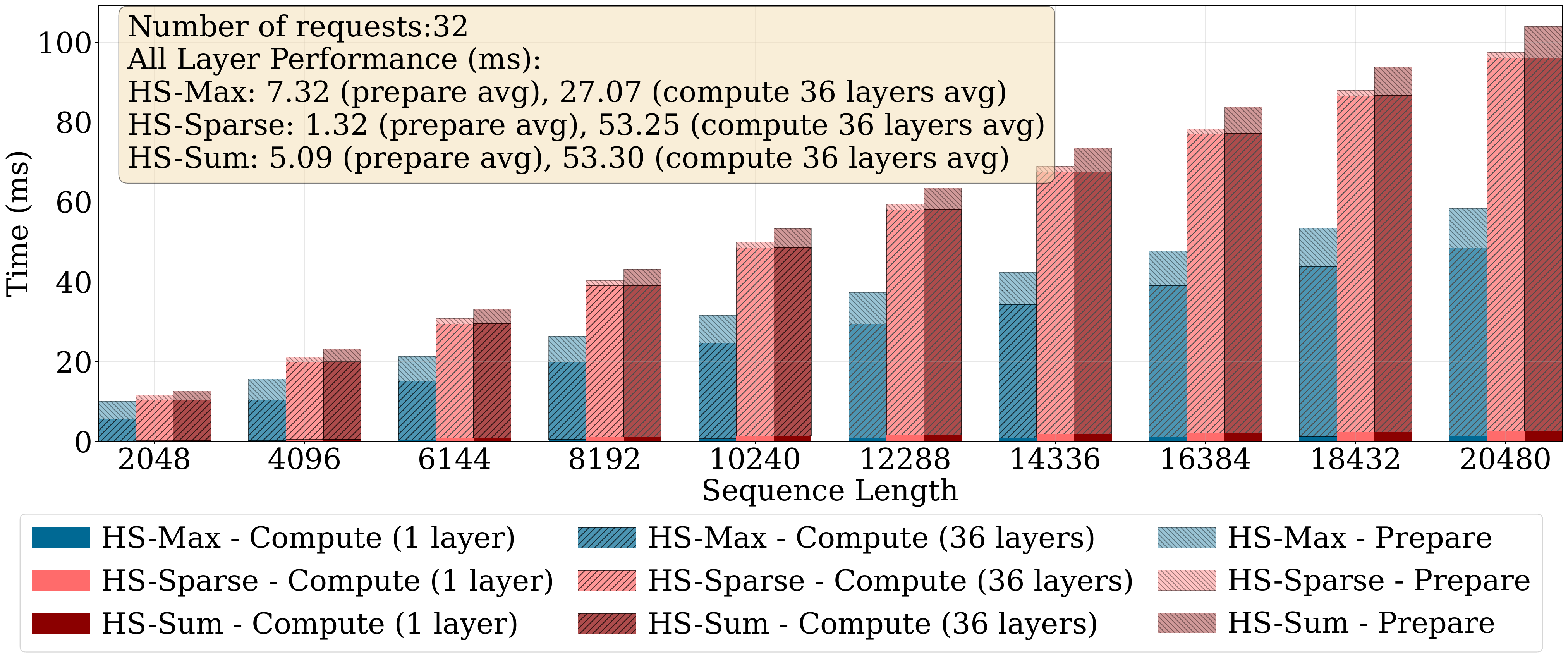}
  \caption{Performance evaluations for solutions in Head-Selection (HS) scenarios.}
  \label{fig:hs_performance}
\end{figure}

To utilize the benefits of flexible KV cache allocation in this setting, we assign different block IDs to different heads in the global/batch-level metadata. However, we ensure that each block index corresponds to the KV cache of all layers (see Figure~\ref{fig:flattned_block_table} in Section~\ref{sssec:principles}).

\textbf{Solutions:} 
\textbf{HS-Sparse} (\textbf{--}) -- A sparse loading method that keeps full indices in sequence and loads the effective KV cache via sparse masks; 
\textbf{HS-Max} (\cmark) -- Unifies head-wise indices to the maximum number of allocated blocks per head, requiring a rewrite operation to align indices across heads; 
\textbf{HS-Sum} (\xmark) -- Unifies head-wise indices to the sum of allocated blocks per layer, requiring a complex rewrite operation across layers.

Finally, we reach to \textbf{Head-wise Allocation (HA)} task that is discussed in Section~\ref{ssec:index_management}.

\paragraph{Trade-off in Rewrite Operation}
\label{app:integral}

Building on the trade-off discussed in Section~\ref{sssec:principles}, we characterize the system cost using the Memory-Time-Integral (MTI):

\[
MTI(\text{Memory-Time-Integral}) = \int_0^{T} \text{memory(t)} \ \mathrm{d}t
\]

Figure~\ref{fig:integrals} compares the MTI of \textbf{HA-Sparse} and \textbf{HA-Max}. Notably, \textbf{HA-Max} quickly offsets the temporary overhead incurred by rewrite operations (in $<5$ steps). 
This implies that increasing the size of the \emph{Condense Cache} (\textbf{HA-Max}) over the \emph{Sparse Cache} (\textbf{HA-Sparse}) improves request-level throughput. 
Nevertheless, we cannot minimize the Sparse Cache entirely, as it is critical for the \emph{Cautious Update} strategy (Section~\ref{ssec:framework_overview}). 
Therefore, we select a balanced memory allocation ($L_{lower} = L_{upper} /2$) for the experimental evaluation presented in Section~\ref{sssec:regularizations} and~\ref{sec:exp}.

\begin{figure*}[h]
    \centering
    \begin{subfigure}[t]{0.48\linewidth}
        \includegraphics[width=\linewidth]{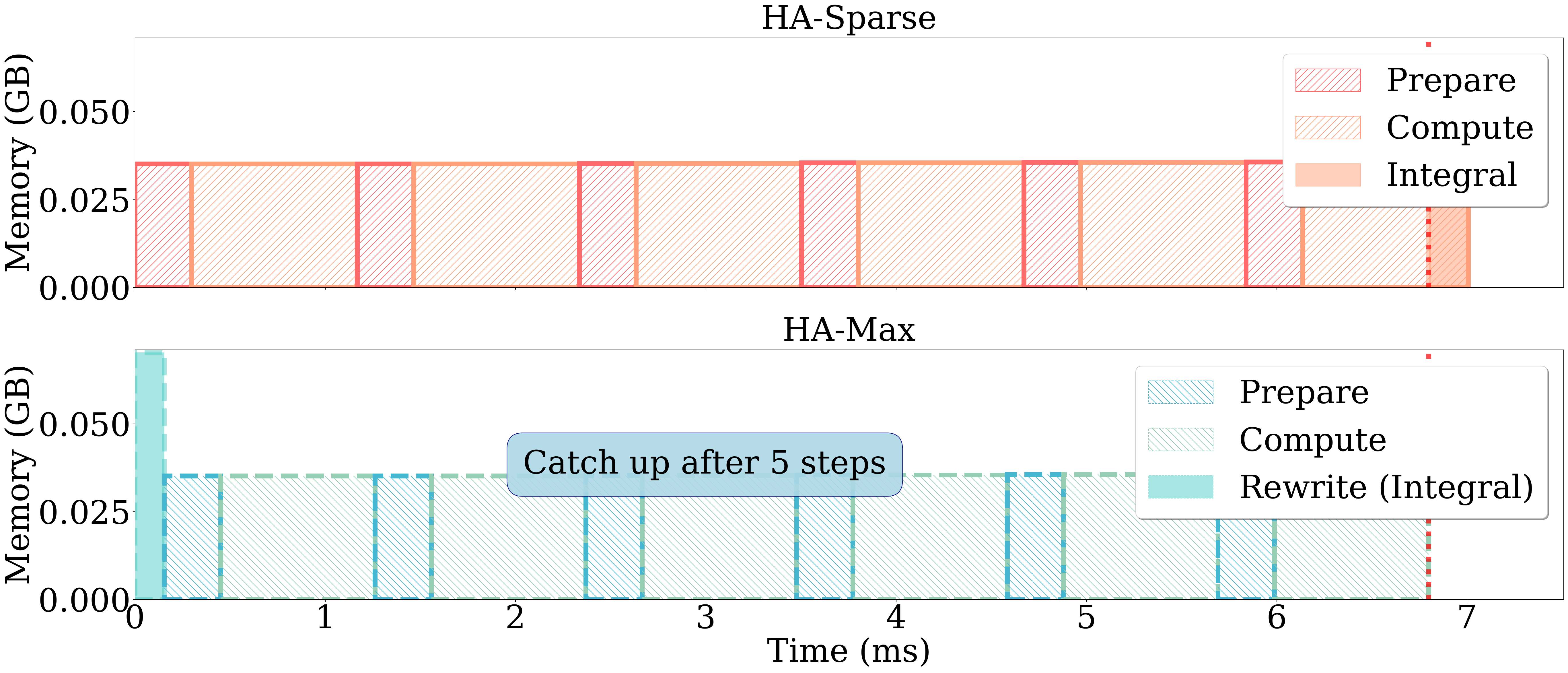}
        \caption{Memory-latency integral on 1 sequence with length of 2048.} 
    \end{subfigure}%
    \hfill
    \begin{subfigure}[t]{0.48\linewidth} 
        \includegraphics[width=\linewidth]{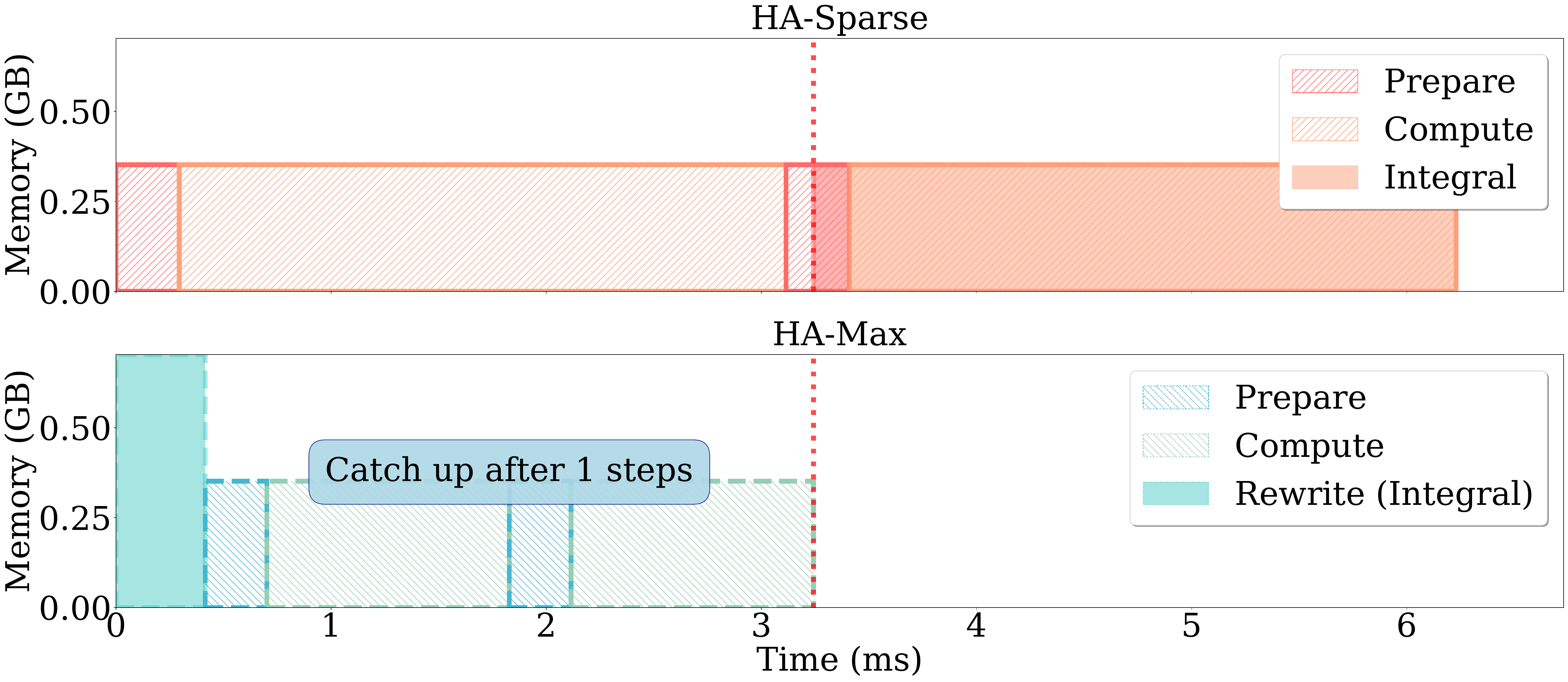}
        \caption{Memory-latency integral on 1 sequence with length of 20480.} 
    \end{subfigure}%
    \vfill
    \begin{subfigure}[t]{0.48\linewidth} 
        \includegraphics[width=\linewidth]{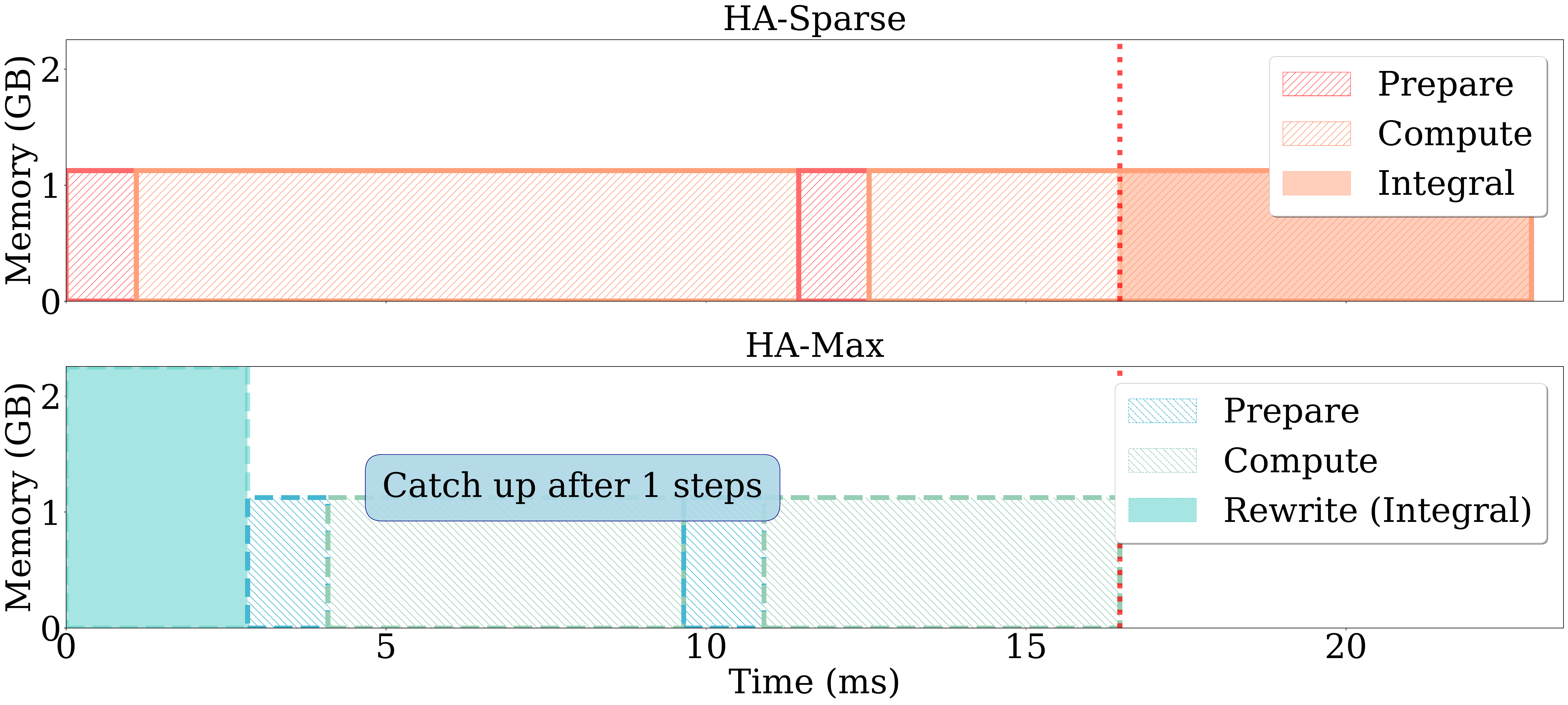}
        \caption{Memory-latency integral on 32 sequence with length of 2048.} 
    \end{subfigure}%
    \hfill
    \begin{subfigure}[t]{0.48\linewidth} 
        \includegraphics[width=\linewidth]{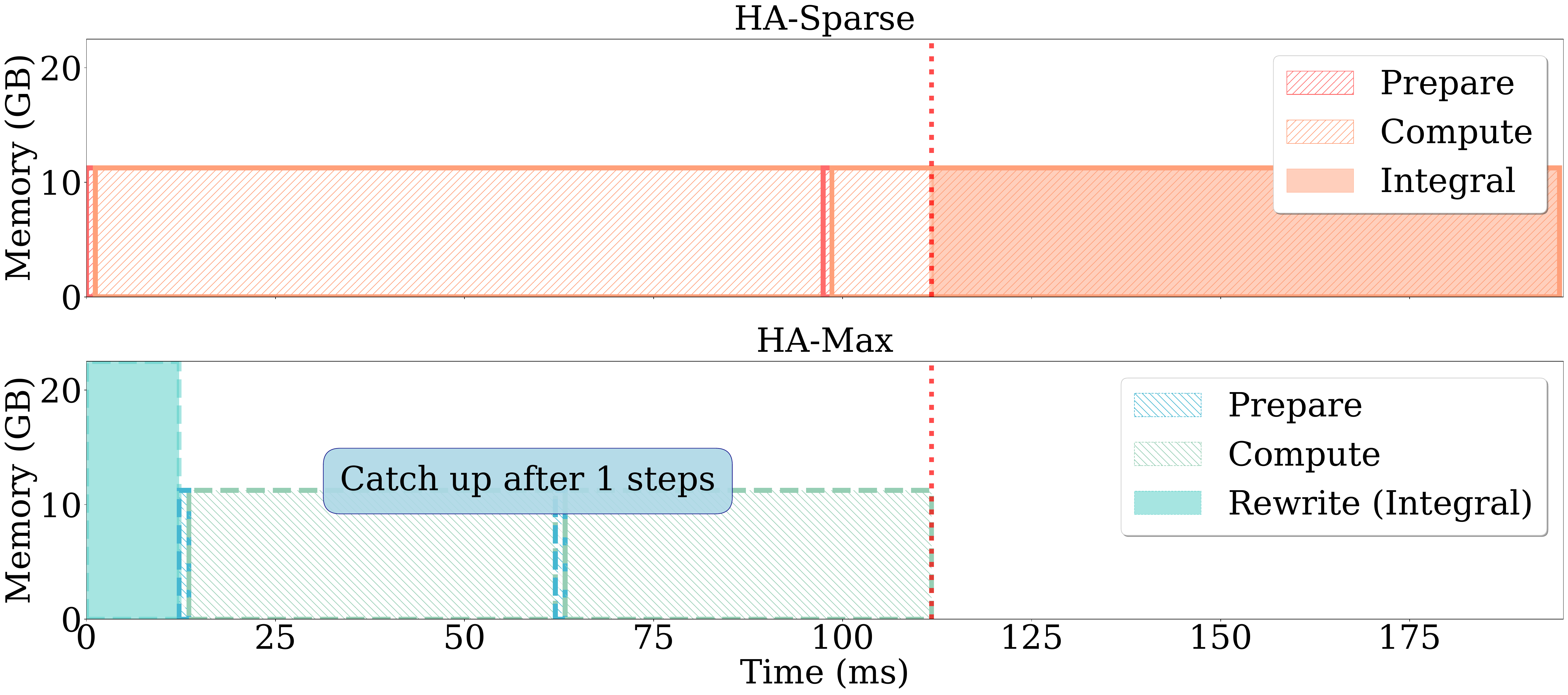}
       \caption{Memory-latency integral on 32 sequence with length of 20480.} 
    \end{subfigure}

    \caption{Trade-off measured by Memory-latency integral for different number of sequences and different sequence lengths. In most settings, the rewrite integral can catch up its sparse counterpart in less then 5 decoding steps. }
    \label{fig:integrals}
\end{figure*}

\section{Experiments Details}
\subsection{Choice of Datasets}
\label{app:datasets_scope}

\hardkv{} is designed for decoding-time KV cache compression, specifically targeting the challenges inherent in long-context generation. 
In contrast to traditional long-context tasks---such as document QA, few-shot learning, summarization, and retrieval \citep{longbench, longbenchv2, ruler}---which are characterized by \textit{long inputs} and \textit{short outputs}, we focus on scenarios featuring \textit{short inputs} and \textit{long outputs}. 

This focus necessitates an evaluation centered on complex mathematical reasoning. Accordingly, we select the \textbf{AIME24} \citep{aime24}, \textbf{AIME25} \citep{aime25}, and \textbf{U-MATH} \citep{umath} benchmarks. 
U-MATH is a university-level dataset derived from real-world academic sources; it remains relatively under-explored in the literature regarding reasoning models. 
We find that U-MATH is particularly effective for evaluating KV compression methods, as its moderate overall accuracy allows for a granular distinction between different levels of attention sparsity. 
For our experimental setup, we filtered the dataset to a subset of 50 non-image-based questions with definitive answers to facilitate objective evaluation. 
While popular benchmarks such as \textbf{MATH-500} \citep{math500} and \textbf{GSM8K} \citep{gsm8k} were considered, they are largely saturated and typically generate fewer than 10k tokens on average. 
All experimental evaluations were conducted using the \texttt{math-verify} \citep{math-verify} framework.


\subsection{About Reproduction}
All experiments were conducted on NVIDIA A100 and NVIDIA RTX 4090 GPUs. The results reported in Table~\ref{tab:fixed_k} and Table~\ref{tab:fixed_p} represent the average of three independent trials for each configuration. For reproducibility, comprehensive guidelines are provided in our \href{https://github.com/SuDIS-ZJU/HARDInfer}{documentation}. Beyond the code repository, we also release full experiment logs to substantiate the reliability of our results and facilitate further analysis of token generation patterns.

\subsection{Further Experiment Results}
\label{app:exp_further}
We present the experimental results for Qwen3-4B in Table~\ref{tab:fixed_k_4b} and Table~\ref{tab:fixed_p_4b}. When compared with the Qwen-8B results (Table~\ref{tab:fixed_k} and Table~\ref{tab:fixed_p}), the performance metrics are approximately equivalent. This similarity can be attributed to the \textbf{architectural congruency} between the two models; specifically, Qwen3-4B and Qwen3-8B share \textbf{identical} attention layer configurations, possessing the same \verb|head_dim|, \verb|num_heads|, \verb|num_kv_heads|, and \verb|num_layers|. Consequently, optimizations targeting the KV cache yield consistent effects across both models. This observation aligns with the official technical report~\citet{qwen3}, which documents comparable performance profiles for these two architectures.

However, our experiments with Qwen3-4B reveal greater volatility in performance metrics. Notably, we observe instances of improved results associated with longer generation lengths. This finding is counter-intuitive, as extended generation lengths typically degrade performance in the context of KV cache eviction. We leave for future investigation the analysis of how non-attention parameters, such as Feed-Forward Networks (FFN), may modulate the efficacy of KV cache compression methods.

\begin{table}[h]
    \centering
    \caption{\textbf{Fixed Budget Performance of Qwen3-4B.} Comparison of baselines vs \hardkv{} across different Top-$k$ budgets (\{1024, 2048, 4096\}). The Top-$p$ budget for our methods is set as 0.90. \textit{Full Attention} shows the performance results without budget constraints.}
    \label{tab:fixed_k_4b}
    \resizebox{0.80\linewidth}{!}{%
    \begin{tabular}{c|l|c|cc|cc|cc}
        \toprule
        \multirow{2}{*}{\textbf{Dataset}} & \multirow{2}{*}{\textbf{Selection}} & \multirow{2}{*}{\textbf{Config}} & \multicolumn{6}{c}{\textbf{Budget Size}} \\
        & \multirow{2}{*}{\textbf{Method}} & & \multicolumn{2}{c|}{\textbf{1024}} & \multicolumn{2}{c|}{\textbf{2048}} & \multicolumn{2}{c}{\textbf{4096}} \\
        & & & \textbf{Acc} & \textbf{Gen. Len} & \textbf{Acc} & \textbf{Gen. Len} & \textbf{Acc} & \textbf{Gen. Len} \\
        \midrule
        \midrule
        
        \multirow{7}{*}{\textbf{AIME24}} 
        & \textbf{Full Attn} & & & &  & & 68.2\% & 14625.3\\
        \cmidrule{2-9}
        & \multirow{2}{*}{Vanilla} & Base & 3.3\% & 32768.0 & 36.7\% & 28846.5 & 56.7\% & 18706.9 \\
        & & \cellcolor{gray!15}\textbf{+ Ours} & \cellcolor{gray!15}\textbf{24.0\%} & \cellcolor{gray!15}\textbf{31790.8} & \cellcolor{gray!15}\textbf{39.9\%} & \cellcolor{gray!15}\textbf{30273.2} & \cellcolor{gray!15}\textbf{36.6\%} & \cellcolor{gray!15}\textbf{30864.9} \\
        \cmidrule{2-9}
        
        & \multirow{2}{*}{SnapKV} & Base & 3.2\% & 32652.4 & 36.0\% & 29044.4 & 36.7\% & 19007.7 \\
        & & \cellcolor{gray!15}\textbf{+ Ours} & \cellcolor{gray!15}\textbf{10.7\%} & \cellcolor{gray!15}\textbf{32768.0} & \cellcolor{gray!15}\textbf{40.2\%} & \cellcolor{gray!15}\textbf{29201.6} & \cellcolor{gray!15}\textbf{55.4\%} & \cellcolor{gray!15}\textbf{20931.6} \\
        \cmidrule{2-9}
        
        & \multirow{2}{*}{RKV} & Base & 3.3\% & 26750.1 & 26.7\% & 26245.9 & 50.0\% & 18967.4 \\
        & & \cellcolor{gray!15}\textbf{+ Ours} & \cellcolor{gray!15}\textbf{23.4\%} & \cellcolor{gray!15}\textbf{31036.7} & \cellcolor{gray!15}\textbf{43.3\%} & \cellcolor{gray!15}\textbf{31788.0} & \cellcolor{gray!15}\textbf{56.5\%} & \cellcolor{gray!15}\textbf{28267.2} \\
        \midrule
        
        \multirow{7}{*}{\textbf{AIME25}} 
        & \textbf{Full Attn} & & & &  & &63.3\% & 17998.7 \\
        \cmidrule{2-9}
        & \multirow{2}{*}{Vanilla} & Base & 6.5\% & 32309.7 & 16.7\% & 26617.5 & 24.8\% & 31851.4 \\
        & & \cellcolor{gray!15}\textbf{+ Ours} & \cellcolor{gray!15}\textbf{10.5\%} & \cellcolor{gray!15}\textbf{30906.7} & \cellcolor{gray!15}\textbf{12.9\%} & \cellcolor{gray!15}\textbf{30837.8} & \cellcolor{gray!15}\textbf{20.7\%} & \cellcolor{gray!15}\textbf{29966.5} \\
        \cmidrule{2-9}
        
        & \multirow{2}{*}{SnapKV} & Base & 10.9\% & 32229.7 & 12.3\% & 30040.4 & 43.3\% & 23654.5 \\
        & & \cellcolor{gray!15}\textbf{+ Ours} & \cellcolor{gray!15}\textbf{9.3\%} & \cellcolor{gray!15}\textbf{32768.0} & \cellcolor{gray!15}\textbf{26.7\%} & \cellcolor{gray!15}\textbf{28254.2} & \cellcolor{gray!15}\textbf{44.3\%} & \cellcolor{gray!15}\textbf{26548.9} \\
        \cmidrule{2-9}
        
        & \multirow{2}{*}{RKV} & Base & 13.7\% & 27924.7 & 21.9\% & 26836.2 & 30.7\% & 20616.6 \\
        & & \cellcolor{gray!15}\textbf{+ Ours} & \cellcolor{gray!15}\textbf{13.2\%} & \cellcolor{gray!15}\textbf{31496.4} & \cellcolor{gray!15}\textbf{20.7\%} & \cellcolor{gray!15}\textbf{31809.6} & \cellcolor{gray!15}\textbf{53.4\%} & \cellcolor{gray!15}\textbf{28202.8} \\
        \midrule

        \multirow{7}{*}{\textbf{UMath}} 
        & \textbf{Full Attn} &  &  & & & & 53.3\% & 10111.5 \\
        \cmidrule{2-9}
        & \multirow{2}{*}{Vanilla} & Base & 14.0\% &28859.6 & 36.7\% & 19021.4 & 44.0\% & 12828.0 \\
        & & \cellcolor{gray!15}\textbf{+ Ours} & \cellcolor{gray!15}\textbf{36.7\%} & \cellcolor{gray!15}\textbf{23695.8} & \cellcolor{gray!15}\textbf{36.2\%} & \cellcolor{gray!15}\textbf{24691.8} & \cellcolor{gray!15}\textbf{38.0\%} & \cellcolor{gray!15}\textbf{24832.7} \\
        \cmidrule{2-9}
        
        & \multirow{2}{*}{SnapKV} & Base &  18.0\% & 29332.5 & 31.8\% & 20651.4 & 44.2\% &14371.1\\
        & & \cellcolor{gray!15}\textbf{+ Ours} & \cellcolor{gray!15}\textbf{22.5\%} & \cellcolor{gray!15}\textbf{29204.8} & \cellcolor{gray!15}\textbf{42.0\%} & \cellcolor{gray!15}\textbf{23663.2} & \cellcolor{gray!15}\textbf{49.4\%} & \cellcolor{gray!15}\textbf{16209.9} \\
        \cmidrule{2-9}
        
        & \multirow{2}{*}{RKV} & Base & 30.1\% &18504.6 & 35.1\% & 17685.8 & 52.0\% & 12825.3 \\
        & & \cellcolor{gray!15}\textbf{+ Ours} & \cellcolor{gray!15}\textbf{20.1\%} & \cellcolor{gray!15}\textbf{31351.6} & \cellcolor{gray!15}\textbf{42.0\%} & \cellcolor{gray!15}\textbf{23663.2} & \cellcolor{gray!15}\textbf{50.3\%} & \cellcolor{gray!15}\textbf{23156.9} \\
        \bottomrule
    \end{tabular}%
    }
\end{table}

\begin{table}[h]
    \centering
    \caption{\textbf{Dynamic Budget Performance of Qwen3-4B.} Comparison of baselines vs \hardkv{}  across different Top-$p$ budgets (\{P70, P80, P90\}). The Top-$k$ budget is 4096.}
    \label{tab:fixed_p_4b}
    \resizebox{0.80\linewidth}{!}{%
    \begin{tabular}{c|l|cc|cc|cc|cc}
        \toprule
        \multirow{3}{*}{\textbf{Dataset}} & \multirow{3}{*}{\textbf{Method}} & \multicolumn{2}{c|}{\textbf{Base}} & \multicolumn{6}{c}{\textbf{Top-$p$ budget}} \\
        & & \multirow{2}{*}{\textbf{Acc}} & \multirow{2}{*}{\textbf{Gen. Len}} & \multicolumn{2}{c|}{\textbf{p70}} & \multicolumn{2}{c|}{\textbf{p80}} & \multicolumn{2}{c}{\textbf{p90}} \\
        & & & & \textbf{Acc} & \textbf{Gen. Len} & \textbf{Acc} & \textbf{Gen. Len} & \textbf{Acc} & \textbf{Gen. Len} \\
        \midrule
        \midrule
        
        \multirow{3}{*}{\textbf{AIME24}} 
        & Vanilla & 56.7\% & 18709.9 & \cellcolor{gray!15}\textbf{28.5\%} & \cellcolor{gray!15}\textbf{31004.0} & \cellcolor{gray!15}\textbf{22.0\%} & \cellcolor{gray!15}\textbf{31453.0} & \cellcolor{gray!15}\textbf{36.6\%} & \cellcolor{gray!15}\textbf{30864.9} \\
        & SnapKV & 36.7\% & 19007.7 & \cellcolor{gray!15}\textbf{50.0\%} & \cellcolor{gray!15}\textbf{23631.7} & \cellcolor{gray!15}\textbf{56.7\%} & \cellcolor{gray!15}\textbf{29990.4} & \cellcolor{gray!15}\textbf{66.7\%} & \cellcolor{gray!15}\textbf{21796.8} \\
        & RKV & 50.0\% & 18967.4 & \cellcolor{gray!15}\textbf{53.3\%} & \cellcolor{gray!15}\textbf{27802.4} & \cellcolor{gray!15}\textbf{51.5\%} & \cellcolor{gray!15}\textbf{29152.1} & \cellcolor{gray!15}\textbf{56.5\%} & \cellcolor{gray!15}\textbf{28267.2} \\
        \midrule
        
        \multirow{3}{*}{\textbf{AIME25}} 
        & Vanilla & 24.8\% & 31851.4 & \cellcolor{gray!15}\textbf{17.2\%} & \cellcolor{gray!15}\textbf{29977.0} & \cellcolor{gray!15}\textbf{26.7\%} & \cellcolor{gray!15}\textbf{30941.2} & \cellcolor{gray!15}\textbf{20.7\%} & \cellcolor{gray!15}\textbf{29966.5} \\
        & SnapKV & 43.3\% & 23654.5 & \cellcolor{gray!15}\textbf{50.0\%} & \cellcolor{gray!15}\textbf{22225.1} & \cellcolor{gray!15}\textbf{40.0\%} & \cellcolor{gray!15}\textbf{27711.6} & \cellcolor{gray!15}\textbf{44.3\%} & \cellcolor{gray!15}\textbf{26548.9} \\
        & RKV & 30.7\% & 20616.6 & \cellcolor{gray!15}\textbf{36.9\%} & \cellcolor{gray!15}\textbf{28183.2} & \cellcolor{gray!15}\textbf{40.3\%} & \cellcolor{gray!15}\textbf{28214.1} & \cellcolor{gray!15}\textbf{53.4\%} & \cellcolor{gray!15}\textbf{28202.8} \\
        \midrule

        \multirow{3}{*}{\textbf{UMath}} 
        & Vanilla & 44.0\% & 12828.0 & \cellcolor{gray!15}\textbf{42.0\%} & \cellcolor{gray!15}\textbf{25278.8} & \cellcolor{gray!15}\textbf{39.3\%} & \cellcolor{gray!15}\textbf{25729.9} & \cellcolor{gray!15}\textbf{38.0\%} & \cellcolor{gray!15}\textbf{24832.7} \\
        & SnapKV & 44.2\% & 14371.1 & \cellcolor{gray!15}\textbf{44.6\%} & \cellcolor{gray!15}\textbf{15291.4} & \cellcolor{gray!15}\textbf{52.3\%} & \cellcolor{gray!15}\textbf{16546.8} & \cellcolor{gray!15}\textbf{49.4\%} & \cellcolor{gray!15}\textbf{16209.9} \\
        & RKV & 52.0\% & 12825.3 & \cellcolor{gray!15}\textbf{50.0\%} & \cellcolor{gray!15}\textbf{22459.1} & \cellcolor{gray!15}\textbf{52.0\%} & \cellcolor{gray!15}\textbf{22564.0} & \cellcolor{gray!15}\textbf{50.3\%} & \cellcolor{gray!15}\textbf{23156.9} \\
        \bottomrule
    \end{tabular}%
    }
\end{table}

We further report results for Qwen3-32B in Table~\ref{tab:fixed_k_32b_4096}. Serving larger-scale models poses substantial system challenges: on a single GPU, such models leave less than 20GB for the KV cache (e.g., Qwen3-32B on a single NVIDIA A100 80GB GPU), making multi-GPU parallelism necessary. We adopt Tensor Parallelism (TP), rather than Pipeline Parallelism, which introduces pipeline bubbles, or Expert Parallelism, which is incompatible with CUDA Graphs.

Since TP naturally partitions attention heads across instances, each instance performs head-wise KV eviction independently. To minimize communication overhead, our customized \nanovllm{} engine implements two key optimizations:

\begin{itemize}
\item \textbf{Asynchronous mask management.} We avoid synchronizing head-wise sparse masks across instances by managing them on the CPU, thereby preventing frequent CPU--GPU synchronization stalls.

\item \textbf{Localized metadata.} Sequence-level metadata, such as head-wise block tables, is stored locally within each instance, eliminating cross-instance serialization and communication overhead.
\end{itemize}

As a result, the only required communication is a one-time gather of head counts per sequence, which introduces negligible overhead compared to non-parallel configurations.

\begin{table}[h]
    \centering
    \caption{\textbf{Fixed Budget Performance of Qwen3-32B (Budget: 4096).} Transposed comparison of baselines vs \hardkv{} for a Top-$k$ budget of 4096 across datasets. The Top-$p$ budget for our methods is set as 0.90. \textit{Full Attention} shows performance results without budget constraints.}
    \label{tab:fixed_k_32b_4096}
    \resizebox{0.80\linewidth}{!}{%
    \begin{tabular}{l|c|cc|cc|cc}
        \toprule
        \textbf{Selection} & \multirow{2}{*}{\textbf{Config}} & \multicolumn{2}{c|}{\textbf{AIME24}} & \multicolumn{2}{c|}{\textbf{AIME25}} & \multicolumn{2}{c}{\textbf{UMath}} \\
        \textbf{Method} & & \textbf{Acc} & \textbf{Gen. Len} & \textbf{Acc} & \textbf{Gen. Len} & \textbf{Acc} & \textbf{Gen. Len} \\
        \midrule
        \midrule
        
        \textbf{Full Attn} & & 76.7\% & 13104.0 & 70.0\% & 16852.0 & 56.0\% & 9169.0 \\
        \midrule
        
        \multirow{2}{*}{Vanilla} & Base & 40.0\% & 25918.0 & 33.3\% & 28701.0 & 48.0\% & 22183.0 \\
        & \cellcolor{gray!15}\textbf{+ Ours} & \cellcolor{gray!15}\textbf{50.0\%} & \cellcolor{gray!15}\textbf{22164.0} & \cellcolor{gray!15}\textbf{43.3\%} & \cellcolor{gray!15}\textbf{24474.0} & \cellcolor{gray!15}\textbf{52.0\%} & \cellcolor{gray!15}\textbf{14711.0} \\
        \midrule
        
        \multirow{2}{*}{SnapKV} & Base & 53.3\% & 24862.0 & 33.3\% & 24256.0 & 50.0\% & 21097.0 \\
        & \cellcolor{gray!15}\textbf{+ Ours} & \cellcolor{gray!15}\textbf{60.0\%} & \cellcolor{gray!15}\textbf{21549.0} & \cellcolor{gray!15}\textbf{46.7\%} & \cellcolor{gray!15}\textbf{22901.0} & \cellcolor{gray!15}\textbf{56.0\%} & \cellcolor{gray!15}\textbf{21778.0} \\
        \midrule
        
        \multirow{2}{*}{RKV} & Base & 60.0\% & 19911.0 & 40.0\% & 21275.0 & 50.0\% & 10537.0 \\
        & \cellcolor{gray!15}\textbf{+ Ours} & \cellcolor{gray!15}\textbf{66.7\%} & \cellcolor{gray!15}\textbf{17030.0} & \cellcolor{gray!15}\textbf{50.0\%} & \cellcolor{gray!15}\textbf{21427.0} & \cellcolor{gray!15}\textbf{52.0\%} & \cellcolor{gray!15}\textbf{10123.0} \\
        \bottomrule
    \end{tabular}%
    }
\end{table}

\section{Further Discussions on logits Calibration}
\label{app:formal_logis_cali}
\subsection{Algorithms for solving Problem~\ref{p:1} under order-invariance constraint}
\label{app:algo_code}

\begin{algorithm}[h]
\caption{Gradient Descent on $T$}
\label{algo:gd}

\SetKwInOut{Parameters}{Parameters}
\SetKwInOut{Optimizers}{Optimizers}
\SetKwInOut{Initialize}{Initialize}

\Parameters{
$k$: for Top-$k$; $p$: for Top-$p$; \\
$T$: temperatures with shape \\ 
$[bsz \times num\_kv\_heads, query\_len]$
}

\Optimizers{
$Optimizer$ (e.g. AdamW); \\
}

\Initialize{$T$, $Optimizer$}

$k \gets Num\_Selected\_TopP(S, p)$

\While{$\Vert M_k(S) - M_k(\hat{S})\Vert < \mathcal{E}$ \textbf{and} $i \leq N$} 
{
    
    \vspace{0.2em}
    \noindent
    \begin{minipage}{0.45\linewidth}
        \centering
        \textbf{Solution 1}
        \begin{equation*}
             \begin{aligned}
             \hat{z} &\gets g(z/T) \\
             \hat{S} &= \operatorname{Softmax}(\hat{z})
             \end{aligned}
        \end{equation*}
    \end{minipage}%
    \vrule{} 
    \begin{minipage}{0.45\linewidth}
        \centering
        \textbf{Solution 2}
        \begin{equation*}
             \begin{aligned}
             \hat{z} &\gets g(z) \\
             \hat{S} &= \operatorname{Softmax}(\hat{z}/T)
             \end{aligned}
        \end{equation*}
    \end{minipage}
    \vspace{0.2em}
    
    $\ell \gets \Vert M_s(S) - M_k(\hat{S})\Vert$\;
    Update $T$ via $Optimizer$\;
}

\end{algorithm}

\begin{algorithm}[!h]
\setcounter{AlgoLine}{0}
\caption{Binary Search on $T$}
\label{algo:binary}
\SetKwInOut{Parameters}{Parameters}
\SetKwInOut{Initialize}{Initialize}
\SetKwRepeat{Do}{do}{while}
\SetKwComment{Comment}{$\triangleright$\ }{}

\Parameters{
    $k$: for Top-$k$; $p$: for Top-$p$; \\
    $T$: temperatures $[bsz \times num\_kv\_heads, query\_len]$
}

\Initialize{$T_{min}$, $T_{max}$}

$k \gets Num\_Selected\_TopP(S, p)$\;

\Do{ $\Vert M_k(S) - M_k(\hat{S})\Vert < \mathcal{E}$ \textbf{and} $\operatorname{ALL}(T_{min} < T_{max})$ }
{
    $T_{mid} \gets \frac{T_{min} + T_{max}}{2}$\;
    
    \vspace{0.5em}
    \noindent
    \begin{minipage}[t]{0.45\linewidth}
        \centering
        \textbf{Solution 1}
        \begin{equation*}
            \begin{aligned}
                \hat{z} &\gets g(z/T_{mid}) \\
                \hat{S} &\gets \operatorname{Softmax}(\hat{z})\\
                T_{min} &\gets \operatorname{WHERE}(M_k(S) < M_k({\hat{S}}), T_{mid}, T_{min})
            \end{aligned}
        \end{equation*}
    \end{minipage}%
    \hfill \vrule \hfill 
    \begin{minipage}[t]{0.45\linewidth}
        \centering
        \textbf{Solution 2}
        \begin{equation*}
            \begin{aligned}
                \hat{z} &\gets g(z) \\
                \hat{S} &\gets \operatorname{Softmax}(\hat{z}/T_{mid}) \\
                T_{max} &\gets \operatorname{WHERE}(M_k(S) < M_k({\hat{S}}), T_{max}, T_{mid})
            \end{aligned}
        \end{equation*}
    \end{minipage}
    \vspace{0.5em}
}
\end{algorithm}

\begin{wrapfigure}{r}{0.33\textwidth} 
    \centering
    \vspace{-1.0cm}
    \includegraphics[width=\linewidth]{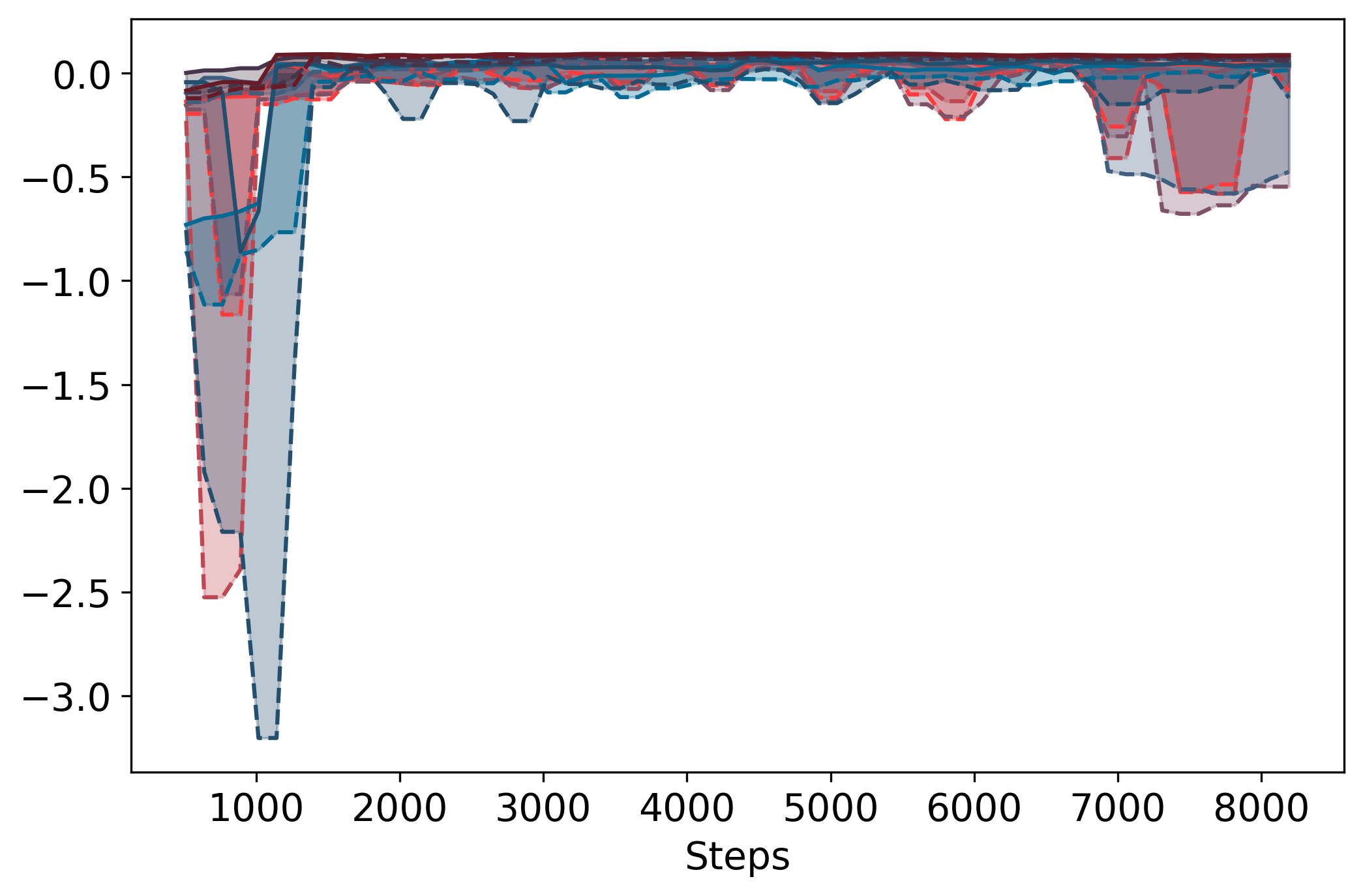}
    \caption{An example of solved temperatures for \rkv{}. The solution of temperatures are relatively stable despite exceptional failures.}
    \label{fig:temp_solution}
    \vspace{-0.3cm}
\end{wrapfigure}
In this section, we will provide the two choices of algorithms to solve Problem~\ref{p:1} under the order-invariance Constraint~\ref{st:order-invariance}.  

Algorithm~\ref{algo:gd} uses Gradient Descend to optimize temperatures $T$ as parameters. This algorithm treats the Problem~\ref{p:1} as an optimization problem that can be solved by modern optimizers~\citep{adam, adamw, sgd}. In practice, we prefer Adam~\citep{adam} which converge quicker and have stable performances across different settings.  

Algorithm~\ref{algo:binary} views Problem~\ref{p:1} as a searching problem, when the transformation function satisfy certain constraints (like being concave), multiplying logits $z$ by $\frac{1}{T}$ yields sharpened distribution for lower $T$ and flattened distribution for higher $T$. The sharpened/flattened distribution can be this way shaped to satisfy the mass-preserving in Problem~\ref{p:1}

In practice, we apply Algorithm~\ref{algo:gd} for both \snapkv{} and \rkv{}. An example of solved temperatures $T$ is shown in Figure~\ref{fig:temp_solution}.
The results is largely constant in different decoding steps, while failure cases exist because of numerical instability and the illness of the problem. 
For this reason, we only solve the temperatures for the first compression in each request,  since solving temperatures in every decoding step is not necessary.

\subsection{Proof of order-preserving temperature transformation}
\label{app:proof_order_preserving}
In this part, will prove that when applying function (like max-pool) on the raw logits $z$ do not affect the Top-$k$ selection when applied on the attention score $S$ (order-preserving), we can shape the attention distribution by directly conduct temperature transformation $z/T$ without changing the selection results. 
Let $z$ denote the attention logits and $S(\cdot) = \operatorname{Softmax}(\cdot)$. We consider a pooling or aggregation function $g(\cdot)$ (such as $\operatorname{MaxPool}$) applied to chunks of the input. We define $\hat{S}$ as the altered attention score derived from the logits.

\begin{definition}[Monotonic Consistency]
Let $g(\cdot)$ be a function such that the ranking of its outputs is preserved under element-wise monotonic transformation. Specifically, $g$ satisfies the following condition:
\begin{condition}
\label{cond:order-preserving}
    For any two inputs $z_{(1)}$ and $z_{(2)}$, if $S\left(g(z_{(1)})\right) \ge S\left(g(z_{(2)})\right)$, then:
    \[ g(S(z_{(1)})) \ge g(S(z_{(2)})) \]
\end{condition}
This condition holds for order-statistic functions such as $\operatorname{MaxPool}$, $\operatorname{MinPool}$, and $\operatorname{Median}$, but not necessarily for arithmetic means.
\end{definition}

\begin{lemma}
\label{lemma:temperature-monotonicity}
    For any temperature $T > 0$, the scaling function $f(x) = x/T$ is strictly monotonically increasing. Consequently, for any $a, b \in \mathbb{R}$:
    \[ a \ge b \iff a/T \ge b/T \]
\end{lemma}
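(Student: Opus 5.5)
The lemma is elementary, so the plan is a direct two-direction argument from the ordered-field axioms of $\mathbb{R}$; no earlier result of the paper is needed.

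\textbf{Step 1 (strict monotonicity).} Fix $T>0$. Its reciprocal is also positive: if $1/T \le 0$ we would get $1 = T\cdot(1/T) \le 0$, a contradiction. Now take $a>b$. Then $a-b>0$, and the product of two positive reals is positive, so
\[
f(a)-f(b) = \frac{a-b}{T} > 0 .
\]
Hence $f(x)=x/T$ is strictly increasing.

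\textbf{Step 2 (forward direction).} Suppose $a \ge b$. If $a=b$, then $a/T=b/T$. If $a>b$, Step 1 gives $a/T > b/T$. In either case $a/T \ge b/T$.

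\textbf{Step 3 (reverse direction).} Suppose $a/T \ge b/T$. The map $y \mapsto yT$ is the inverse of $f$, and it is strictly increasing by the same argument as Step 1, since $T>0$. Applying it gives $a = (a/T)\,T \ge (b/T)\,T = b$.

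Equivalently, one can argue by contrapositive: if $a<b$, Step 1 gives $a/T<b/T$, which contradicts the hypothesis.

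\textbf{Step 4 (how it is used later).} Steps 2 and 3 together establish the biconditional $a \ge b \iff a/T \ge b/T$. Applied coordinatewise to a logit vector $z$, this says that $z/T$ induces exactly the same ordering of tokens as $z$. This is the fact needed downstream, where it is combined with Condition~\ref{cond:order-preserving} and the monotonicity of $\operatorname{Softmax}$ to conclude that temperature rescaling leaves the Top-$k$ selection unchanged.

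\textbf{Main obstacle.} There is no real obstacle. The only point requiring care is that $T>0$ is essential: if $T<0$ the inequality reverses, and $T=0$ is undefined. So the positivity of $T$, and hence of $1/T$, must be stated explicitly and used in both directions.
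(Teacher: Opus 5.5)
Your proof is correct and takes the same approach as the paper, which simply appeals to the definition of a strictly increasing function and the positivity of $T$. Your version spells out the ordered-field details the paper leaves implicit: $1/T>0$, $(a-b)/T>0$, and both directions of the biconditional.
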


\begin{proof}
    The proof follows from the definition of a strictly increasing function on strictly positive denominators.
\end{proof}

\begin{theorem}
    Let $\operatorname{TopK}(\cdot)$ be the operator that returns the set of indices corresponding to the $k$ largest values. If $g(\cdot)$ satisfies Condition~\ref{cond:order-preserving}, then the selection of key indices is invariant to temperature scaling $T > 0$ and the order of Softmax application. Formally:
    \[ \hat{S}\vert_k \equiv g(z)\vert_k \]
    where $\hat{S} = S(g(z/T))$, $\vert_k$ represent the Top-$k$ subset. 
\end{theorem}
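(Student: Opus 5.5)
The plan is to show that the Top-$k$ index set is determined purely by the ordering of the entries. Every map applied between $g(z)$ and $\hat{S}=S(g(z/T))$ will either preserve that ordering or commute with $g$ in a way that preserves it. I would formalise $\operatorname{TopK}$ as a function of the induced total preorder on indices, with a fixed tie-breaking rule (e.g.\ smallest index first). Then $\operatorname{TopK}(u)=\operatorname{TopK}(v)$ whenever $u_i\ge u_j \iff v_i\ge v_j$ for all pairs $i,j$. The whole proof thereby reduces to exhibiting a chain of order-equivalences from $g(z)$ to $\hat{S}$.

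\textbf{Step 1: softmax is order-preserving.} For a fixed input vector, $S(x)_i = e^{x_i}/\sum_j e^{x_j}$ shares one positive denominator across all $i$, and $\exp$ is strictly increasing. Hence $S(x)_i \ge S(x)_j \iff x_i \ge x_j$. Applied to $x=g(z/T)$, this gives $\hat{S}\vert_k = g(z/T)\vert_k$.

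\textbf{Step 2: pass the temperature through $g$.} By Lemma~\ref{lemma:temperature-monotonicity}, $z\mapsto z/T$ is an element-wise strictly increasing map. I then need $g(z/T)_i \ge g(z/T)_j \iff g(z)_i \ge g(z)_j$. The intended route is Condition~\ref{cond:order-preserving}, read as saying that $g$ commutes in ranking with element-wise monotone transformations (this is the content of ``Monotonic Consistency''). For order-statistic $g$ such as $\operatorname{MaxPool}$, this is immediate: $\max_{c}(z_c/T) = (\max_c z_c)/T$, so $g(z/T)=g(z)/T$ exactly. Lemma~\ref{lemma:temperature-monotonicity} then gives the ranking equivalence. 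I would state this commuting identity for the general case as the consequence of the Condition: apply it with the monotone map $x\mapsto x/T$ in place of $S$, chunk-wise and in both directions ($T$ and $1/T$) to obtain an ``iff''.

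\textbf{Step 3: compose.} Chaining Steps 1 and 2 gives $\hat{S}\vert_k = g(z/T)\vert_k = g(z)\vert_k$. The ``order of Softmax application'' clause follows the same way. Condition~\ref{cond:order-preserving} lets one compare $g(S(z))$ with $S(g(z))$ rankings, and Step 1 makes $S(g(z))$ order-equivalent to $g(z)$. So $g(S(z/T))\vert_k$ also coincides with $g(z)\vert_k$.

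The main obstacle is Step 2. As literally stated, Condition~\ref{cond:order-preserving} is phrased for $S$ and compares pairs of inputs $z_{(1)},z_{(2)}$ rather than pairs of output coordinates of a single input. Moreover, $S$ is not element-wise, because of its normalisation. I would handle this in two ways. First, I would note that within a single vector the softmax normaliser is a common positive constant, so $S$ acts element-wise as an increasing map up to that constant. Second, I would interpret the Condition coordinatewise, with $z_{(1)},z_{(2)}$ the chunks feeding output positions $i$ and $j$. If this reading proves too weak for general $g$, I would fall back to proving the theorem for order-statistic $g$ directly, via the identity $g(\phi(z))=\phi(g(z))$ for every increasing element-wise $\phi$. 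This covers $\operatorname{MaxPool}$, $\operatorname{MinPool}$ and $\operatorname{Median}$, the cases the paper actually uses.
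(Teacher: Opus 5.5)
Your chain of order-equivalences is the paper's own: monotonicity of softmax, pushing $1/T$ through $g$, and Condition~\ref{cond:order-preserving} used only for the ``order of Softmax application'' clause. You have merely swapped the first two steps. The obstacle you flag in Step~2 is real, and your first route does not get past it. Condition~\ref{cond:order-preserving} constrains $g$ only against the specific map $S$, whose outputs lie in $(0,1]$. It therefore says nothing about how $g$ behaves on the rescaled logits $z/T$, and substituting $x\mapsto x/T$ for $S$ is not licensed.

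In fact, as literally stated, the theorem is false. Take chunks of size one, and let $g(x)=1$ for $x\in(0,1]$ and $g(x)=|x|$ otherwise. Then $g(S(\cdot))\equiv 1$, so the Condition holds trivially. Now take $z=(0.5,-0.9)$, $T=1/2$, $k=1$. Then $g(z)=(1,0.9)$, whose Top-$1$ index is the first. But $g(z/T)=(1,1.8)$, so $\hat S=S(g(z/T))$ has Top-$1$ index the second. Conversely, the arithmetic mean is positively homogeneous, so it passes your Step~2, yet it violates the Condition. The Condition is simply not the hypothesis that the identity $\hat S\vert_k\equiv g(z)\vert_k$ needs.

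Hence your fallback is mandatory, not optional. The proof is complete only under one of two stronger hypotheses. The first is the property stated in prose in the Definition: the ranking of $g$ is preserved under every element-wise increasing map, in particular $x\mapsto x/T$ for both $T$ and $1/T$. The second is that $g$ is an order statistic, where $g(\phi(z))=\phi(g(z))$ holds exactly. The paper's Step~1 has precisely the same gap, bridged only by its parenthetical ``in the case of MaxPool, or generally preserves rank''. Once the hypothesis is strengthened in this way, your Steps~1--2 and the paper's proof coincide.

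A smaller point concerns your Step~3. The Condition is a one-way implication: it gives $g(z)_i\ge g(z)_j\Rightarrow g(S(z))_i\ge g(S(z))_j$, but not the ``iff'' your Top-$k$ lemma requires. So for general $g$ the softmax-order clause holds only up to ties in $g(S(z))$; in the example above, all entries tie. For order statistics the problem disappears. On a single vector, $S$ acts as the increasing map $x\mapsto e^x/\sum_j e^{z_j}$, so the rankings of $g(S(z))$ and $g(z)$ agree exactly.
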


\begin{proof}
    We aim to show that the ranking of elements in $\hat{S}$ is identical to the ranking of elements in $g(z)$.
    
    Consider two arbitrary indices $(1)$ and $(2)$ such that the pooled logit of $(1)$ is greater than or equal to $(2)$:
    \[ g(z_{(1)}) \ge g(z_{(2)}) \]
    
    \noindent \textbf{Step 1: Invariance to Temperature (Lemma~\ref{lemma:temperature-monotonicity})} \\
    Since $T > 0$, applying Lemma~\ref{lemma:temperature-monotonicity}:
    \[ g(z_{(1)})/T \ge g(z_{(2)})/T \implies g(z_{(1)}/T) \ge g(z_{(2)}/T) \]
    (Note: Since $g$ is order-preserving, $g(cx) = c g(x)$ for $c>0$ in the case of MaxPool, or generally preserves rank).
    
    \noindent \textbf{Step 2: Monotonicity of Softmax} \\
    The Softmax function $S(\cdot)$ is strictly monotonic with respect to its input arguments. Therefore:
    \[ g(z_{(1)}/T) \ge g(z_{(2)}/T) \iff S\left(g(z_{(1)}/T)\right) \ge S\left(g(z_{(2)}/T)\right) \]
    Let $\hat{S} = S(g(z/T))$. We have established that the order of $\hat{S}$ is determined strictly by the order of $g(z)$.
    
    \noindent \textbf{Step 3: Generalization via Condition~\ref{cond:order-preserving}} \\
    Using Condition~\ref{cond:order-preserving}, we ensure that this ranking is consistent with the "true" attention distribution (had Softmax been applied before pooling). Since $S(g(\cdot))$ preserves the order of $g(\cdot)$, and Condition~\ref{cond:order-preserving} links this to $g(S(\cdot))$, we conclude that identifying the Top-K elements in the computationally efficient proxy $\hat{S}$ yields the same indices as the original computation.
    
    Thus, the Top-K indices remain invariant.
\end{proof}

\section{The Formal Definitions of the Three-step Framework for KV Selection}
\label{app:def_3_step}

\begin{figure*}[t]
  \centering
  \includegraphics[width=\linewidth]{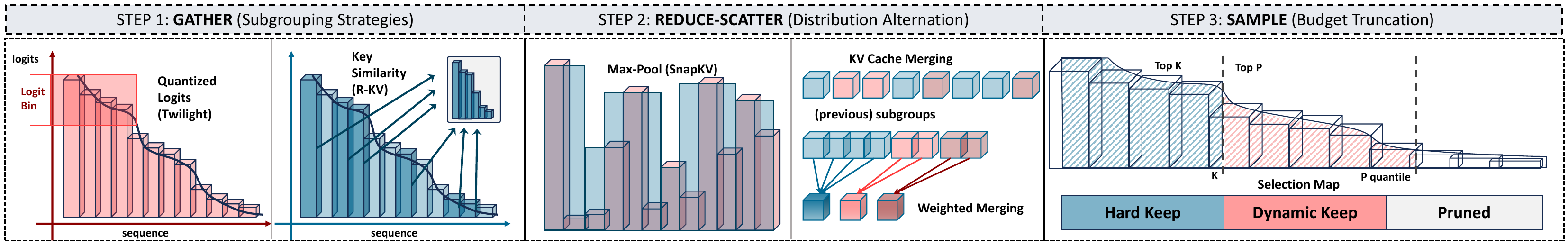}
  \caption{(Step 1) \textbf{Gather} the KV cache in subgroups along the sequence for following computation; (Step 2) \textbf{Reduce} in the subgroup to calculate ranking score and \textbf{scatter} to the post-compressed blocks; (Step 3) \textbf{Sample} by the scattered score to satisfy Top-$k$ or Top-$p$ constraints.}
  \label{fig:3_step}
\end{figure*}

Let $S_{(i)} = S(\mathbf{q}, \mathbf{k}_{(i)})$ denote the attention score for a query $\mathbf{q}$ and key $\mathbf{k}_{(i)}$. The framework proceeds as follows:

\textbf{Gather}: We define a set of indices $G_i$ based on a grouping function $f$ (e.g., quantization or similarity) and a corresponding equivalence relation:
\begin{equation}
    G_i = \{j \mid f(k_j, S_i) \sim f(k_i, S_i)\}.
\end{equation}

\textbf{Reduce-Scatter}: We calculate a proxy score $\hat{S}_i$ for the group and scatter it to the associated indices:
\begin{equation}
\begin{aligned}
    &\hat{S}_i = \operatorname{REDUCE}(\{ S(q, k_j) \mid j \in G_i \}) \\
    &\operatorname{SCATTER}(S_j := \hat{S}_i \mid j \in \hat{G}_i).
\end{aligned}
\end{equation}

Note that the reduction function may operate on the keys $k_i$, necessitating the re-computation of $\hat{S}_i$. In KV cache compression scenarios, the original subgroup $G_i$ need not satisfy $\hat{G}_i \equiv G_i$, where $\hat{G}_i$ represents a compressed subgroup (occupying less memory).

\textbf{Sample}: The final set of indices $\mathcal{I}_{\text{final}}$ is determined by:
\begin{equation}
    \mathcal{I}_{\text{final}} = \operatorname{SELECT}(\{ \hat{S}_i \}),
\end{equation}
where $\operatorname{SELECT} \in \{\text{Top-}k, \text{Top-}p\}$.

We perform logit calibrations (Section~\ref{ssec:calibration}) after Step 2, \textbf{Reduce-Scatter}, which solve the Problem~\ref{p:1} by any one of the above Algorithm~\ref{algo:gd} and~\ref{algo:binary}. The solution process, can be alternatively conducted in subgroups generated in Step 1, \textbf{Gather}, bringing better efficiency in optimizing on temperatures $T$. For the reason stated in Appendix~\ref{app:algo_code}, we reuse the temperatures solved in the first compression step. 

\section{Mathematical Abstraction of Cascade Attention}
\label{app:cascade_attn}
To seamlessly compute attention across these physically separated tiers, we can utilize a composable {Attention State} algebra proposed by Flashinfer~\citep{flashinfer} originally designed for Prefix Caching~\citep{cascade}.

Recall that standard self-attention for a query $\mathbf{q}$ and a set of KV pairs indexed by $I$ is computed as:
\begin{equation}
\label{eq:standard_attn}
\text{Attention}(\mathbf{q}, I) = \frac{\sum_{i \in I} \exp(\mathbf{q}\mathbf{k}_{(i)}^{\top}) \mathbf{v}_{(i)}}{\sum_{j \in I} \exp(\mathbf{q}\mathbf{k}_{(j)}^{\top})}.
\end{equation}

The denominator represents the total attention mass, which we term the \emph{Sum of Exponentials} ($\SE(I)$). The numerator represents the unnormalized weighted sum of value vectors. $\mathbf{o}(I)$ is defined as the attention result restricted solely to the subset $I$:

\begin{equation}
\mathbf{o}(I) = \sum_{i \in I} \operatorname{Softmax}(\SE_{(i)})\mathbf{v}_{(i)} = \frac{\sum_{i \in I} \SE_{(i)}\mathbf{v}_{(i)}}{\SE(I)},
\end{equation}

We define the \textbf{Attention State} of a tier $I$ as the tuple $\begin{bmatrix} \mathbf{o}(I) \\ \SE(I) \end{bmatrix}$, which fully encapsulates the partial computation of $I$. 
To aggregate results from multiple tiers (e.g., a Sparse Tier $I$ and a Dense Tier $J$), a binary \textbf{Merge Operator} $\oplus$ can be applied to merge the partial outputs for the complete outputs:
\begin{equation}
\begin{aligned}
\begin{bmatrix} \mathbf{o}(I \cup J) \\ \SE(I \cup J) \end{bmatrix} &= \begin{bmatrix} \mathbf{o}(I) \\ \SE(I) \end{bmatrix} \oplus \begin{bmatrix} \mathbf{o}(J) \\ \SE(J) \end{bmatrix} \\
&= \begin{bmatrix} \frac{\mathbf{o}(I) \SE(I) + \mathbf{o}(J) \SE(J)}{\SE(I) + \SE(J)} \\ \SE(I) + \SE(J) \end{bmatrix}.
\end{aligned}
\end{equation}

\section{A Case Study on Attention Property-Aware KV Compression on the Calibrated Logits}
\label{app:lse_preserving}
The \emph{Logits Calibration} proves to be capable of preserving the density of 
selection heatmap. In this section, we will provide another example that preserves Attention property (like density), built on top of \emph{Logits Calibration}. 

\paragraph{$\LSE$-preserving KV Cache Merging}
\begin{figure*}[t]
    \centering
    \begin{subfigure}[h]{\linewidth}
        \includegraphics[width=\linewidth]{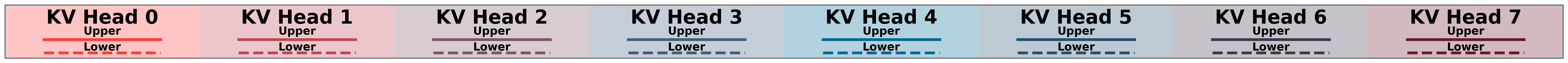}
    \end{subfigure}
    \centering
    \begin{subfigure}[t]{0.24\linewidth}
        \includegraphics[width=\linewidth]{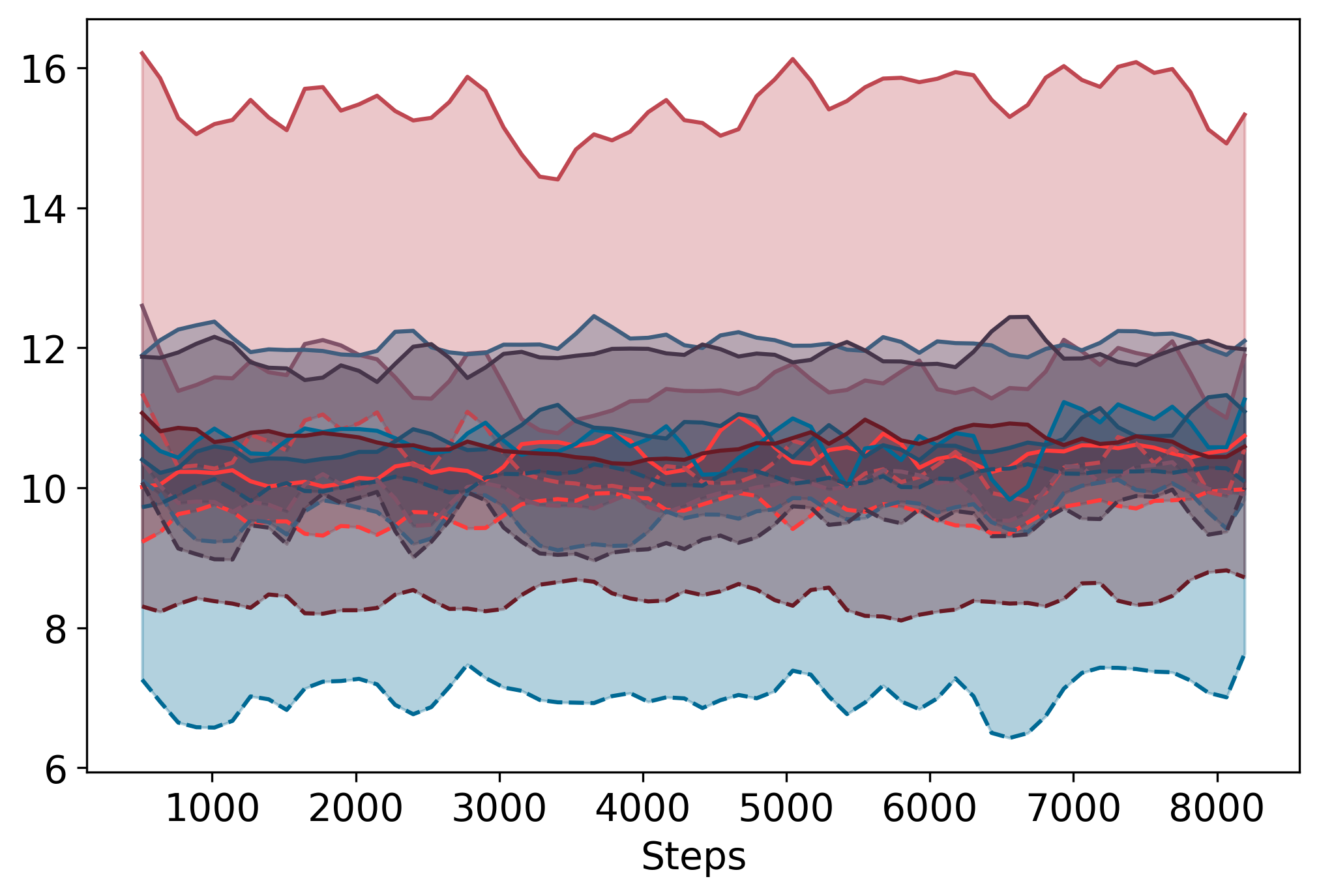}
        \caption{$\LSE$ for raw attention scores.} 
    \end{subfigure}%
    \hfill
    \begin{subfigure}[t]{0.24\linewidth} 
        \includegraphics[width=\linewidth]{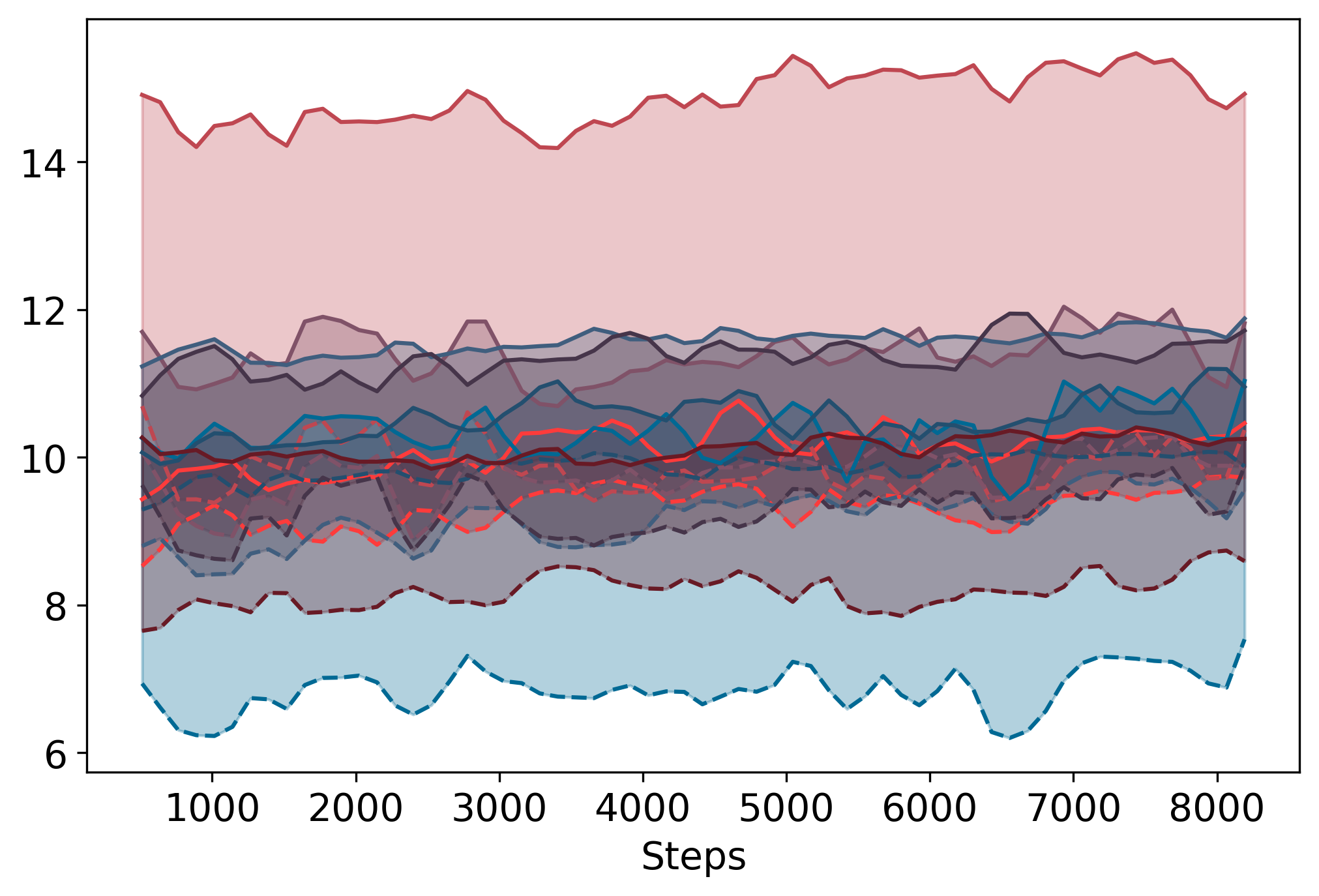}
        \caption{P60 $\LSE$ for raw attention scores.} 
    \end{subfigure}%
    \hfill
    \begin{subfigure}[t]{0.24\linewidth} 
        \includegraphics[width=\linewidth]{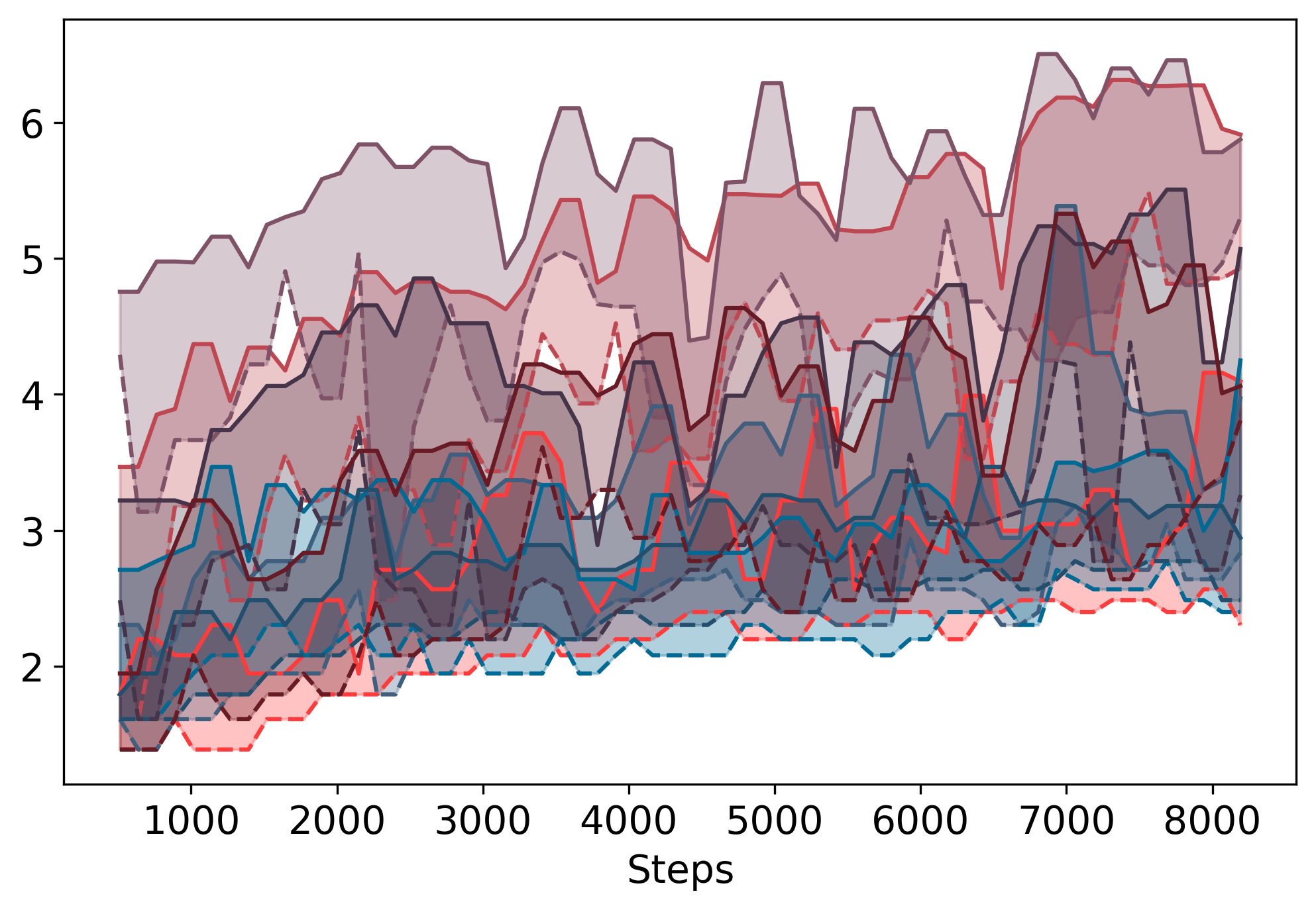}
        \caption{P60 log number of selected tokens (raw).} 
    \end{subfigure}%
    \hfill
    \begin{subfigure}[t]{0.24\linewidth} 
        \includegraphics[width=\linewidth]{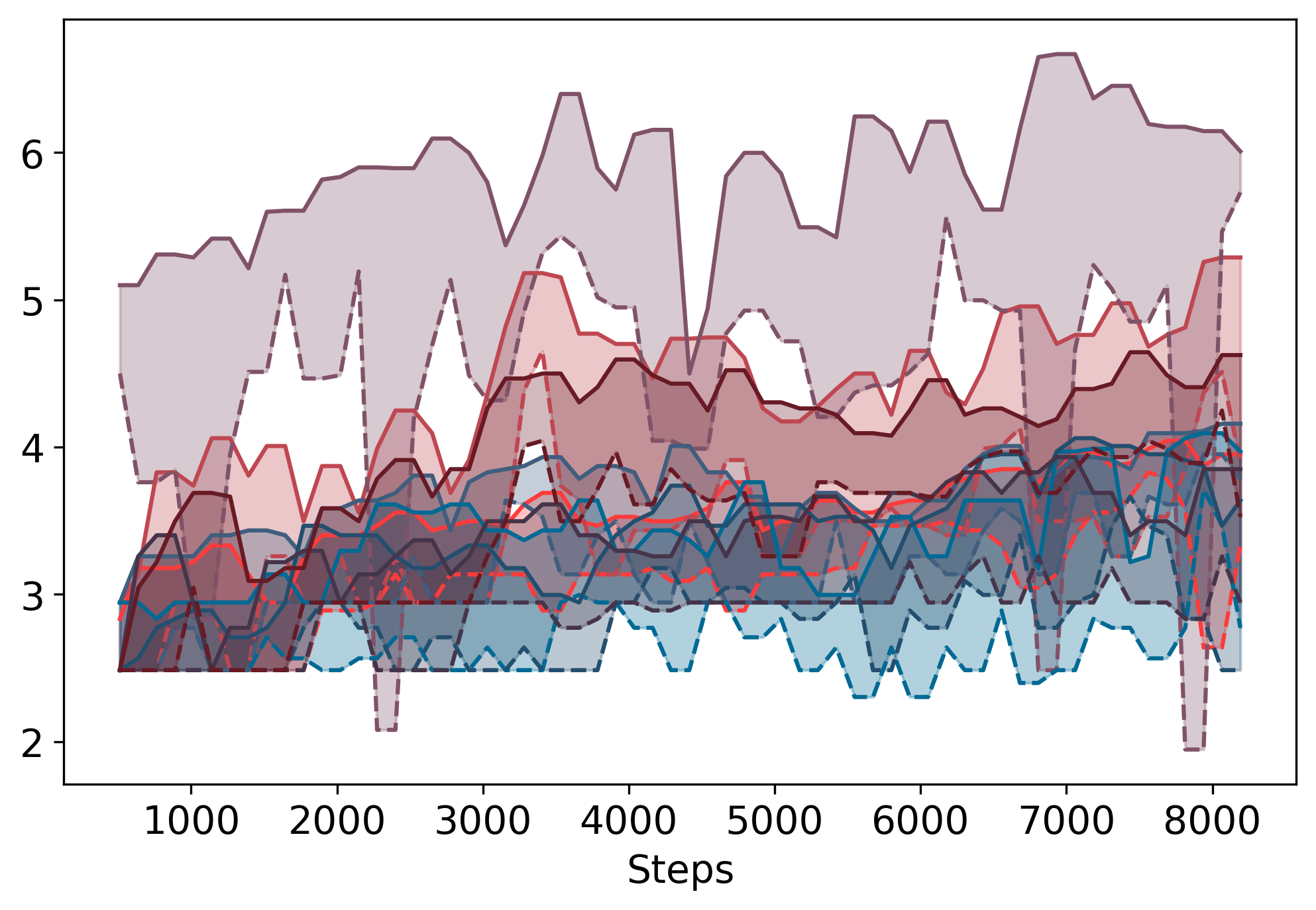}
       \caption{P60 log number of selected tokens (maxpool-ed).} 
    \end{subfigure}

    \caption{$\LSE$ and log number of selected tokens. \textbf{Observation 1:} (comparing (a) and (b)), a drop in log scale when calculating $\LSE$ for a given $p$ compared with full counterparts; \textbf{Observation 2:}(comparing (a) and (c)), an increasing trend for both $\LSE$ and log number of selections. \textbf{Observation 3:}(comparing (c) and (d)), a clear drop in log scale in selections by maxpool-ed logits.}
    \label{fig:all_trends}
\end{figure*}

\begin{wrapfigure}{r}{0.60\textwidth}
    \begin{subfigure}[t]{0.48\linewidth}
    \includegraphics[width=\linewidth]{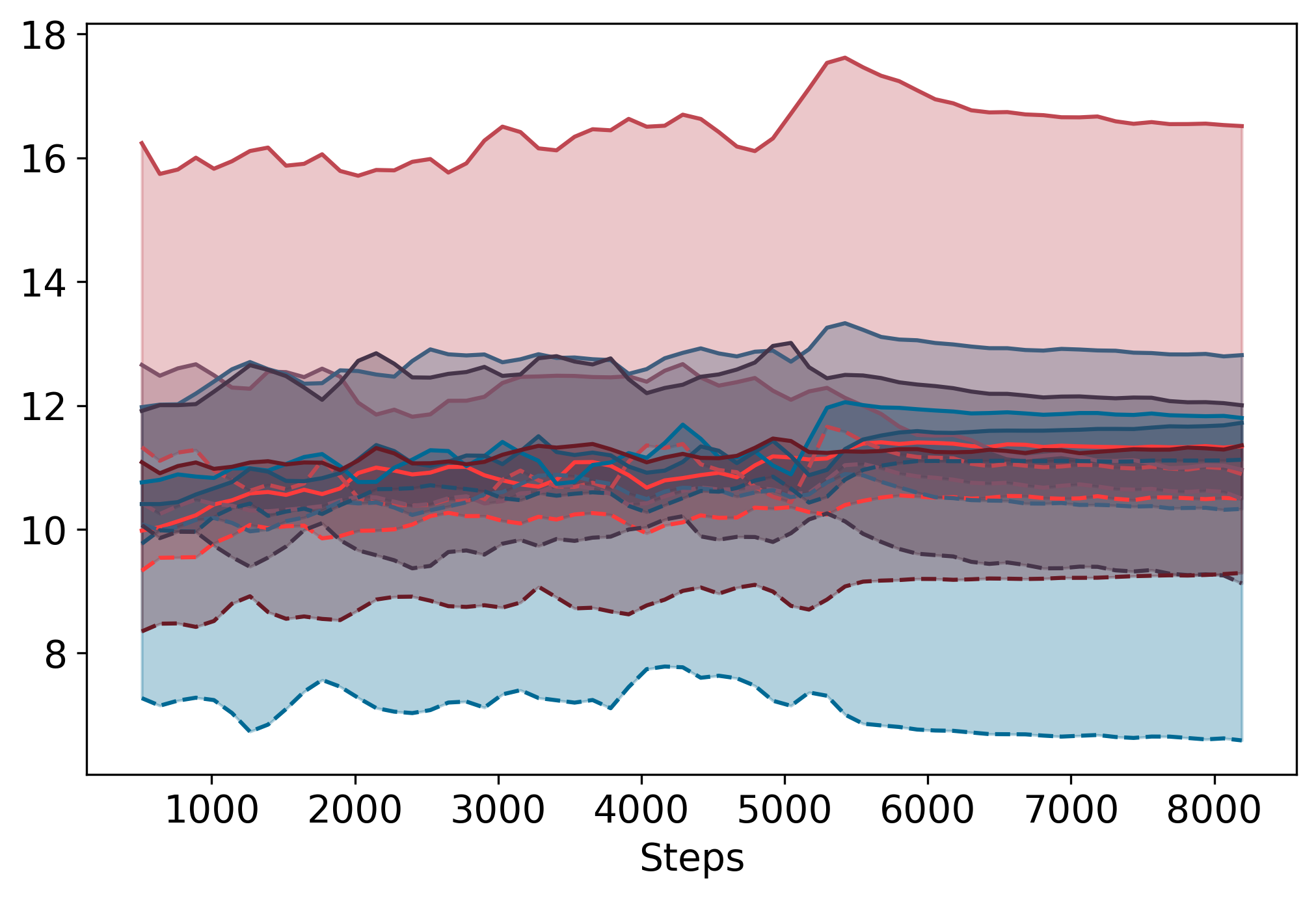}
      \caption{The $\operatorname{LSE}$ of the compressed KV under budget of 512 (SnapKV).}
      \label{fig:lse_diff_maxpool}
    \end{subfigure}
    \hfill
    \begin{subfigure}[t]{0.48\linewidth}
    \includegraphics[width=\linewidth]{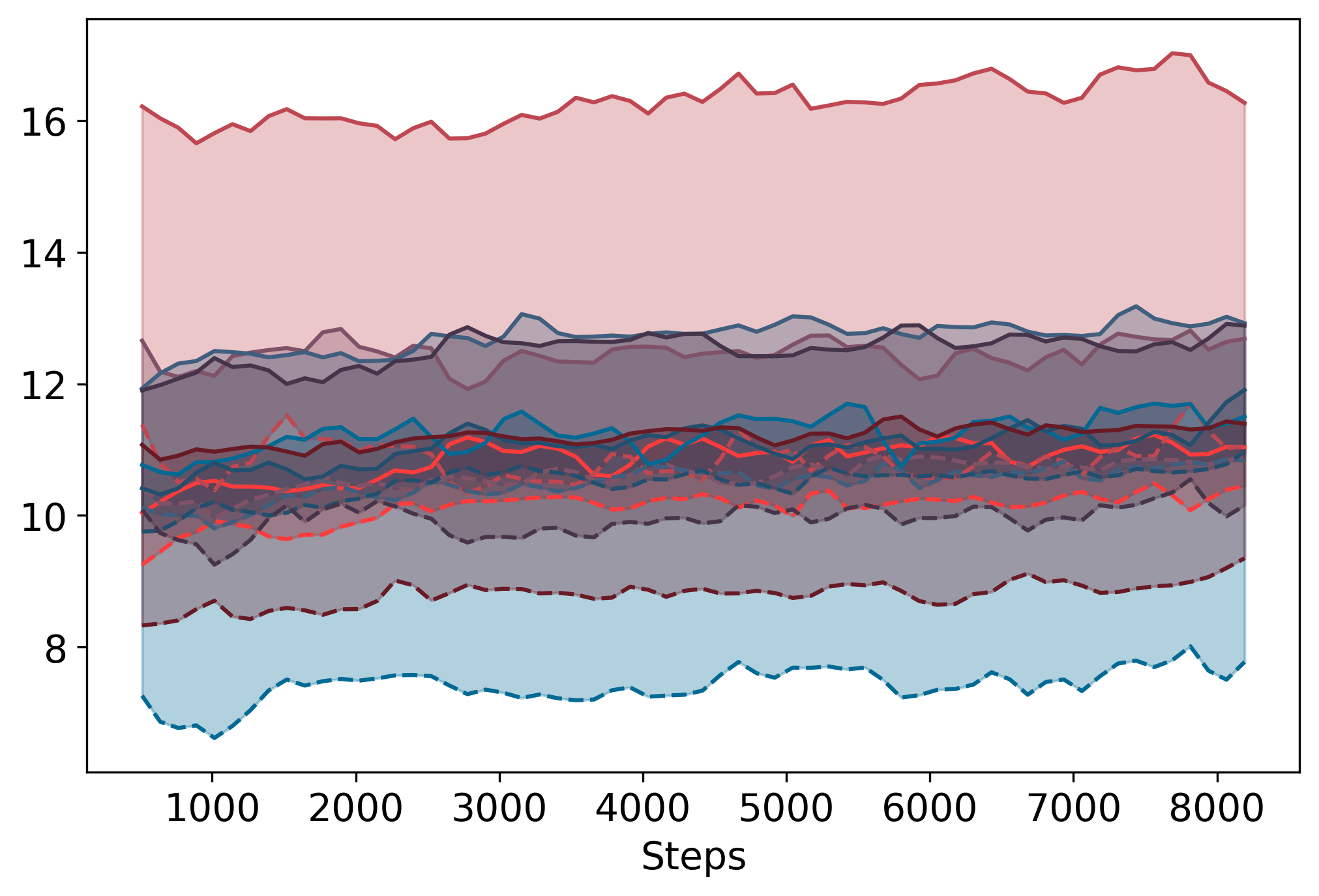}
      \caption{The $\operatorname{LSE}$ of the compressed KV under budget of 512 (R-KV).}
      \label{fig:lse_diff_sim}
    \end{subfigure}
    \centering
    \caption{The two figured above show the drop in $\LSE$ and a growing magnitude as the seuqence grows.}
    \label{fig:lse_diff} 
    \vspace{-0.8cm}
\end{wrapfigure}
Recall that $\LSE$ is defined as:

\[\LSE(q, K) = \log \left(\sum\limits_{j} e^{qK^{\top}_{(j)}}\right)\]

This metric is computed by reduction over the whole sequence and is yielded during the computation of Flash Attention. As shown in Figure~\ref{fig:all_trends}, $\LSE$ connects closely with the number of selected tokens (density) in log scale. 

This intrigues us to investigate if the LSE for the kept tokens can still keep the same trend, when executing strict Top-$k$ compression (eviction for both methods),  If the trend remains stable, this suggest the possibility of constant memory; otherwise the property of oracle attention is impaired, especially in the our specific scenario, where KV cache are periodically compressed to fixed budget during long-context decoding. 

As shown in Figure~\ref{fig:lse_diff}, both methods have encountered clear drop in LSE which accumulates continually as the sequence grows. Intuitively, LSE serves as a health indicator for attention computation (as it is soft argmax operator) and should reflect or assess the illness of a periodic compression methods. For both TopK selection, we have observed loss in LSE (in Figure~\ref{fig:lse_diff}) , which leads to discussion in the next section on how to preserve LSE when pruning KV Cache.

Denote $\SE(q, K) = \sum\limits_{j} e^{qK^{\top}_{(j)}}$, we have:
\begin{equation}
    \begin{aligned}
        O_{oracle} &= \sum\limits_{i}\frac{
            e^{qK^\top_{(i)}}V_{(i)}
        }
        {
            \SE\left(q, K\right)
        } \\ 
        O_{1} &= \left(
            O_{oracle} - 
            \frac{
                e^{
                    qK^\top_{(j)}
                }V_{(j)} + 
                e^{
                    qK^\top_{(k)}
                }V_{(k)}
            }
            {
                \SE\left({q, K}\right)
            }
        \right) \times 
        \frac{
            \SE\left(q, K\right)
        }
        {
            \SE\left(q, K\backslash \left\{K_{(j)}, K_{(k)}\right\} \cup\left\{f(K_{(j)}, K_{(k)})\right\}\right)
        } \\
        & + 
        \frac{
            e^{
                qf(K^\top_{(j)}, K^\top_{(k)})
            }g(V_{(j)}, V_{(k)})
        }
        {
            \SE\left(q, K\backslash \left\{K_{(j)}, K_{(k)}\right\} \cup\left\{f(K_{(j)}, K_{(k)})\right\}\right)
        }
    \end{aligned}
\end{equation}

$O_{oracle}$ is the precise output of the self-attention. For the initial compression, we denote the compressed output as $O_1$ (will be extended to $O_{i}$ in further compressions), the concept of the "first" compression is limited to the selection of two pairs of KV cache for demonstration. This single step of compression does not have to make a complete operation. It is only the \textbf{minimal unit} that we can analyze the intervention on the attention output for \textbf{selective compression} methods. The $K_{(j)}$ ($V_{(j)}$) represent the $j$-th key (value) vectors in the whole sequence (the index in sequence will be wrapped in bracelets by default). For each specific compression method, we formalize by pair-wise mapping, i.e. $f(K_{(j)}, K_{(k)}):=(K_{(j)}, K_{(k)}) \rightarrow \hat{K}_{(j)}$. For evictions, $f$ is formularized as $f(K_{anchor}, K_{evict}) = \mathbb{0} \cdot K_{anchor} + \mathbb{1} \cdot K_{evict}$; for merging methods, $f$ can be $f(K_{(j)}, K_{(k)}) = w_1\cdot K_{(j)} + w_2 \cdot K_{(k)}$ or in a more complex form. 

In the following deduction, we will simplify some representations for ease of notations. $K$ and $V$ will be the current key/value vectors in the $i$-th compression steps, and $\hat{K}$ and $\hat{V}$ will be in the $(i+1)$-th step, i.e. $\hat{K} = f(K_{(j)}, K_{(k)})$ ($\hat{V} = g(V_{(j)}, V_{(k)})$). $O_i$ and $\SE_i$ represent the $i$-th reduction result (without bracelets).

\begin{equation}
    \begin{aligned}
        O_i &= 
        \left( O_{i-1} - 
            \frac{
                e^{
                    qK^{\top}_{(j)}
                }V_{(j)} 
                + 
                e^{qK^\top_{(k)}
                }V_{(k)}
            }
            {
                \SE_{i-1}
            }
        \right) \\ 
        & \times
        \frac{\SE_{i-1}}{\SE_i}
        +
        \frac{e^{q\hat{K}^\top}\hat{V}}{\SE_i} \\ 
        \Delta O_i &=
        O_{i} - O_{i-1} \\
        &= 
        \underbrace{
            \left(
                \frac{e^{qK^\top_{un}}V_{un}}{\SE_i} - \frac{e^{qK^\top_{un}}V_{un}}{\SE_{i-1}}
            \right)
        }_{\text{Shift in Unchanged Tokens}}
        \\
        &+ 
        \underbrace{
            \left(
                \frac{e^{q\hat{K}^\top}\hat{V}}{\SE_i} - \frac{e^{qK^\top}V}{\SE_{i-1}}
            \right)
        }_{\text{Shift due to Compression}}
    \end{aligned}
\end{equation}

\begin{theorem}
{
    Denote a proper linear approximation (weighted attention score) of key cache merging as the following :
    \begin{equation}
        \hat{K}_{(new)} = 
        \frac{e^{qK^\top_{(j)}}}{e^{qK^\top_{(j)}} + e^{qK^\top_{(k)}}} K_{(j)} 
        + 
        \frac{e^{qK^\top_{(k)}}}{e^{qK^\top_{(j)}} + e^{qK^\top_{(k)}}} K_{(k)}
    \end{equation}
    The solution to minimize the following:
    \begin{equation}
        \Delta\SE_{i}=\left\vert e^{q\hat{K}^{\top}_{(new)}} - \left(e^{qK^\top_{(j)}} + e^{qK^\top_{(k)}}\right)\right\vert \ (\SE \  preserving)
    \end{equation}

    is to let 
    \begin{equation}
        qK_{(j)}^\top \gg qK_{(k)}^\top.
    \end{equation}
}
\end{theorem}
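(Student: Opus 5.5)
The plan is to reduce the problem to a one-variable function of the logit gap and show that this function is monotone in the gap. Write $a = qK_{(j)}^\top$ and $b = qK_{(k)}^\top$. Since $q\hat{K}_{(new)}^\top$ is linear in the key, it equals $w_j a + w_k b$, with
\[
w_j = \frac{e^{a}}{e^{a}+e^{b}}, \qquad w_k = \frac{e^{b}}{e^{a}+e^{b}}.
\]
The two indices play symmetric roles, so I label them so that $j$ carries the larger logit and set $d = a-b \ge 0$.

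The first step is to note that the quantity inside the absolute value has a fixed sign. Convexity of $\exp$ (Jensen) gives
\[
e^{w_j a + w_k b} \le w_j e^{a} + w_k e^{b} \le e^{a} + e^{b},
\]
so $\Delta\SE_i = e^{a}+e^{b} - e^{w_j a + w_k b} \ge 0$. This lets me drop the absolute value.

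The second step is to handle scale. $\Delta\SE_i$ is homogeneous in $e^{a}$: shifting $a$ and $b$ together multiplies it by a common factor. The meaningful statement is therefore about the \emph{relative} loss $\Delta\SE_i/(e^{a}+e^{b})$, or equivalently the loss at fixed total mass $e^{a}+e^{b}$. I will state explicitly that this is the sense in which ``minimize'' is read. With $s(d) = e^{-d}/(1+e^{-d}) = w_k$, the exponent becomes $a - d\,s(d)$, and the relative loss is
\[
h(d) = 1 - \exp\bigl(\varphi(d)\bigr), \qquad \varphi(d) = -d\,s(d) - \log\bigl(1+e^{-d}\bigr).
\]

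The key computation is the derivative of $\varphi$. Using $s'(d) = -s(1-s)$ and $\frac{d}{dd}\log(1+e^{-d}) = -s$, I get
\[
\varphi'(d) = -s - d\,s' + s = d\,s(d)\bigl(1-s(d)\bigr) > 0 \quad \text{for } d > 0.
\]
Hence $h$ is strictly decreasing on $(0,\infty)$. At $d=0$ it takes its worst value $h(0) = 1/2$. As $d \to \infty$, both $d\,e^{-d} \to 0$ and $\log(1+e^{-d}) \to 0$, so $h(d) \to 0$.

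The conclusion is that the SE-preserving error has no interior minimizer. It is driven to its infimum $0$ exactly as $d = qK_{(j)}^\top - qK_{(k)}^\top \to \infty$, which is the asserted condition $qK_{(j)}^\top \gg qK_{(k)}^\top$ (up to relabeling).

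The main obstacle is interpretive rather than computational. Literally minimizing the unnormalized $\Delta\SE_i$ is degenerate: it can be made small by sending both logits to $-\infty$. The argument therefore needs the normalization to fixed $e^{a}+e^{b}$, and the phrase ``the solution'' must be read as an infimum attained only in the limit. Once that is settled, the monotonicity computation above is short.
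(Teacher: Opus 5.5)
Your proposal is correct, and it reaches the conclusion by a somewhat different route from the paper's proof. The paper works in the log domain. It supposes $\Delta\SE_i = 0$ and writes $x_1, x_2$ for what are effectively the exponentiated logits $e^{qK_{(j)}^\top}, e^{qK_{(k)}^\top}$. Combining the linear-approximation equation with the SE-preserving equation gives $\frac{x_1\ln x_1 + x_2\ln x_2}{x_1+x_2} = \ln(x_1+x_2)$, which it rearranges into $x_1\ln(1+x_2/x_1) + x_2\ln(1+x_1/x_2) = 0$. The left side is the total mass times the binary entropy of the merge weights, so it is strictly positive and exact preservation is impossible. It then implicitly divides by $x_1$, sets $t = x_2/x_1$, and shows only that $h(t) = \ln(1+t) + t\ln(1+1/t) \to 0$ as $t \to 0$, i.e.\ $q(K_{(k)} - K_{(j)})^\top \to -\infty$.

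You instead stay with the exponential-domain quantity $\Delta\SE_i$ itself: you get its sign from Jensen, normalize by the total mass $e^{a}+e^{b}$, and prove strict monotonicity in the gap via $\varphi'(d) = d\,s(d)\bigl(1-s(d)\bigr)$. The two arguments are linked by $\varphi(d) = -H(w_j, w_k)$, with $H$ the binary entropy, so your relative loss is $1 - e^{-H}$. At heart both say the same thing: the SE-preserving error is governed by the entropy of the two-point softmax, which vanishes only as that distribution degenerates.

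Your route buys several things:
\begin{itemize}
\item It makes a statement about the actual quantity in the theorem rather than its log-domain surrogate.
\item It establishes monotonicity, so a larger gap is always better rather than merely in the limit, with worst case $1/2$ at equal logits.
\item It explicitly handles two degeneracies the paper leaves implicit. The paper silently normalizes by $x_1$ where you address scale invariance, and it fixes the $j \leftrightarrow k$ symmetry by choosing $t \to 0$ rather than $t \to \infty$.
\end{itemize}
The paper's route is shorter, and it exposes the entropy interpretation that motivates LSE-preserving merging immediately.
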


\begin{proof}
Suppose $\Delta \SE_i \equiv 0$, we have the following system of equations: 
\begin{empheq}[left=\empheqlbrace]{align}
    q\hat{K}^{\top} 
    &= 
    q\left(\frac{
        e^{qK^\top_{(j)}}K_{(j)} + e^{qK^\top_{(k)}}K_{(k)}
    }{
        e^{qK^\top_{(j)}} + e^{qK^\top_{(k)}}
    }
    \right)^\top \tag{9.a} \label{eq:linear_approx}\\
    e^{q\hat{K}^{\top}}
    &= 
    e^{qK_{(j)}^\top} + e^{qK_{(k)}^\top} \tag{9.b} \label{eq:lse_preserve}
\end{empheq}
\stepcounter{equation}

Let $qK_{(j)}^\top = x_1$ and $qK_{(k)}^\top = x_2$, combining Equation~\ref{eq:linear_approx}
and~\ref{eq:lse_preserve}, we get the following:
\begin{equation}
    \frac{x_1 \ln x_1 + x_2 \ln x_2}{x_1 + x_2} = \ln\left(x_1 + x_2\right)
\end{equation}

Rearranging this leads to the \emph{Entropy} of the attention distribution (which is always non-negative). 

\begin{equation}
    \underbrace{x_1 \ln\left(1+\frac{x_2}{x_1}\right) + x_2 \ln\left(1+\frac{x_1}{x_2}\right)}_{\text{Always } > 0} = 0.
\end{equation}

Stated algebraically, let $t = \frac{x_2}{x_1}$, we have $h(t)=\ln(1+t)+t\ln (1+\frac{1}{t}) > 0$ (Jensen Inequality). We would like to solve the condition for $t$ that leads to $h(t)\to 0$. We have:
\begin{equation}
    \begin{aligned}
        \lim_{t\to 0} h(t) &= 0 \\
        \text{with\ } t &= \frac{x_2}{x_1} = \frac{e^{qk_2^\top}}{e^{qk_1^\top}} = e^{q(k_2 - k_1)^\top} \\ 
        & \Rightarrow q(k_2 - k_1)^\top \to -\infty \\ 
        & \Longrightarrow qk_1^\top \gg qk_2^\top
    \end{aligned}
\end{equation}
\end{proof}

\begin{figure}[htbp]
    \begin{subfigure}[t]{0.24\linewidth}
    \includegraphics[width=\linewidth]{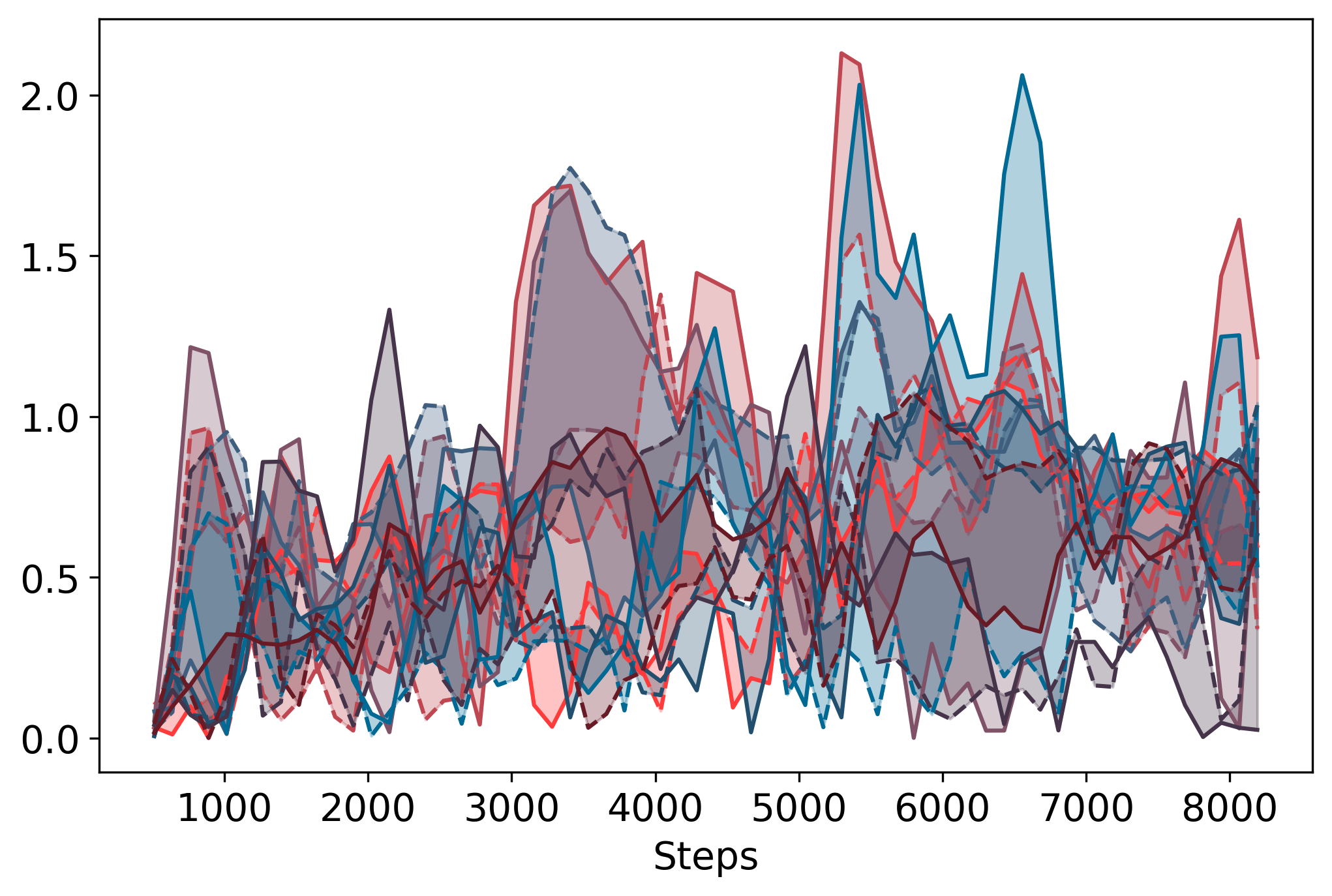}
      \caption{The $\operatorname{LSE}$ absolute error of \snapkv{} pruned KV cache (fixed 512 budget) v.s. full KV cache.}
      \label{fig:lse_absdiff_maxpool}
    \end{subfigure}
    \hfill
    \begin{subfigure}[t]{0.24\linewidth}
    \includegraphics[width=\linewidth]{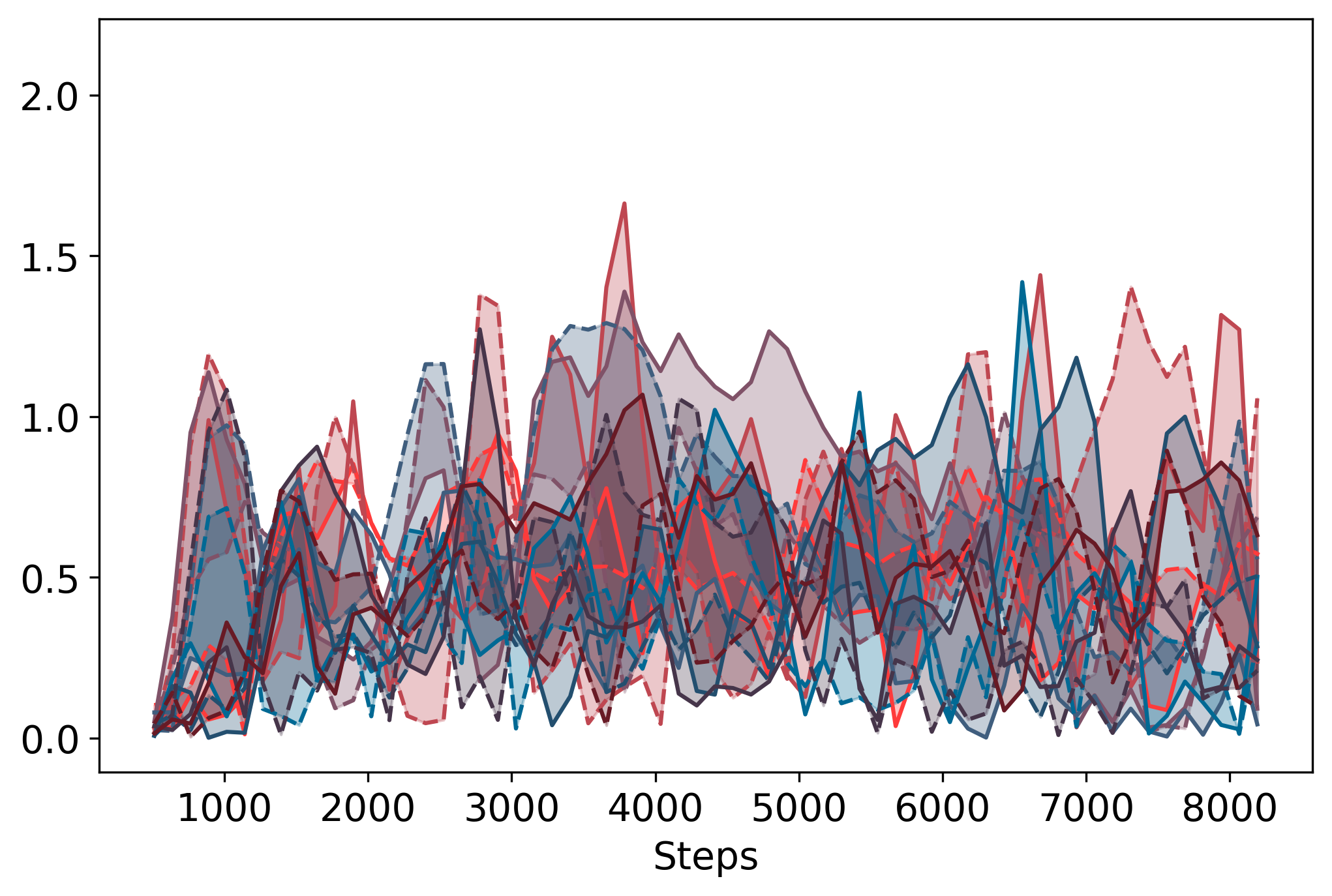}
      \caption{The $\operatorname{LSE}$ absolute error of \snapkv{} pruned-and-merged KV cache (fixed 512 budget) v.s. full KV cache.}
      \label{fig:lse_absdiff_maxpool_merge}
    \end{subfigure}
    \hfill
    \begin{subfigure}[t]{0.24\linewidth}
    \includegraphics[width=\linewidth]{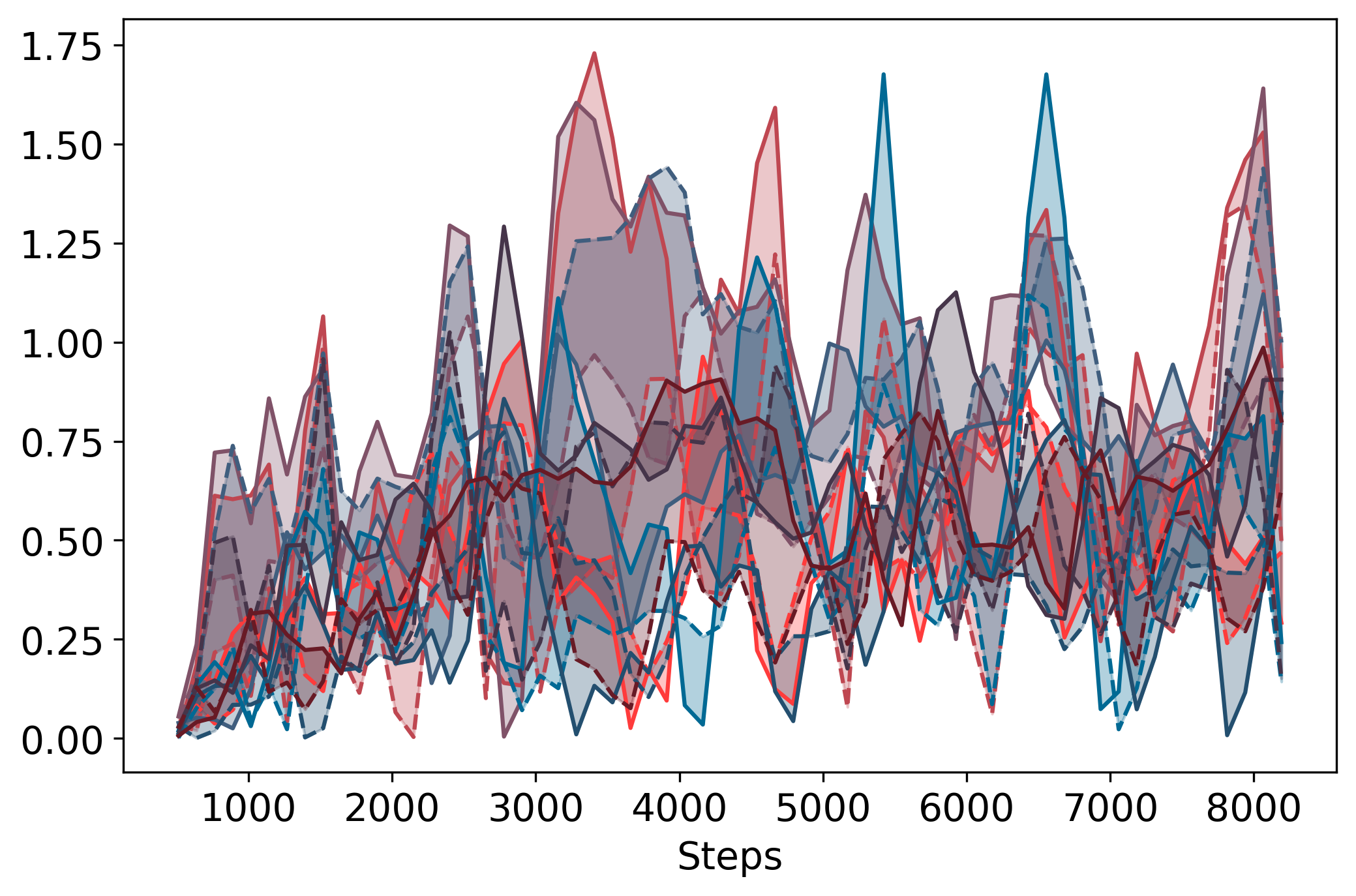}
      \caption{The $\operatorname{LSE}$ absolute error of \rkv{} pruned KV cache (fixed 512 budget) v.s. full KV cache.}
      \label{fig:lse_absdiff_sim}
    \end{subfigure}
    \hfill
    \begin{subfigure}[t]{0.24\linewidth}
    \includegraphics[width=\linewidth]{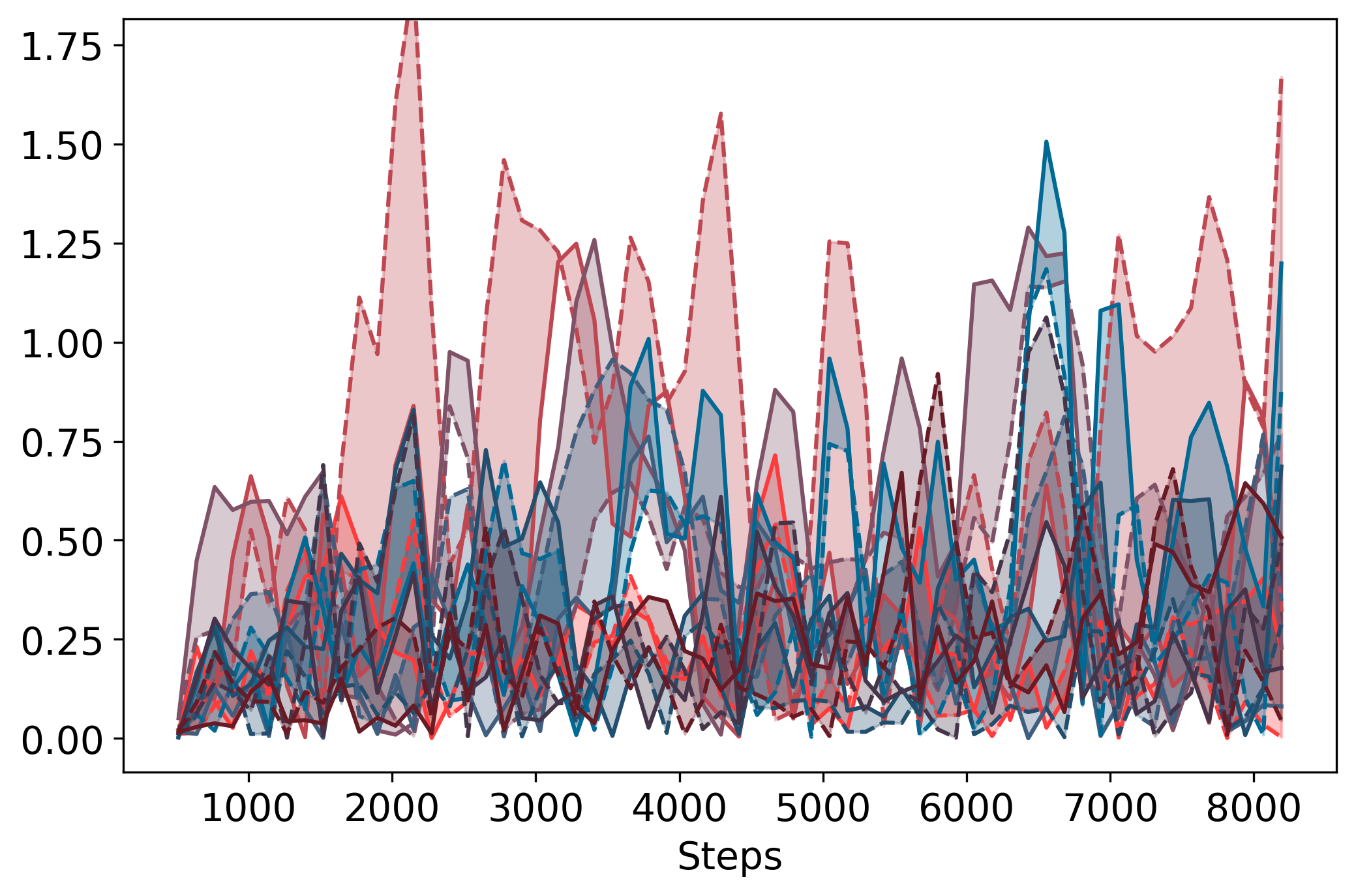}
      \caption{The $\operatorname{LSE}$ absolute error of \rkv{} pruned-and-merged KV cache (fixed 512 budget) v.s. full KV cache.}
      \label{fig:lse_absdiff_sim_merge}
    \end{subfigure}
    \caption{The absolute error comparison between w. and w.o. $\LSE$-preserved merging for \snapkv{} and \rkv{}}
\end{figure}


As shown in the figure~\ref{fig:lse_absdiff_maxpool} \ref{fig:lse_absdiff_maxpool_merge} ~\ref{fig:lse_absdiff_sim} and~\ref{fig:lse_absdiff_sim_merge}, with $\LSE$-preserved merging, the tendency of increasing absolute error is clearly suppressed. We will provide a more thorough analysis in the Appendix to explain why pruning long-tailed KV cache will result in a growing absolute error and why $\LSE$-preserve merging can suppress such tendency.

\end{document}

%% file: related.tex
\section{Related Work}
\label{sec:related}

\paragraph{Adaptive KV Cache Compression}
KV cache compression \citep{snapkv} optimizes LLM inference SLOs (e.g., latency, throughput) by reducing memory footprints during attention computation. Unlike Efficient Attention methods \citep{delta, gqa, swa} designed for full-stack industrial deployment, KV cache compression has largely remained a research testbed, lacking robust integration with modern inference engines.

Recent studies~\citep{headkv,adakv,rlkv,incontext} have pivoted from uniform compression to \emph{head-adaptive strategies}, driven by the observation of attention heterogeneity.
Methods like AdaKV~\citep{adakv} and HeadKV~\citep{headkv} implement head-wise allocation based on metrics like eviction loss and importance scores. 
Similarly, \citet{rlkv} use policy optimization to mix full and sparse attention, showing that heterogeneous configurations outperform uniform baselines.

Concurrently, approaches like Twilight \citep{twilight} and MagicPig \citep{magicpig} have shown that Top-$p$ (cumulative probability) selection yields higher fidelity than rigid Top-$k$ (fixed count) truncation. 
Despite the superiority of Top-$p$ selection in ideal scenarios, robustly integrating these dynamic strategies remains a challenge: existing methods often decouple sampling from memory management, while diverse importance metrics (e.g., L2 norm vs. attention scores) defy unified probabilistic budgeting.

\hardkv{} bridges this gap with a \emph{logits calibration} mechanism that normalizes diverse selection metrics into a unified probability space, enabling consistent Top-$p$ budgeting across heterogeneous heads while enforcing strict, system-compatible regularity on the resulting memory layout.

\paragraph{System-Algorithm Co-Design in LLM Inference}

State-of-the-art inference engines maximize throughput by enforcing rigid, predictable execution patterns. Innovations such as RadixAttention~\citep{sglang}, Tree-Attention~\citep{medusa}, and static load-balancing~\citep{dsv3} demonstrate the efficacy of co-designing algorithms with static system constraints.

Likewise, deploying head-adaptive KV compression introduces an algorithm-system conflict: the memory allocations \emph{vary} at every step, violating the static assumptions of engines built for regular allocation. Although head-wise sparse attention is computationally feasible via kernel support like FlashInfer~\citep{flashinfer}, we identify three systemic barriers that prevent theoretical compression rates from actual speedups:

\emph{1) CUDA Graphs Incompatibility:} CUDA Graphs rely on static tensor shapes and persistent pointers. The variable cache sizes produced by Top-$p$ selection force the system to either frequently recapture graphs or revert to eager execution, neutralizing latency gains.

\emph{2) PagedAttention Fragmentation:} Current paging mechanisms in real-time LLM serving~\cite{orca,pagedattention} assume uniform memory distribution. Head-adaptive eviction shatters this uniformity, causing severe fragmentation that breaks the block contiguity required for efficient virtual-to-physical mapping.

\emph{3) KV Selection Overhead:} Standard KV selection requires computing $\mathcal{O}(n^2)$ attention maps, which is incompatible with FlashAttention~\citep{fa2}. Such a cost introduces significant decoding overhead, stalling token generation.

Unlike prefill-phase compression~\citep{snapkv,nacl}, which occurs only once, our work targets periodic compression during the latency-critical \emph{decoding} phase. Proceeding to existing works~\citep{rocketkv,scope,fastkv}, we consider it crucial to maintain CUDA Graphs compatibility over tens of thousands of decoding steps in long-context generation. 
We address this by integrating head-adaptive compression into a custom engine built on \textsc{Nano-vLLM}~\citep{nanovllm}. 
While prior work, such as DiffKV~\citep{diffkv}, manages dynamic head-wise memory allocation via bidirectional page tables and GPU-side circular lists, we take a different approach. 
We focus on algorithm-system integration, proposing a novel hybrid of sparse and condensed formats to systematically tame head adaptivity.

%% file: methodology.tex
\section{The \hardkv{} Design}
\label{sec:methodology}

\subsection{The Framework Overview}
\label{ssec:framework_overview}

\begin{figure*}[t]
\includegraphics[width=\linewidth]{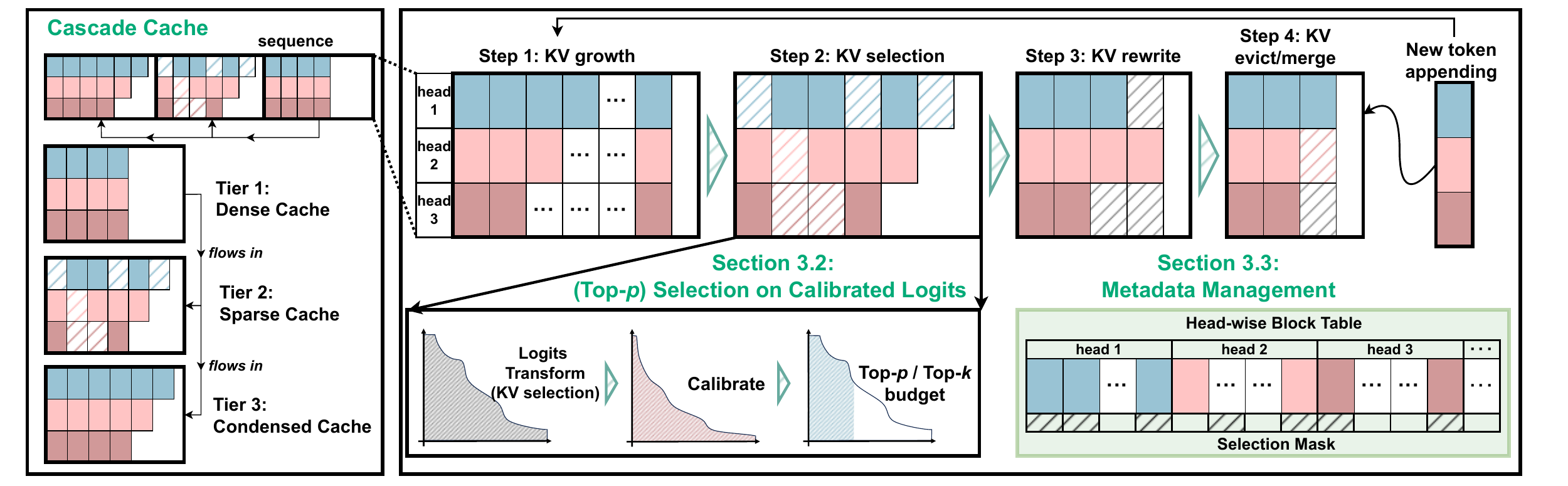}
  \caption{\textbf{Left}: The \emph{Cascade Cache} hierarchy manages the token lifecycle across three storage tiers: Dense (recent), Sparse (adaptive), and Condensed (archival). \textbf{Right}: The \hardkv{} execution pipeline. (Step 1) The KV cache grow in context; (Step 2) When compression is triggered, different selection methods will be calibrated towards real attention distribution and conduct dynamic Top-$p$ sampling; (Step 3) Upon the sparse mask generated in the previous step, the KV indices will be condensed by rewriting; (Step 4) The KV cache can be further reduces to satisfy hard constraints by eviction or merging.}
  \label{fig:framework_all}
\end{figure*}

\paragraph{The Cascade Cache Hierarchy}
Standard self-attention suffers from linear memory complexity $\mathcal{O}(n)$. While prior works have explored either \emph{eviction} (maintaining a fixed budget) or \emph{sparse retrieval} (dynamic loading), these approaches often treat KV memory as a monolithic structure. We propose \textbf{Cascade Cache}, a hierarchical architecture that unifies these paradigms by managing the KV lifecycle across three distinct tiers (Figure~\ref{fig:framework_all}, Left):
\begin{itemize}
\item \textbf{Tier 1: Dense Cache.} Buffers the most recent tokens in contiguous memory. This ensures full-fidelity attention for the local window, preserving ``attention sinks''~\citep{streamllm} and immediate syntactic dependencies.
\item \textbf{Tier 2: Sparse Cache.} As tokens age out of the dense window, they transition to the Sparse Cache. We apply \emph{Calibrated Top-$p$ Selection} (Section~\ref{ssec:calibration}) to retain only information-dense pairs, effectively implementing head-adaptive budgeting based on attention entropy.
\item \textbf{Tier 3: Condensed Cache.} To enforce hard memory limits, the oldest blocks are further compressed into constant-sized storage via eviction or weighted merging.
\end{itemize}

\paragraph{The Continuous Compression Pipeline}
Building upon the {Cascade Cache} architecture, \hardkv{} orchestrates the KV cache lifecycle via an asynchronous four-step pipeline (Figure~\ref{fig:framework_all}, Right).
We implement this within a custom \textsc{Nano-vLLM} engine~\citep{nanovllm} to ensure strict compatibility with \textit{continuous batching} and \textit{CUDA Graphs} (see details in Appendix~\ref{app:hardinfer}).
\emph{\textbf{Step 1}}: The decoding requests linearly append KV pairs to the Tier 1 Dense Cache, utilizing efficient append-only kernels.
\emph{\textbf{Step 2}}: When the dense window reaches a predefined threshold, the \emph{Logits Calibration} mechanism activates. It normalizes diverse importance metrics into a unified probability space, generating a head-adaptive sparse mask via Top-$p$ sampling (details in Section~\ref{ssec:calibration}).
\emph{\textbf{Step 3}}: To resolve the static-dynamic mismatch, the \emph{Index Regularization} engine rewrites the discontinuous selected tokens into a regularized, contiguous physical layout. 
%
\emph{\textbf{Step 4}}: Finally, to maximize memory utilization, the oldest blocks are compacted via eviction or merging before new dense blocks are allocated. 

We adapt the block table to the head-wise format with selection masks indicating effective cache. A metadata management system is coupled to support head-adaptive dynamic budget. This system is critical for enabling high-throughput CUDA Graphs execution and preventing memory fragmentation (details in Section~\ref{ssec:index_management}).

This pipeline can operate asynchronously; the Dense Cache handles immediate writes at the end of the sequence; meanwhile, compression (Steps 2-4) operates on established history, minimizing latency impact.

With the efficient Cascade Attention (see Appendix~\ref{app:cascade_attn}) kernel implementation provided by Flashinfer~\citep{flashinfer}, \hardkv{} can compute the exact full-sequence attention from the partial attention outputs of three tiers in the proposed Cascade Cache. 

\paragraph{Cautious Update Strategy}
A critical refinement in the Tier 1 $\rightarrow$ Tier 2 transition is the \emph{Cautious Update} strategy. Unlike methods that compress based on a single snapshot (e.g., the mean average attention in the last window), we treat the dense window traversal as a ``voting'' period. We maintain a cumulative selection mask from \emph{all} queries in the window and only compress a block after it fully exits the dense tier. For a strict constraint on the budget, we use ranking on the votes for selecting candidates to preserve. This prevents the premature eviction of tokens that are temporarily irrelevant to the current step but critical for the local context.

\subsection{Logits Calibration for Effective Top-$p$ Budgeting}
\label{ssec:calibration}

\begin{summarybox}
This section proposes a unified framework to normalize heterogeneous methods, facilitating fair evaluation and control via \emph{a single hyper-parameter $p$}.
\end{summarybox}

\paragraph{Unified Top-P Selection}

\begin{figure}[h]
    \centering
    \begin{subfigure}[h]{\linewidth}
        \centering
        \includegraphics[width=\linewidth]{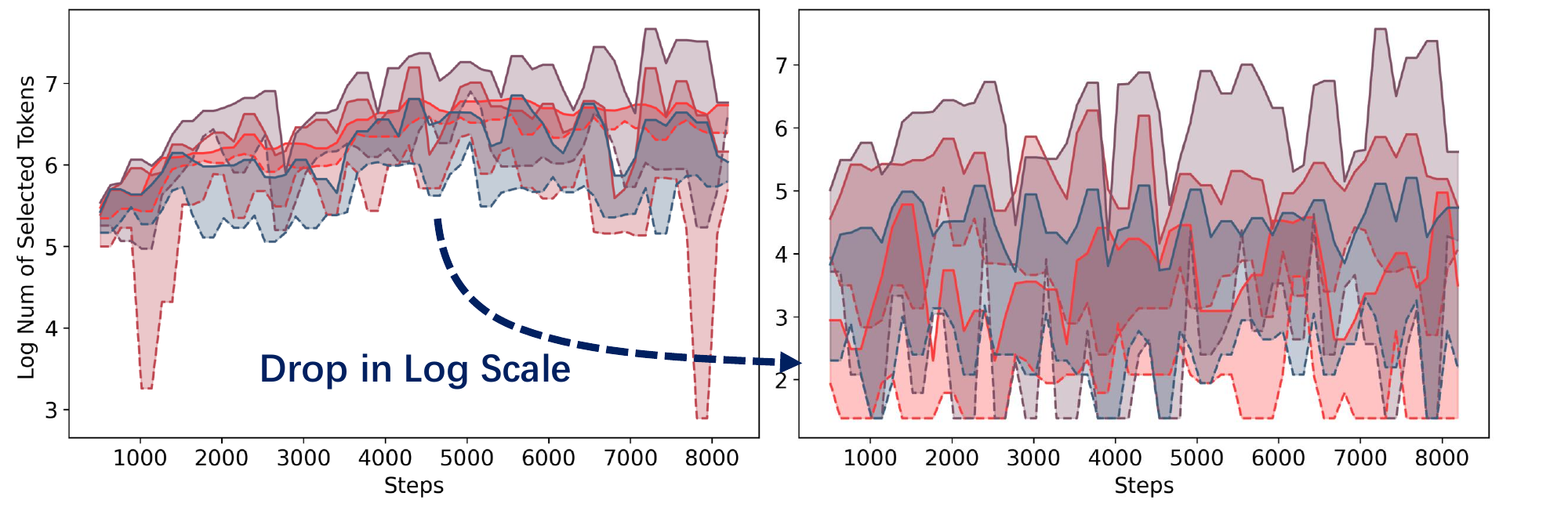}    
    \end{subfigure}
    \vfill
    \begin{subfigure}[h]{\linewidth}
        \centering
        \includegraphics[width=\linewidth]{figs/num_top_p/lse_legend.png}
    \end{subfigure}
    \caption{Comparison of the log number of selected tokens ($y$-axis) under P90. Solid lines (``---'') represent the upper bound, while dotted lines (``- - -'') represent the lower bound. \textbf{Left:} Selections based on raw logits. \textbf{Right:} Selections based on max-pooled logits (\snapkv{}).}
    \label{fig:log_number_tokens}
\end{figure}

Existing Key-Value (KV) selection methods typically encode inductive biases, such as attention locality and token redundancy, directly into attention scores. While such heuristics are effective for static Top-$k$ selection, they are often misaligned with the inherently context-dependent nature of Top-$p$ budgeting.

Ideally, a fixed probability threshold $p$ should produce a consistent retained-token budget across different selection methods. However, we observe that directly normalizing logits modified by different priors can cause their selection patterns to deviate substantially from those induced by the raw logits. We refer to this phenomenon as \emph{selection collapse} (Figure~\ref{fig:log_number_tokens}): a sharp reduction in the retained-token budget under the same probability threshold $p$. As shown in the figure, compared with the raw attention baseline, applying a selection method such as \snapkv{} reduces the retained budget from roughly $1000~(\approx e^{7})$ tokens to only $10~(\approx e^{3})$ tokens under the same $p$-density at P90.

To address this issue, we first unify diverse KV selection operations into a three-step formulation, which allows most existing selection methods to be adapted to a Top-$p$ budget; details are provided in Appendix~\ref{app:def_3_step}.


We address this by first unifying diverse operations into a three-step framework capable of adapting most selection methods to a Top-$p$ budget (refer to Appendix~\ref{app:def_3_step}). 

Furthermore, we propose an \emph{order-preserving} calibration method to facilitate Top-$p$ sampling without compromising the underlying selection strategies. This approach maps distributions from methods like \snapkv{} and \rkv{} to a normalized softmax distribution. Crucially, it preserves the $(k, p)$ mapping (i.e., linking Top-$k$ sets to cumulative probabilities $p$) by adjusting the temperature of the attention scores.

\begin{problem}
\label{p:1}
Given a threshold $p$, let Top-$p$ sampling over $\{S_i\}$ yield $k$ tokens for a specific head. The probability mass for these $k$ attention scores is denoted by $M_k(S) = \sum_{i\in \text{Top-}k}S_i$. We aim to minimize the error between the probability mass of the raw attention scores and the shifted scores by adjusting the temperature $T$: 
\begin{equation*}
\min_{T}\left\Vert M_k\left(S\left({\frac{1}{T}}\right)\right) - M_k\left(\hat{S}\left({\frac{1}{T}}\right)\right)\right\Vert.
\end{equation*}
\end{problem}

In the standard setup, temperature scaling is defined as $S(\frac{1}{T}) = S^{\frac{1}{T}}$. However, for methods where $\hat{S}$ is not guaranteed to be a valid probability distribution, directly computing $\hat{S}^{\frac{1}{T}}$ renders the problem intractable. Therefore, we extend the definition of temperature control to operate on the altered scores (i.e., $\hat{S}$) rather than the raw logits. 

Under the novel setup, we still expect to guarantee the invariance of the sampling results for the original methods.
\begin{constraint}
    \label{st:order-invariance}
    \textbf{Order-invariance}: Let the distribution alteration on the raw logits $z$ be formalized by a function $g(z)$. We construct $\hat{S}(g(z), T)$ such that $\hat{S}\vert_k \equiv \hat{S}(g(z), T)\vert_k$, where $S\vert_k$ denotes the Top-$k$ subset of $S$.
\end{constraint}
Under this definition, $\hat{S}(\cdot)$ can be any function that simplifies the solution to Problem~\ref{p:1} while retaining the intuitive properties of ``temperature''. We investigate the following two scenarios:

\begin{Solution}
\label{solu:1}
    If $g(z)$ preserves the order of $\hat{S}$, we may optionally define $\hat{S} = S(g(z/T))$. 
\end{Solution}
Examples of order-preserving (See Condition~\ref{cond:order-preserving} in Appendix~\ref{app:proof_order_preserving}) of $g(z)$ include max-pooling in \snapkv{} and RocketKV, key quantization in Twilight, and vanilla attention averaging in H2O.

\begin{Solution}
\label{solu:2}
    If $g(z)$ is arbitrary, we define $\hat{S}=S((g(z)/T)$.
\end{Solution}  
Solution~\ref{solu:2} applies to methods such as the pairwise scoring between keys in \rkv{} and KeyDiff~\citep{keydiff}.

In both scenarios, we identify the optimal temperature $T$ for each KV head using either binary search (zero-order optimization) or gradient descent (first-order optimization). These approaches yield a tempered attention distribution that satisfies Constraint~\ref{st:order-invariance} while achieving near-optimal Top-$k$ mass preservation error, denoted by $\Vert M_k(S) - M_k(\hat{S})\Vert$. The complete algorithm and associated proofs are detailed in Appendix~\ref{app:formal_logis_cali}.


This \emph{Calibration} enables the fair evaluation on Top-$p$ patterns for methods originally designed exclusively for Top-$k$ selection.

\begin{figure}[!h]
    \begin{subfigure}[h]{\linewidth}
    \label{fig:select_raw}
    \includegraphics[width=\linewidth]{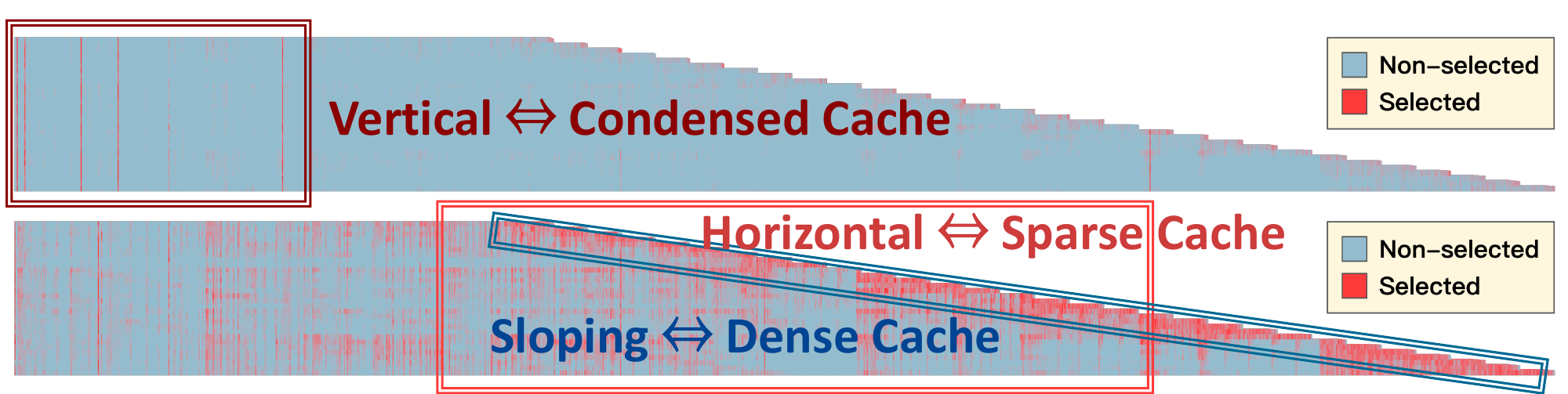}
      \caption{The P60 and P90-P60 token selection heatmap from raw attention scores in layer 6 averaged on heads.}
      \label{fig:pattn_distribution_raw}
    \end{subfigure}
    
    \begin{subfigure}[h]{\linewidth}
    \label{fig:select_maxpool}
    \includegraphics[width=\linewidth]{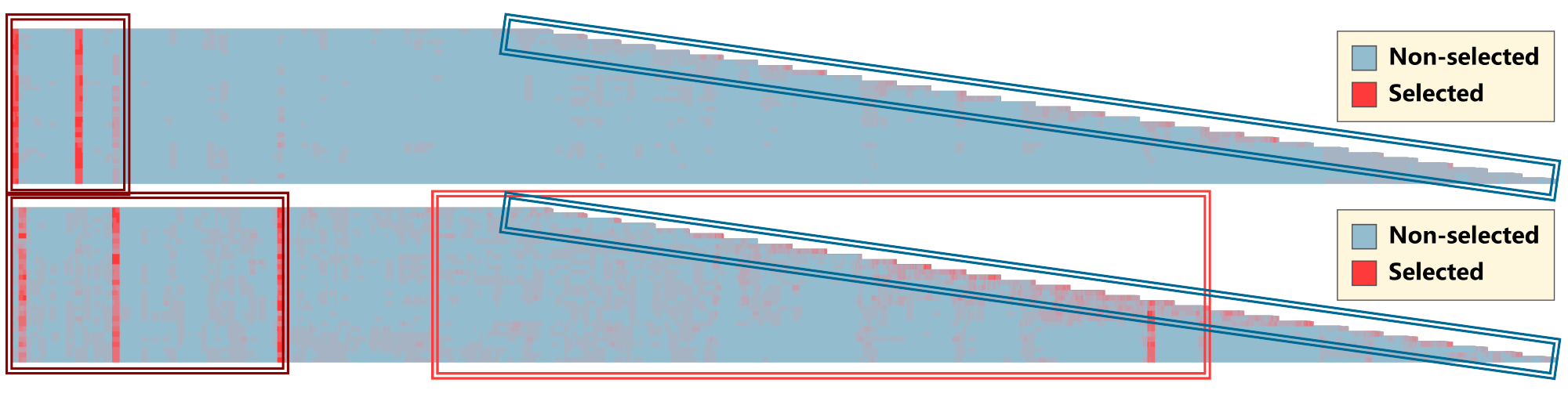}
      \caption{The P60 and P90-P60 token selection heatmap from maxpool-ed scores (\snapkv{}) in layer 6 averaged on heads.}
      \label{fig:pattn_distribution_maxpool}
    \end{subfigure}
    
    \begin{subfigure}[h]{\linewidth}
    \label{fig:select_sim}
    \includegraphics[width=\linewidth]{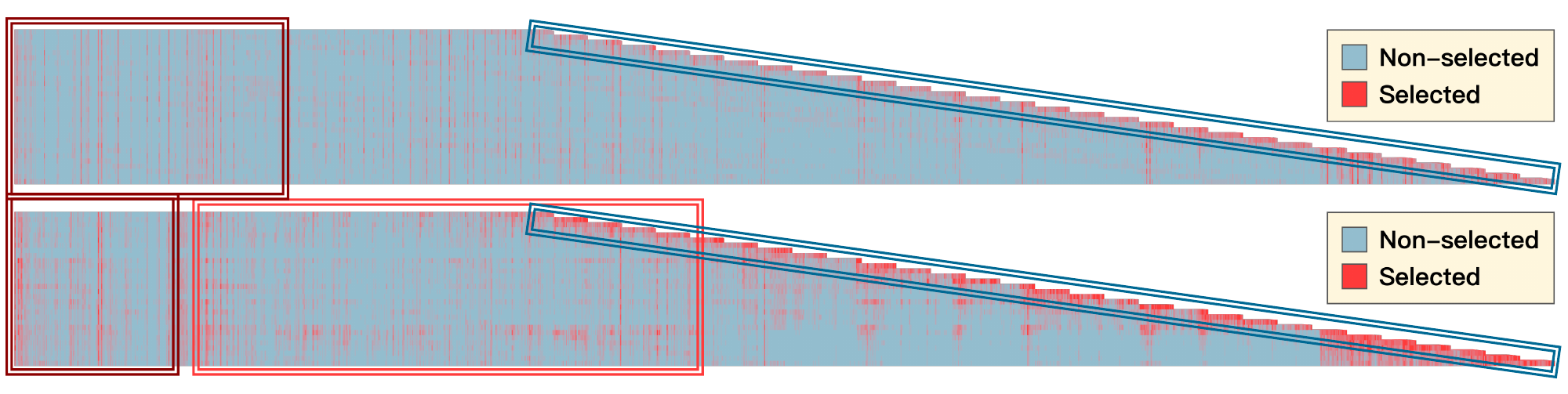}
      \caption{The P60 and P90-P60 token selection heatmap from similarity enhanced scores (\rkv{}) in layer 6 averaged on heads.}
      \label{fig:pattn_distribution_sim}
    \end{subfigure}
    \caption{Three different patterns observed in different resolutions of the selection heatmaps, corresponding to three tiers in Cascade Cache. The figure is generated by stacking binary attention selection maps of all KV heads, where \textcolor[HTML]{ff3b3b}{RED} represents for selected and \textcolor[HTML]{94becf}{TEAL} represents for non-selected.}
    \label{fig:selection_heatmap}
    \vspace{-0.3cm}
\end{figure}

\paragraph{Unified Patterns in Top-$p$ Sampling}

Following calibration, we can precisely evaluate the effectiveness of Top-$p$ sampling on two representative methods originally designed for alternative selection criteria:

\begin{itemize}
    \item \snapkv{}: Applies max-pooling over the sequence and selects a Top-$k$ budget based on the pooled attention scores.
    \item \rkv{}: Computes the cosine similarity between keys and prunes based on a fixed similarity threshold. The Top-$k$ selection is derived from a weighted sum of attention and similarity scores.
\end{itemize}

We analyze the effectiveness of Top-$p$ sampling by partitioning the selection results into two distinct categories: \textbf{low-resolution} and \textbf{high-resolution}.

\textit{Low-resolution} refers to the subset of tokens selected under a lower probability threshold $p$ (e.g., P60). Selection results in this regime typically concentrate on time-invariant tokens, exhibiting vertical patterns.

\textit{High-resolution} is defined as the incremental subset of tokens required to reach a higher threshold $p$. Unlike the low-resolution set, high-resolution selection results are often dispersed and exhibit turbulent distribution as the sequence length increases, forming horizontal patterns. These patterns frequently demonstrate periodic or unpredictable retention of tokens that were not selected in recent decoding steps.

In both regimes, we observe an accumulation of attention scores within the sliding window, manifesting as sloping patterns. This observation reinforces the common practice of maintaining a local window for the most recent tokens.

With calibration, different methods can yield similar densities in the heatmaps in Figure~\ref{fig:selection_heatmap} under a unified $p$ metric. By aligning the Top-$p$ budget with constraints on probability mass, we observe comparable patterns across methods, albeit with varying emphases. This alignment framework further supports the development of compression methods that preserve the fundamental properties of attention computation (Appendix~\ref{app:lse_preserving}).

\subsection{A System View of KV Index Management}
\label{ssec:index_management}

\begin{summarybox}
This section introduces how to unify the layer-wise varying KV indices into a \emph{fixed buffer} storing indices for different heads in support for CUDA Graphs.
\end{summarybox}

\subsubsection{Regularization of Head-Adaptive KV Indices}
\label{sssec:regularizations}
\begin{figure*}[t]
\includegraphics[width=\linewidth]{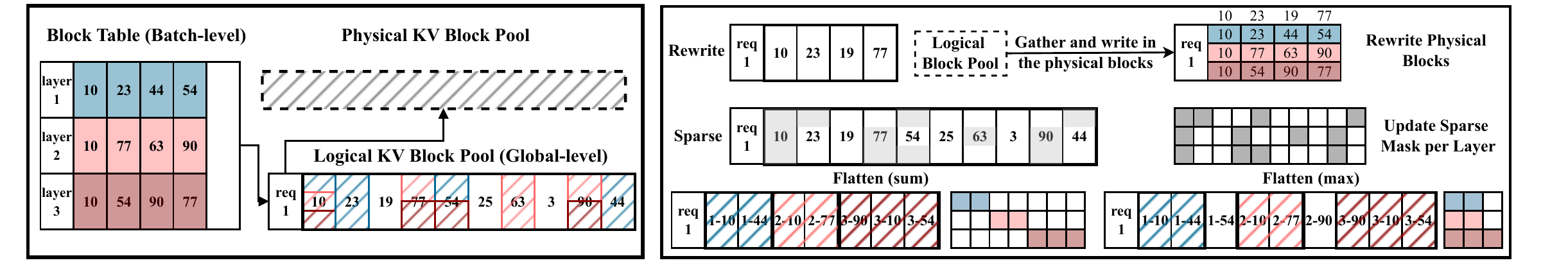}
  \caption{\textbf{Left:} The process of KV cache indexing in PagedAttention. The inference engine uses batch-level block table to collect indices in global-level block pool that point to physical KV blocks. Please refer to Appendix~\ref{app:metadata} for further details. \textbf{Right:} Three techniques to regularize head-adaptive KV blocks.}
  \label{fig:regularizations}
\end{figure*}

The above head-adaptive Top-$p$ selection for KV Cache will incur challenges for inference systems. To ensure compatibility with CUDA Graphs, the indices and offsets for KV cache pages (or blocks) must be fixed across all layers. 

To this end, we explore combinations of the following three index regularization operations:
\begin{enumerate}
\item \emph{Rewrite Cache.} The most direct approach is to read the selected blocks and write them into a regularized, contiguous range. 

\item \emph{Sparse Loading.} Hardware-efficient attention masks are applied during kernel execution for layer-wise dynamic loading. 
By enabling masking, indices can be regularized at the cost of redundant KV cache loading.

\item \emph{Flatten Index.} Indices are flattened into a 1-dimensional array. Using this technique, we can regularize per-layer block indices to either the maximum length (i.e., \textbf{Max}, yielding higher utilization of the loaded KV cache) or the sum over lengths (i.e., \textbf{Sum}, requiring fewer rewrite operations).
\end{enumerate}

Based on the constraints imposed on KV cache index layouts, we categorize the design space into four tasks. Detailed configurations and corresponding experiments are provided in Appendix~\ref{app:layouts}.

We focus here on the most flexible and systematically challenging setting, \textbf{Head-wise Allocation (HA)}, and empirically compare strategies constructed from the three index regularization operations above.

\begin{figure}[!h]
\includegraphics[width=\linewidth]{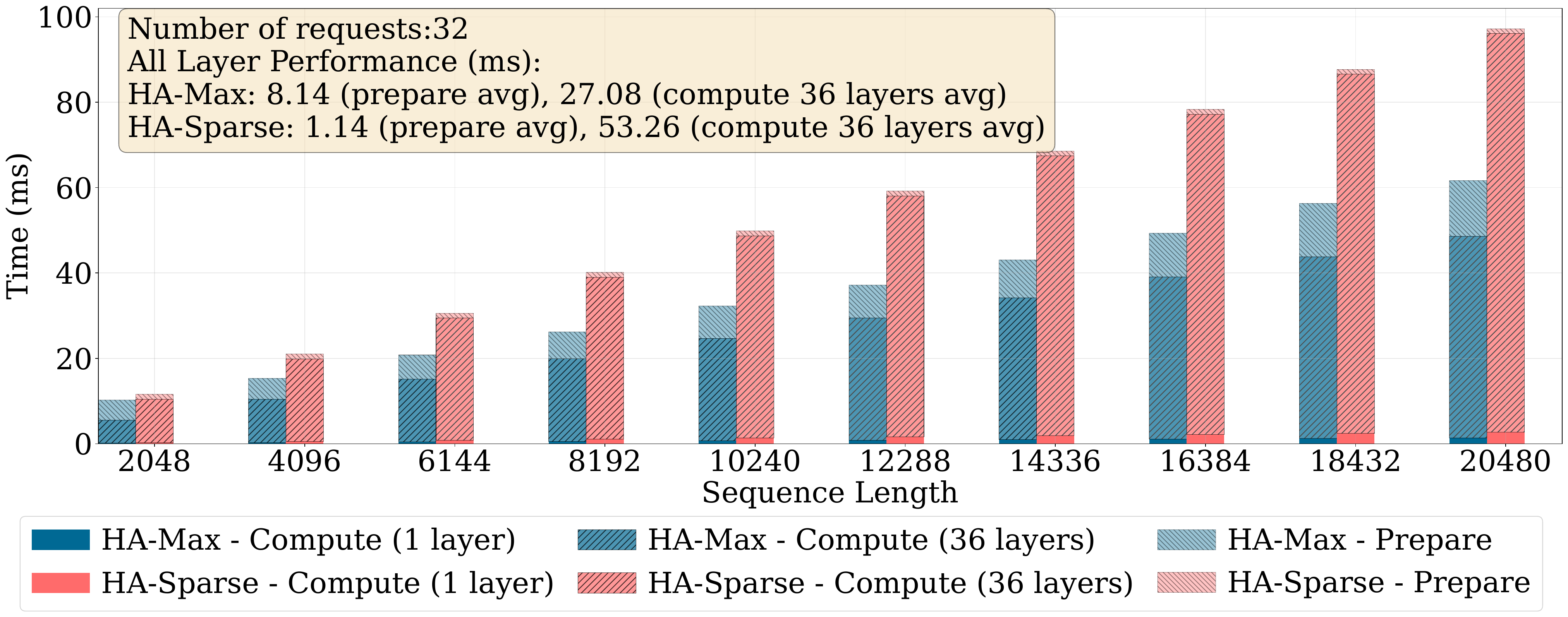}
  \caption{The latency evaluation for solutions in the Head-wise Allocation (HA) task. See Appendix~\ref{app:layouts} for ablations of other tasks.}
  \label{fig:HA_latency}
\end{figure}
\textbf{Task: Head-wise Allocation (HA).} Different heads have different budgets, and independently selected indices. 

\textbf{Solutions: } \textbf{HA-Sparse} - vanilla sparse loading for head-flattened indices;  - rewrite KV Cache to the maximum number of allocated blocks, with often higher preparation operation and higher computation efficiency in longer context. 

As shown in Figure~\ref{fig:HA_latency}, \textbf{HA-Sparse} always retains a lower cost at preparation stage and is comparably efficiency when the sequence length is limited; \textbf{HA-Max} often incurs higher preparation latency due to the rewrite operation. However, when the sequence extends to over 10 thousand, \textbf{HA-Max} can cut the computations latency in half. 
The above analysis demonstrates the necessity in rewriting KV cache into condensed blocks to maximize throughput, especially in long decoding. 

In our system design, we set two budget threshold of $L_{lower}$ and $L_{upper}$. When exceeding $L_{lower}$, the system will trigger selections for sparse loading; when exceeding $L_{upper}$, the system will rewrite and organize KV cache layouts in contiguous space. 

\subsubsection{Principles for Smart Index Management}
\label{sssec:principles}

We present three governing principles (\textsf{P1}--\textsf{P3}) for management and regularization of sparse KV Cache metadata.



\textbf{\textsf{P1}: Joint optimization of memory occupancy and latency.}

Cache rewriting typically incurs a higher preparation cost than sparse loading, due to additional bandwidth pressure and memory movement during the rewrite. However, this upfront cost can be amortized by the resulting reduction in per-step decoding latency, thereby improving compute-side efficiency. We characterize this trade-off using a \emph{Memory-Time Integral} metric, which jointly accounts for memory occupancy and execution time and thus provides a clearer view of system-level throughput. A detailed discussion is provided in Appendix~\ref{app:layouts}.

\textbf{\textsf{P2}: Head-wise allocation for block indices.} 

\begin{figure}[!h]
  \centering
    \includegraphics[width=\linewidth]{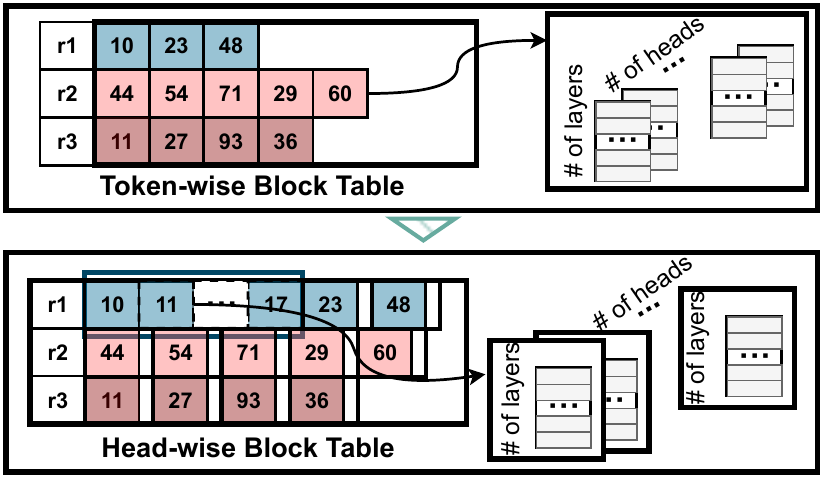}
    \caption{Flattened Head-wise block table.}
    \label{fig:flattned_block_table}
  \vspace{-0.2cm}
\end{figure}


A practical and efficient solution for fine-grained KV cache allocation (head-wise allocation) is to revise the global or batch-level block table metadata (See Appendix~\ref{app:layouts}). Unlike existing methods that employ index mapping across all layers and heads, our design utilizes head-independent indices for physical cache regularized for all layers. Crucially, this data structure supports CUDA Graphs, capturing kernel executions throughout the model's depth.

\textbf{\textsf{P3}: Sparsity-agnostic management.}

Two factors are critical in the latency breakdown for task \textbf{HA}: overall sparsity and maximum sparsity (previously set for 12.5\% and 50\%, respectively).
When holding overall sparsity constant, 
the maximum sparsity
\begin{wrapfigure}{r}{0.28\textwidth}  
    \vspace{-0.30cm}
    \centering
    \includegraphics[width=\linewidth]{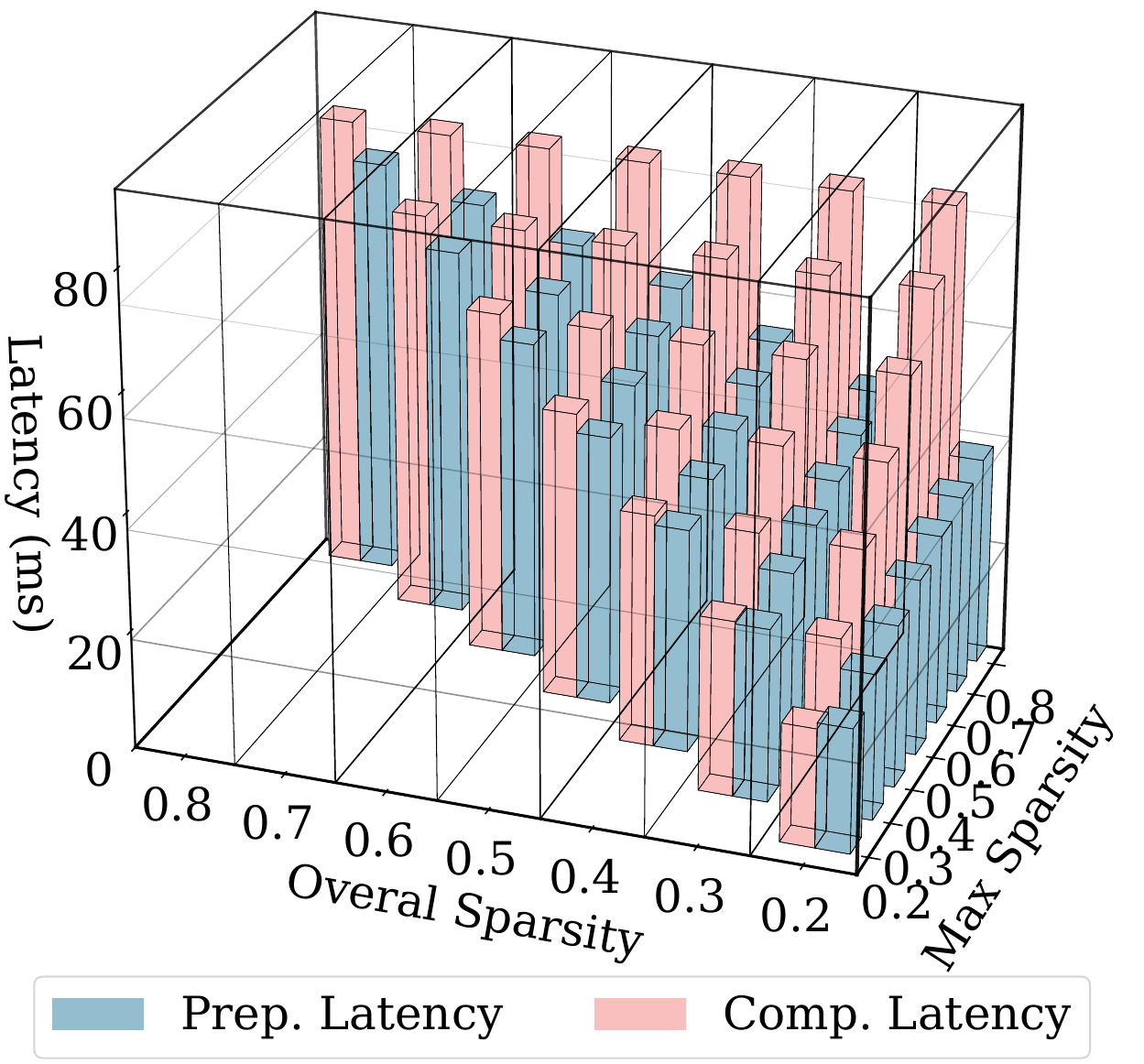}
    \caption{Latency breakdown by max sparsity per head and overall sparsity.}
    \label{fig:placeholder} 
    \vspace{-0.36cm}
\end{wrapfigure} 
   dictates the load budget required for the attention computation. Conversely, given a fixed maximum sparsity, the overall sparsity represents the total utilization within that budget. Our latency analysis indicates that maximum sparsity exerts a relatively smoother impact on overall latency compared to variations in overall sparsity.